%% file: arxiv.tex
\documentclass{article}

\PassOptionsToPackage{numbers,sort, compress}{natbib}

\usepackage{afterpage}
\usepackage[preprint]{neurips_2026}

\usepackage[utf8]{inputenc} 
\usepackage[T1]{fontenc}    
\usepackage{hyperref}       
\usepackage{url}            
\usepackage{booktabs}       
\usepackage{amsfonts}       
\usepackage{nicefrac}       
\usepackage{microtype}      
\usepackage{xcolor}         

\usepackage{pgfplots}
\usepackage{subcaption}
\usepackage{etoc}
\usepackage{titletoc}
\usepackage{pgfplots}
\usepackage{tikz}

\usepackage{placeins}

\usepackage[ruled,vlined,linesnumbered]{algorithm2e} 

\usepackage{multirow}
\usepackage{booktabs}
\usepackage{tabularx}
\usepackage{array}
\usepackage{longtable}
\usepackage{makecell}

\usepackage{wrapfig}

\newcolumntype{Y}{>{\raggedright\arraybackslash}X}

\newcolumntype{L}[1]{>{\raggedright\arraybackslash}p{#1}}

\input{figures/latex_plot_colors}
\input{figures/latex_plot_legends}

\usepackage{amsmath}
\usepackage{amssymb}
\usepackage{mathtools}
\usepackage{amsthm}
\usepackage{bbm}
\usepackage{bm}

\theoremstyle{plain}
\newtheorem{theorem}{Theorem}[section]
\newtheorem{proposition}[theorem]{Proposition}

\theoremstyle{definition}
\newtheorem{definition}[theorem]{Definition}

\newtheorem{remark}[theorem]{Remark}

\newcommand{\R}{\mathbb{R}}

\newcommand{\Z}{\mathcal{Z}}
\newcommand{\AS}{\mathcal{A}}
\newcommand{\SP}{\mathcal{S}}

\newcommand{\E}{\mathbbm{E}}

\newcommand{\V}{\operatorname{Var}}

\newcommand{\KL}{D_{\mathrm{KL}}}

\usepackage{cleveref} 
\crefname{definition}{definition}{definitions}
\Crefname{definition}{Definition}{Definitions}

\DeclareMathOperator*{\argmin}{arg\,min}

\makeatletter
\DeclareRobustCommand{\cev}[1]{%
  {\mathpalette\do@cev{#1}}%
}
\newcommand{\do@cev}[2]{%
  \vbox{\offinterlineskip
    \sbox\z@{$\m@th#1 x$}%
    \ialign{##\cr
      \hidewidth\reflectbox{$\m@th#1\vec{}\mkern4mu$}\hidewidth\cr
      \noalign{\kern-\ht\z@}
      $\m@th#1#2$\cr
    }%
  }%
}

\makeatother

\newcommand{\dd}{\mathrm{d}}

\title{Scalable Maximum Entropy Reinforcement Learning for Diffusion Policies via Adjoint Matching}

\author{%
  Serge Thilges\thanks{Correspondence to \url{serge.thilges@kit.edu}}\quad
  Onur Celik\quad
  Denis Blessing\quad
  Emiliyan Gospodinov\quad
  Gerhard Neumann
  \\[0.5em]
  Autonomous Learning Robots, Karlsruhe Institute of Technology
}

\begin{document}

\maketitle

\begin{abstract}
Diffusion policies have recently emerged as a powerful paradigm for representing complex action distributions in reinforcement learning (RL). However, their application to online RL remains limited by the challenge of scalable training in the absence of ground-truth data, where standard optimization techniques such as score matching are not directly applicable. In this work, we introduce a highly efficient algorithm for optimizing diffusion policies by leveraging recent advances in stochastic optimal control. Our approach is based on adjoint matching, which enables simulation-free training and circumvents the need for explicit likelihood estimation or costly backpropagation through the diffusion process. Furthermore, we propose several extensions that improve the robustness and stability of the method in practical settings. Empirical results demonstrate that our approach achieves competitive performance while significantly reducing computational overhead, making diffusion policies more viable for online RL scenarios.
\end{abstract}

\section{Introduction} 
Diffusion-based policy representations \cite{reuss2023goal, chi2025diffusion} have recently gained significant attention in reinforcement learning (RL) \cite{PsenkaEA024,celik2025dime, wang2021deep,ding2024diffusionbased,ma2025efficient,dong2025maximum} due to their ability to represent highly complex, multi-modal action distributions. Despite their expressive power, optimizing these models in online RL remains a non-trivial challenge. Online RL lacks a static dataset of ground-truth samples, which prevents the direct application of celebrated training objectives such as score matching \cite{song2020score} or bridge matching \cite{shi2023diffusion,liu2022let}.  Existing approaches for training diffusion policies generally fall into three categories. First, several methods leverage the reverse Kullback-Leibler (KL) divergence \cite{wang2024diffusion,celik2025dime,lv2026flacmaximumentropyrl}, which requires backpropagating through the entire diffusion process. While these approaches can be effective, they incur a massive memory overhead that prevents scaling to larger problems and requires full simulation of the diffusion process for every gradient step, leading to significant computational inefficiency. 
Second, importance-weighted matching approaches attempt to minimize the error between the predicted actions and the target actions by incorporating learning signals from the $Q$-function via importance sampling \cite{ding2024diffusionbased,ma2025efficient,chen2025onestepflowpolicymirror,dong2025maximum}.
However, it is well-established that importance sampling is notoriously difficult to scale to high-dimensional action spaces \cite{snyder2008obstacles}. Third, methods that utilize the "$Q$-score"\cite{PsenkaEA024}, that is, $\nabla_a Q(s,a)$, are \textit{i)} either centered around the overdamped Langevin equation \cite{ki2026directsoftpolicysamplinglangevin}:
\begin{equation}
\label{eq: overdamped lgv intro}
\dd a_\tau = \sigma^2(\tau)\nabla_a Q(a_\tau,s) \dd \tau + \sqrt{2}\sigma(\tau) \dd B_\tau.
\end{equation}
While the process in \eqref{eq: overdamped lgv intro} converges to the correct stationary distribution $\pi(a|s) \propto \exp Q(s,a)$ \textit{asymptotically} as $\tau \to \infty$, in practice, these processes must be truncated after a finite number of steps, thereby offering few theoretical guarantees in the finite-time regime or \textit{ii)} rely on heuristic techniques designed to sample from regions with high $Q$-values. However, these methods fundamentally lack guarantees of generating samples from the true target action distribution.

In this work, we propose a novel alternative that counteracts the limitations of existing methods. By framing diffusion policy optimization for maximum entropy RL (MaxEnt-RL) \cite{ziebart2008maximum,toussaint2009robot, haarnoja2017reinforcement, haarnoja2018soft} as a stochastic optimal control (SOC) problem, we adapt Adjoint Matching (AM) \cite{domingoenrich2025adjoint, havens2025adjoint}, a recently introduced framework for solving SOC problems in a highly scalable manner. AM utilizes a fixed-point loss whose unique fixed-point ensures that samples from the target density are generated in finite time. We call the resulting method \textit{Adjoint Matching Diffusion Policy} (\texttt{AMDP}).  \texttt{AMDP} offers three primary advantages: 
\begin{itemize}
    \item It utilizes a highly scalable regression objective involving the $Q$-score, similar to score-matching methods, but theoretically grounded for RL.
    \item It circumvents the need for costly backpropagation through the diffusion process, significantly reducing memory requirements.
    \item By adopting reciprocal adjoint matching, the model can be optimized in a simulation-free manner. This allows us to optimize the diffusion model using only the terminal actions, which can be conveniently stored in a replay buffer and reused for multiple gradient updates.  
\end{itemize}
Beyond the core algorithm, we introduce error function action squashing. This provides a numerically stable alternative to the commonly used $\tanh$ squashing and elegantly simplifies the \texttt{AMDP} training objective. Finally, we improve the empirical stability of \texttt{AMDP} by proposing a trust-region loss. This extension ensures stable policy updates with minimal computational overhead and without altering the fundamental theoretical guarantees of the adjoint matching framework. 
We show \texttt{AMDP}'s benefits on 63 highly parallelized environments ranging from MuJoco Playground \cite{mujoco_playground_2025}, Maniskill \cite{gu2023maniskill2}, and the HumanoidBench \cite{sferrazza2024humanoidbench} in the on-policy setting and demonstrate its flexibility by comparing its performance in the off-policy setting on 7 high-dimensional dog and humanoid environments on the DMC's environment suite \cite{tunyasuvunakool2020dm_control}. 
The comparisons indicate that \texttt{AMDP} matches or surpasses the strong baseline's performance while matching the training time of highly efficient Gaussian on-policy methods.

 \begin{figure}[t!]
            \centering 
            \includegraphics[width=\textwidth]{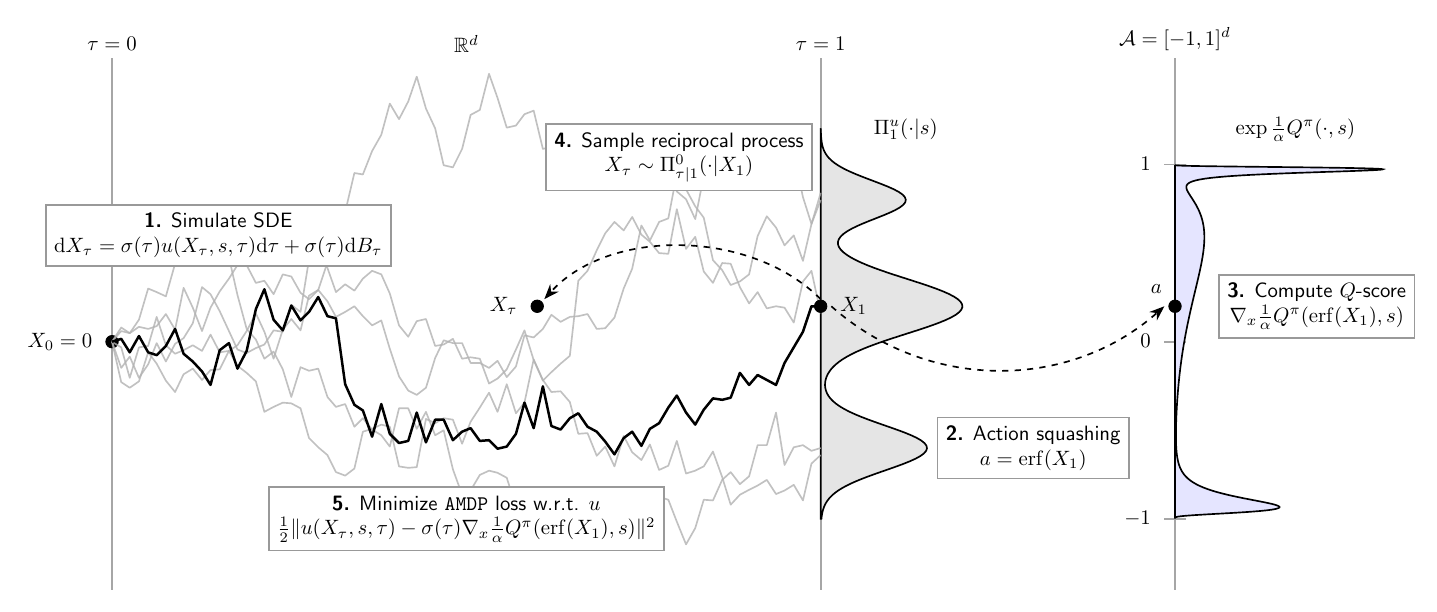}
        \caption[ ]
        {\textbf{Overview of the Adjoint Matching Diffusion Policy (\texttt{AMDP}) training pipeline.} The process begins by \textbf{(1.)} simulating the forward stochastic differential equation (SDE) from a deterministic initial state $X_0 = 0$ to obtain a terminal sample $X_1 \sim \Pi^u_1(\cdot|s) \in \mathbb{R}^d$ for a given state $s$. To ensure valid control signals, \textbf{(2.)} the terminal sample is mapped to the bounded action space $\mathcal{A} = [-1,1]^d$ using an error function squashing. \textbf{(3.)} The target learning signal is derived by computing the $Q$-score, $\nabla_{x}\tfrac{1}{\alpha}Q^{\pi}$, based on the squashed action. For efficient, simulation-free training, \textbf{(4.)} intermediate states $X_\tau$ are sampled directly from the terminal state $X_1$ using the conditional reciprocal process $\Pi^0_{\tau|1}$. Finally, \textbf{(5.)} the policy's vector field $u$ is optimized by minimizing the \texttt{AMDP} regression loss at these intermediate states, entirely avoiding backpropagation through the diffusion process.}
        \label{fig:amdp illustration}
\end{figure}

\section{Preliminaries}
\label{sec: preliminaries}

\paragraph{Maximum entropy reinforcement learning.}
\label{sec: max ent rl}

We consider a Markov Decision Process (MDP) defined by the tuple $\mathcal{M} = (\mathcal{S}, \mathcal{A}, p, r, \gamma)$, with continuous state space $\mathcal{S}$ and action space $\mathcal{A}$. The transition dynamics are $p(s' \mid s, a)$, the reward function is $r(s, a)$, and $\gamma \in [0, 1)$ is the discount factor. The goal is to learn a policy $\pi(a \mid s)$ that maximizes the expected return.
In continuous control, to prevent premature convergence and encourage exploration, the objective is often augmented with an entropy term (Maximum Entropy RL) \cite{haarnoja2017reinforcement}:
\begin{align}
J_{\mathrm{MaxEnt}}(\pi) = \mathbb{E}_{\pi}\!\! \left[ \sum_{t=0}^\infty \!\!\gamma^t \!\left( r(s_t, a_t) \!+\! \alpha \mathcal{H}(\pi(\cdot \mid s_t)) \right) \right],
\label{eq:rl_objective}
\end{align}
where $\mathcal{H}(\pi) = -\mathbb{E}_{a \sim \pi}[\log \pi(a \mid s)]$ is the differential entropy. This objective encourages the agent to maximize rewards while maintaining a certain degree of randomness in its actions, controlled by the temperature parameter $\alpha > 0$. 
For a given policy $\pi$, we define the corresponding soft state-action value function (or soft $Q$-function) as the expected return starting from state $s$ and action $a$, following policy $\pi$ thereafter:
\begin{equation}
    Q^\pi(s, a) = r(s, a) + \mathbb{E}_{\pi, p} \left[ \sum_{t=1}^\infty \gamma^t (r(s_t, a_t) + \alpha \mathcal{H}(\pi(\cdot | s_t))) \right].
\end{equation}
The optimal policy $\pi^*$ maximizes $J_{\mathrm{MaxEnt}}(\pi)$. It is widely known that for a fixed $Q$-function, the optimal policy takes the form of an energy-based distribution:
\begin{equation}
\label{eq: energy policy}
    \pi^*(a|s) = \frac{\exp(Q^{\pi}(s,a)/\alpha)}{\Z(s)}, \quad \Z(s) = \int_{\AS} \exp(Q^{\pi}(s,a)/\alpha) \dd a.
\end{equation}
In this framework, policy improvement can be viewed as an approximate inference problem. This is commonly tackled using a variational approach by minimizing the reverse Kullback-Leibler (KL) divergence between the parametric policy $\pi$ and the optimal policy $\pi^*$:
\begin{equation}
\label{eq: marginal kl}
    \mathcal{L}(\pi) = \KL\left(\pi(a|s) \mid \pi^*(a|s)\right) = \mathbb{E}_{a \sim \pi} \left[ \log \pi(a|s) - Q^\pi(s, a)/\alpha \right] + \log \Z(s).
\end{equation}
In practice, one optimizes over a family of policy classes. Traditional methods typically employ Gaussian distributions due to their tractable likelihood and ease of reparameterization. However, Gaussian policies are often insufficient for modeling complex, multi-modal action distributions. Diffusion models offer a promising alternative by representing the policy through a dynamical transport process, enabling the representation of highly complex and non-Gaussian action distributions.

\paragraph{Diffusion policies and path space measures.}
Unlike explicit policies (e.g., Gaussians) that directly output action samples, diffusion policies define the distribution $\pi(a \mid s)$ implicitly through a transport process. Let $\tau \in [0, 1]$ denote the continuous generation time. The action generation is modeled as the solution to a state-conditioned stochastic differential equation (SDE) $(X_{\tau})_{\tau\in[0,1]}$ \cite{oksendal2003stochastic} on $\R^d$ with
\begin{align}
\dd X_\tau = \sigma(\tau)u(X_\tau, s,\tau) \dd \tau + \sigma(\tau) \dd B_\tau, \quad X_0 \sim \mu_0,
\label{eq: controlled sde}
\end{align}
where $\mu_0$ denotes a tractable prior such as a Gaussian, $u: \mathcal{S} \times [0,1] \times \mathbb{R}^{d} \to \mathbb{R}^{d}$ is a drift or control function and $B_\tau$ is a standard Brownian motion.
After simulating \eqref{eq: controlled sde}, we set $a = X_1$ to be the realized action. 

In contrast to explicit generative models which allow for computing the likelihood, diffusion models require  marginalizing over all
possible paths to obtain the marginal likelihood at time $\tau=1$, which is intractable. As a consequence, directly optimizing \eqref{eq: marginal kl} is challenging. Instead, one can consider optimizing over \textit{path space measures}, which can be viewed as a distribution over the space of continuous paths $X$. Formally, we denote by $\Pi^u(X|s)$ the state-conditioned path space measure induced by the SDE \eqref{eq: controlled sde} and by $\Pi^u_\tau(X|s)$ the marginal distribution at time $\tau$. Our goal is therefore to find a drift $u$ such that  $\Pi^{u}_1=\pi^*$, i.e., the marginal distribution of the diffusion process \eqref{eq: controlled sde} at the terminal time $\tau=1$ corresponds to the optimal policy. In order to obtain a variational characterization for optimizing $u$ similar to \eqref{eq: marginal kl}, one needs to construct a target path space measure $\Pi^*$ which satisfies $\Pi^*_0=\mu_0$ and $\Pi^*_{1}=\pi^*$. While there are infinitely many ways of reaching $\pi^*$ from $\mu_0$ and therefore infinitely many $\Pi^*$, we are particularly interested in the solution that additionally minimizes the kinetic-cost $\E_{X\sim\Pi^u}\left[\int_0^1 \frac{1}{2}\|u(X_\tau, s,\tau)\|^2\dd \tau\right]$. This corresponds to the so-called Schr\"odinger bridge problem \cite{schrodinger1931umkehrung, follmer2006random,leonard2013survey,chen2016relation} which is discussed next.

\paragraph{Schr\"odinger bridges.} For a given state $s\in\SP$, the Schr\"odinger bridge (SB) problem is defined as 
\begin{equation}
\label{eq: sb problem}
    \min_{u} \ \E_{X\sim\Pi^u}\left[\int_0^1 \frac{1}{2}\|u(X_\tau, s,\tau)\|^2\dd \tau\right]\quad \mathrm{s.t.} \quad X_0 \sim \mu_0, \ X_1 \sim \pi^*(\cdot|s).
\end{equation}
Solving the SB problem is notoriously difficult as it requires solving a coupled system of equations, known as the Schr\"odinger system, often approached by alternating optimization routines such as iterative proportional fitting \cite{fortet1940resolution,kullback1968probability,ruschendorf1993note}. However, in certain special cases, the Schr\"odinger system can be decoupled, which simplifies the problem significantly \cite{dai1991stochastic,zhang2021path}. To discuss these special cases, let us introduce the path space measure of the \textit{uncontrolled} or \textit{reference} process as $\Pi^0$, which is induced by \eqref{eq: controlled sde} when setting $u=0$. Then, if the reference process is \textit{memoryless} \cite{domingoenrich2025adjoint}, meaning that $X_0$ and $X_1$ are independent, the Schr\"odinger system decouples and solving the SB problem becomes equivalent to minimizing
\begin{equation}
\label{eq: path kl}
    \KL\left(\Pi^u(\cdot|s)|\Pi^*(\cdot|s)\right) \triangleq \E_{X\sim \Pi^u}\left[\int_0^1 \frac{1}{2}\|u(X_\tau, s,\tau)\|^2\dd \tau +\log \Pi^0_1(X_1|s)-\tfrac{1}{\alpha}Q^{\pi}(X_1,s)\right],
\end{equation}
where $\triangleq$ indicates equality up to a constant that is independent of $u$.
Eq. \eqref{eq: path kl} can also be interpreted in terms of the reverse Kullback-Leibler divergence between path space measures and can be seen as a variational characterization for optimizing diffusion models analogous to \eqref{eq: marginal kl}. We provide further details in Appendix~\ref{appendix: sb connection}. To enforce memorylessness, we follow \cite{zhang2021path,vargas2023bayesian,havens2025adjoint} and use a deterministic initial condition $X_0=0$. In that case, the reference dynamic is given by a scaled Brownian motion $\dd X_\tau=\sigma(\tau)\dd B_\tau$ which admits a tractable (state-independent) marginal distribution at the terminal time given by $\Pi^0_1(x|s)=\mathcal{N}\bigl(x|0,\int_0^1\sigma^2(\tau)\dd \tau\bigr)$. Minimizing \eqref{eq: path kl} can now be performed by using a parameterized control $u_{\theta}$ with parameters $\theta$, simulating paths with $u_{\theta}$ and differentiating backwards through these paths to update $\theta$ \cite{zhang2021path,vargas2023denoising,berner2022optimal,vargas2024transport,blessing2025end,blessing2025underdamped}. While conceptually simple and straightforward to implement, optimizing through the diffusion process can lead to a substantial memory overhead and requires full path simulations for every gradient step, making it challenging to scale to complex problems \cite{domingoenrich2025adjoint}.

\section{Related Work}
\label{section: related work}

\paragraph{Generative Models in RL.} 
Diffusion \citep{sohl2015deep,song2020denoising, ho2020denoising,karras2022elucidating} and Flow \citep{lipman2023flow,albergo2023stochastic} models have seen adoption in Reinforcement Learning as trajectory planners \citep{JannerDTL22,AjayDGTJA23}, distributional critic models \citep{agrawalla2026floq,dong2026value}, synthetic data generation in the maximum entropy RL setting for Gaussian policies \citep{ishfaq2025langevin} or as expressive policy representations \citep{WangHZ23,Kang0DPY23,LuCEP2023,Mao0Z0Z24,FangLZWJ25,PsenkaEA024,ding2024diffusionbased,FangLZWJ25,celik2025dime,RenLAS0MBDS25,ding2026genpo,ma2025reinforcement,mcallister2026flow,lv2026flowbased,zhang2026reinflow,ZhangZG25,ma2025efficient,li2026qlearning,dong2026meanflowpolicyoptimization,qiu2026scoreflowcompletedistributionalcontrol,gong2026proximalpolicyoptimizationpath,lv2026flacmaximumentropyrl,goo2022knowboundariesnecessityexplicit,ding2024consistency,dong2025maximum,black2024training,liu2026flowgrpo,wang2025enhanceddaceralgorithmhigh,chen2025onestepflowpolicymirror,gao2025behaviorregularized,jain2025sampling,zhang2026sac,gao2026flowmatchingpolicyentropy,zhong2026reparameterizationflowpolicyoptimization}.
A review of the different approaches is provided in Appendix \ref{appendix: extended related work} while we summarize the most relevant work in online RL here.
Lacking a fixed dataset, policies cannot rely on standard score-matching objectives and also need to consider entropy to explore the environment.

\paragraph{Action Gradients and Weighted Regression.} 
A branch of prior work adapts the standard score and velocity matching objectives with small adjustments.
DIPO \citep{DIPOYang2023} and DDiffPG \citep{DDiffPG_Li2024} refine previous actions by gradient ascent on the Q-function and applying behavioral cloning.
QSM \citep{PsenkaEA024} directly matches the score function to the Q-score, leading to bias \citep{gao2026flowrltaxonomymodularframework,ki2026directsoftpolicysamplinglangevin}, and adds Gaussian noise to the action for exploration.
Subsequent methods optimize the policy via weighted regression objectives, with weighting defined through the Q-values of the generated actions \citep{ding2024diffusionbased,ma2025reinforcement,dong2025maximum,chen2025onestepflowpolicymirror,dong2026meanflowpolicyoptimization,gao2026flowmatchingpolicyentropy}.
In contrast to other methods which weight (indirectly) by $\exp Q(s,a)$, QVPO \citep{ding2024diffusionbased} weights the action sample by $Q$ and thus needs to handle negative Q-values.
Additionally, QVPO regularizes with score matching of uniform distribution samples and applies Best-of-N (BoN) sampling both in rollout and objective computation.
\citet{ma2025reinforcement} introduce both DPMD and SDAC for mirror descent and maximum entropy, respectively, which utilize BoN sampling and additive Gaussian noise for exploration.
Both SDAC \citep{ma2025reinforcement} and MaxEntDP \citep{dong2025maximum} approximate the log-likelihoods for entropy computation, SDAC treats the BoN result as deterministic and uses the additive noise likelihood and MaxEntDP applies numerical integration but needs to circumvent singularities and generates 500 actions with Q-function evaluation per state.
MFPO \citep{dong2026meanflowpolicyoptimization} utilizes a mean flow model and learns the divergence for likelihood, and thus entropy, estimation of the max-entropy policy while BoN action selection is applied during test time.
Similarly, FMER \citep{gao2026flowmatchingpolicyentropy} computes the entropy through the flow divergence and applies BoN but uses Hutchinson's trace estimator directly without a learned model.

\paragraph{Chain Backpropagation.} 
To utilize first-order information from the critic without the bias common in guidance methods \citep{gao2026flowrltaxonomymodularframework,ki2026directsoftpolicysamplinglangevin,ZhangZG25}, another line of work directly optimizes the expected return by backpropagating the critic gradient through the action diffusion chain. 
DACER \citep{wang2024diffusion} handles exploration by fitting a Gaussian Mixture Model to estimate the entropy which then tunes the scale of additive Gaussian noise during training rollout. 
FlowRL \citep{lv2026flowbased} lacks an explicit entropy mechanism but constrains to the behavioral policy with a fixed Lagrangian multiplier.
GenPO \citep{ding2026genpo} applies PPO's \citep{schulman2017proximal} policy gradients and backpropagates the likelihood through the chain of change-of-variables, while integrating an entropy-maximization term, thus considering the entropy-regularized setting.
In contrast, DIME \citep{celik2025dime} operates in the MaxEnt-RL setting and minimizes a reverse KL objective on the denoising process with a principled entropy lower bound.
FLAC \citep{lv2026flacmaximumentropyrl} also solves the MaxEnt-RL setting but formalizes the problem in continuous time using Schrödinger bridges.
In contrast to our work, FLAC backpropagates the loss gradient through the full diffusion and approximates the reference marginal as uniform.
While exact and error-correcting, chain backpropagation requires storing the intermediate steps of the diffusion or flow process (explicitly or implicitly), which heavily bottlenecks scalability for models with many integration steps.

\paragraph{Adjoint Matching}
Recently, QAM \citep{li2026reinforcement} was introduced as another method utilizing Adjoint Matching \citep{domingoenrich2025adjoint} which however operates in the offline-to-online setting with supervised pretraining on the behavioural dataset.
In contrast to our work using Reciprocal Adjoint Matching \citep{havens2025adjoint}, QAM needs to compute many vector-Jacobian products (VJP) during ODE solving, which is computationally equivalent to backpropagating, and QAM adds further one-step and residual components.

\paragraph{Policy Gradient.} 
To bypass the difficulty of action likelihood computation, DPPO \citep{RenLAS0MBDS25} formulates the diffusion process itself as a multi-step MDP, drawing parallels to RL fine-tuning methods in image generation like DDPO \citep{black2024training} and Flow-GRPO \citep{liu2026flowgrpo}.
To that end, DPPO \citep{RenLAS0MBDS25} directly applies PPO \citep{schulman2017proximal} to the hierarchy-aware returns which however still requires storage of all the noisy intermediate states.
Alternatively, FPO \citep{mcallister2026flow} estimates the marginal likelihood ratio through a Monte-Carlo estimate over multiple intermediate states of the continuous normalizing flow, which also need to be stored.

\section{Adjoint matching for maximum entropy reinforcement learning} \label{AM for maxent RL}
\subsection{Adjoint matching for diffusion policies}
\paragraph{Stochastic optimal control and adjoint matching.} To remedy the problems associated with optimizing the reverse KL, \citet{domingoenrich2025adjoint} introduced \textit{adjoint matching} (AM) for solving stochastic optimal control (SOC) problems. Indeed, \eqref{eq: path kl} corresponds to a SOC problem with quadratic running costs and terminal cost $g(x,s) =\log \Pi^0_1(x|s)-\tfrac{1}{\alpha}Q^{\pi}(x,s)$ giving the adjoint matching loss,
\begin{equation}
\label{eq: am loss}
    \mathcal{L}^s_{\mathrm{AM}}(u) = \E_{X\sim\Pi^{\bar u}}\left[ \int_0^1\frac{1}{2}\|u(X_\tau,s,\tau)-\sigma(\tau)\nabla_{x}\left(\log \Pi^0_1(X_1|s)+\tfrac{1}{\alpha}Q^{\pi}(X_1,s)\right)\|^2\dd \tau\right],
\end{equation}
 for a given state $s$ which does not require differentiation through the diffusion process as indicated by the stop-gradient operator $\mathrm{sg}(\cdot)$ with $\bar u = \mathrm{sg}(u)$. 
 The adjoint matching loss can be seen as a fixed-point iteration whose unique fixed-point is the optimal control $u^*$ that minimizes \eqref{eq: path kl}. Formally, define the operator $\Phi(u) \coloneqq \argmin_u \mathcal{L}_{\mathrm{AM}}(u)$ then $u=\Phi(u)$ if and only if $u=u^*$ \cite{domingoenrich2025adjoint}.
 Recently, \cite{havens2025adjoint} further improved adjoint matching by taking the expectation in \eqref{eq: am loss} with respect to the \textit{reciprocal projection} of $\Pi^{\bar u}$ onto the uncontrolled process $\Pi^0$ which is defined as 
\begin{equation}
    \mathcal{R}(\Pi^{\bar u})(X|s) \coloneqq \Pi^{\bar u}_1(X_1|s) \Pi^0_{|1}(X|s),
\end{equation}
where $\Pi^0_{|1}$ is the path space measure of the uncontrolled process conditioned on $X_1$ leading to the \textit{reciprocal adjoint matching loss}
\begin{equation}
\label{eq: ram loss}
    \mathcal{L}^s_{\mathrm{RAM}}(u) = \int_0^1\E_{X_1\sim\Pi^{\bar u}_1, X_\tau\sim\Pi^0_{\tau|1}}\left[ \frac{1}{2}\|u(X_\tau,s,\tau)-\sigma(\tau)\nabla_{x}\left(\log \Pi^0_1(X_1|s)+\tfrac{1}{\alpha}Q^{\pi}(X_1,s)\right)\|^2\right]\dd \tau.
\end{equation}
In the setting considered in this work ($X_0=0$), the conditional distribution $\Pi^0_{\tau|1}$ becomes a state-independent Gaussian that allows for tractable sampling of intermediate diffusion states $X_\tau\sim\Pi^0_{\tau|1}$ (see Appendix~\ref{appendix: reference process}). As such, we can store terminal samples from the current control, i.e., $X_1\sim\Pi^{\bar u}_1$, and then optimize the loss \eqref{eq: ram loss} in a simulation-free manner akin to score, bridge, or flow matching methods, making the approach highly efficient and scalable. Furthermore, the RAM loss has the same theoretical guarantees as adjoint matching \cite{havens2025adjoint}.

\paragraph{Action space adaptation.}
In practical control tasks, the action space $\AS$ is typically not the entire space $\R^d$, but a bounded manifold, most commonly $\AS = [-1,1]^d$ \cite{haarnoja2018soft}. Consequently, directly treating the terminal sample of the diffusion process $X_1 \in \R^d$ as the executable action $a$ is problematic and can lead to instability or invalid control signals. To address this, we follow the established practice in RL and use a differentiable, bijective transformation $f: \R^d \to [-1,1]^d$ to `squash' the diffusion samples into the valid action space. Specifically, we define the action as $a = f(X_1)$, where $X_1 \in \R^d$ is the sample produced by the diffusion process at $\tau=1$.

The optimal policy in the squashed space is defined as $\pi^*(a|s) \propto \exp(Q^{\pi}(f(X_1),s)/\alpha)$. However, since our diffusion model operates and is optimized in the unsquashed latent space $\R^d$, we must define a density $\tilde{\pi}^*(X_1|s)$ over the latent samples $X_1$. By the change of variables formula \cite{bogachev2007measure,papamakarios2021normalizing}, this density must satisfy:
\begin{equation}
    \tilde \pi^*(X_1|s) = \pi^*(a|s) \big|\det J_f(X_1)\big|, \quad \text{where} \quad J_f(X_1) = \frac{\dd f(X_1)}{\dd X_1}.
\end{equation}
To minimize the divergence between our diffusion marginal $\Pi^u_1$ and this transformed target, we incorporate the Jacobian term $J_f$ into the reciprocal adjoint matching objective. This leads to the modified loss:
\begin{equation}
\label{eq: ram loss action space}
    \mathcal{L}^{s,f}_{\mathrm{RAM}}(u) = \int_0^1\E\left[ \frac{1}{2}\|u(X_\tau,s,\tau)-\sigma(\tau)\nabla_{x}\left(\log \frac{\Pi^0_1(X_1|s)}{\big|\det J_f(X_1)\big|}+\tfrac{1}{\alpha}Q^{\pi}(f(X_1),s)\right)\|^2\right]\dd \tau,
\end{equation}
with expectation taken over $X_1\sim\Pi^{\bar u}_1, X_\tau\sim\Pi^0_{\tau|1}$. While prior work commonly utilizes an element-wise $\tanh$ transformation \cite{haarnoja2018soft}, i.e., $f_{\tanh}(x) \coloneqq \left(\tanh(x_i)\right)_{i=1}^d$, we find that the scaled \textit{error function}, i.e.,
\begin{equation}
\label{eq: erf}
    f_{\mathrm{erf}}(x) \coloneqq \left(\mathrm{erf}(kx_i)\right)_{i=1}^d, \quad \mathrm{erf}(kx_i) = \frac{2}{\sqrt{\pi}}\int_0^{kx_i} e^{-v^2} \dd v, \quad \mathrm{with} \quad k = (2 \int_0^1 \sigma^2(\tau) \dd \tau)^{-1/2},
\end{equation}
 exhibits superior numerical stability in our setting.
 By choosing the scaling factor as in \eqref{eq: erf}, the determinant of the Jacobian exactly matches the Gaussian marginal density of the reference process $\Pi^0_1(x|s) = \mathcal{N}(x|0, \int_0^1 \sigma^2(\tau) \dd \tau)$. This specific choice leads to $\log \frac{\Pi^0_1(X_1|s)}{|\det J_f(X_1)|}$ becoming a constant independent of $X_1$. Consequently, the gradient of this ratio vanishes, significantly simplifying the regression objective. The resulting final loss function for \texttt{AMDP} is:
\begin{equation}
\label{eq: amdp loss}
    \mathcal{L}^{s}_{\mathrm{AMDP}}(u) = \int_0^1\E_{X_1\sim\Pi^{\bar u}_1, X_\tau\sim\Pi^0_{\tau|1}}\left[ \frac{1}{2}\|u(X_\tau,s,\tau)-\sigma(\tau)\nabla_{x}\tfrac{1}{\alpha}Q^{\pi}(f_{\mathrm{erf}}(X_1),s)\|^2\right]\dd \tau.
\end{equation}
This formulation is conceptually simple, highly efficient and scalable. We provide further explanations, visualizations, and experiments for $\mathrm{erf}$-squashing in Appendix~\ref{appendix: action squashing}. For a sketch of the algorithm, see Algorithm~\ref{alg:onpolicy_reduced}, and for a detailed version Algorithm~\ref{alg: onpolicy amdp full}.

\subsection{Policy iteration for adjoint matching diffusion policies}\label{sec policy iteration}
\setlength{\columnsep}{1.5em} 
\begin{wrapfigure}[9]{r}{0.5\textwidth}
    \centering
    \vspace{-1.4em} 
    \begin{algorithm}[H]
        \caption{\texttt{AMDP} algorithm sketch}
        \label{alg:onpolicy_reduced}
        \SetAlgoLined
        \DontPrintSemicolon
        \KwIn{Initial $u$ and $Q$}
        \For{each iteration}{
            Collect and store $(s, a, s', r, X_1)$\;
            Update $Q$ via \eqref{eq: soft bellman}\;
            Compute and store $\nabla_x Q(f_{\mathrm{erf}}(X_1),s)$\;
            Update $u$ via \eqref{eq: amdp loss}\;
        }
    \end{algorithm}
\end{wrapfigure}
We now introduce the policy evaluation and improvement steps that constitute our algorithmic framework. By leveraging the specific structure of adjoint matching, we can ensure that our iterative process converges toward the optimal maximum entropy policy. A rigorous proof of convergence is provided in Appendix~\ref{appendix: policy iteration}.

\paragraph{Policy evaluation.} 
To obtain policy iteration guarantees, we define a modified `soft' Bellman backup. In standard maximum entropy RL, the Bellman backup is augmented with the differential entropy $\mathcal{H}(\pi(a|s))$ \cite{haarnoja2018soft}. However, the marginal entropy for diffusion models is generally intractable, as it requires computing $\pi^u(a|s) = \Pi^u_1(X_1|s)\big|\det J_f(X_1)\big|^{-1}$, where the terminal marginal density $\Pi^u_1$ is not available in closed form. To circumvent this, we propose a tractable entropy lower bound that preserves the convergence properties of the soft Bellman operator.

\begin{proposition}[Entropy lower bound] \label{prop: entropy lower bound}
Consider the diffusion model defined in \eqref{eq: controlled sde} with deterministic initial condition $X_0=0$. The marginal entropy of the resulting policy satisfies:
\begin{equation}
\label{eq: ent lb}
 \mathcal{H}(\pi^u(a|s)) \geq \mathcal{L}^s_{\mathrm{ENT}}(u) \coloneqq -\E_{X\sim \Pi^u}\left[\int_0^1 \frac{1}{2}\|u(X_\tau, s,\tau)\|^2 \dd \tau + \log \frac{\Pi^0_1(X_1|s)}{\big|\det J_f(X_1)\big|}\right].
\end{equation}
\end{proposition}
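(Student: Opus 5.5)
The plan is to write the marginal entropy of the squashed action via a change of variables, and then bound the entropy of the latent terminal marginal $\Pi^u_1$ using the data processing inequality for path space measures.

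\textbf{Step 1 (change of variables).} Since $a=f(X_1)$ with $f$ a smooth bijection, $\pi^u(a|s)=\Pi^u_1(X_1|s)\,|\det J_f(X_1)|^{-1}$. Hence
\begin{equation*}
\mathcal{H}(\pi^u(a|s)) = -\E_{X_1\sim\Pi^u_1}\bigl[\log \Pi^u_1(X_1|s)\bigr] + \E_{X_1\sim\Pi^u_1}\bigl[\log|\det J_f(X_1)|\bigr].
\end{equation*}

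\textbf{Step 2 (KL to the reference marginal).} Write $-\log\Pi^u_1 = -\log\Pi^0_1 - \log(\Pi^u_1/\Pi^0_1)$. This gives
\begin{equation*}
-\E[\log \Pi^u_1(X_1|s)] = -\KL\bigl(\Pi^u_1(\cdot|s)\,|\,\Pi^0_1(\cdot|s)\bigr) - \E_{\Pi^u_1}\bigl[\log\Pi^0_1(X_1|s)\bigr].
\end{equation*}

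\textbf{Step 3 (data processing and Girsanov).} Because $\Pi^u_1$ and $\Pi^0_1$ are pushforwards of $\Pi^u$ and $\Pi^0$ under the evaluation map $X\mapsto X_1$, the data processing inequality yields
\begin{equation*}
\KL(\Pi^u_1|\Pi^0_1)\le \KL(\Pi^u|\Pi^0).
\end{equation*}
Both processes start at the same deterministic point $X_0=0$ and share the diffusion coefficient $\sigma(\tau)$, with drift difference $\sigma(\tau)u$. Girsanov's theorem therefore gives
\begin{equation*}
\KL(\Pi^u|\Pi^0) = \E_{\Pi^u}\Bigl[\int_0^1 \tfrac12\|u(X_\tau,s,\tau)\|^2\dd\tau\Bigr].
\end{equation*}
The stochastic integral term has zero mean, since $\sigma$ cancels: the drift difference divided by $\sigma$ is exactly $u$.

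\textbf{Step 4 (combine).} Substituting Steps 2 and 3 into Step 1 gives
\begin{equation*}
\mathcal{H}(\pi^u)\ge -\E\Bigl[\int_0^1\tfrac12\|u\|^2\dd\tau\Bigr] - \E\bigl[\log \Pi^0_1(X_1|s)\bigr] + \E\bigl[\log|\det J_f(X_1)|\bigr],
\end{equation*}
which is exactly $\mathcal{L}^s_{\mathrm{ENT}}(u)$. Here $X_1$ is the endpoint of $X\sim\Pi^u$, so all expectations can be taken over the path measure.

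\textbf{Main obstacle.} The delicate step is Step 3, specifically the regularity needed for Girsanov. One needs a Novikov-type or finite-energy condition on $u$ so that the Radon–Nikodym derivative is a true martingale and the stochastic integral has zero expectation. One also needs $\sigma(\tau)>0$ so that the two measures are mutually absolutely continuous.

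I would handle this by assuming $\E_{\Pi^u}\int_0^1\|u\|^2\dd\tau<\infty$. If this fails, the bound is trivially true, since its right-hand side is $-\infty$. Under the finite-energy assumption I would invoke the standard identity $\KL(\Pi^u|\Pi^0)=\E\int_0^1\frac12\|u\|^2\dd\tau$, as in Léonard's entropy formula. The remaining integrability requirements are that $\Pi^u_1$ has finite entropy and that $\log|\det J_f|$ is integrable; I would state these as standing assumptions.
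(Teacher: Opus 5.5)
Your proposal is correct and follows essentially the same route as the paper: Girsanov identifies $\KL(\Pi^u|\Pi^0)$ with the expected kinetic energy, and the data processing inequality passes this to the terminal marginals. The marginal KL is then expanded into entropy plus cross-entropy with $\Pi^0_1$, and the change-of-variables formula accounts for the squashing Jacobian; the only difference is that you apply the change of variables first rather than last, and your added regularity discussion is a harmless refinement.
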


This lower bound allows us to define a practical soft Bellman evaluation operator $\mathcal{T}^\pi$ acting on the $Q$-function $Q:\SP\times\AS\to\R$. For a fixed control $u$ inducing the policy $\pi^u$, the operator is defined as:
\begin{align}
(\mathcal{T}^{\pi^u} Q)(s,a) \coloneqq r(s,a) + \gamma \mathbb{E}_{s' \sim p, a' \sim \pi^u} \big[Q(s',a') + \alpha  \mathcal{L}^{s'}_{\mathrm{ENT}}(u)\big],
\label{eq: soft bellman}
\end{align}
where the entropy surrogate $\alpha \mathcal{L}^{s'}_{\mathrm{ENT}}(u)$ is used in the diffusion setting.

\paragraph{Policy improvement.} 
Given the current $Q$-function at temperature $\alpha$, we update the diffusion policy by finding a new drift $u$ that induces a marginal action distribution $\pi^u(a|s)$ that matches the improved target distribution $\pi^*(a|s) \propto \exp(Q(s, a)/\alpha)$. In our framework, this improvement step is exactly the adjoint matching problem described in Section~\ref{sec: preliminaries}. Specifically, we update the control $u$ by performing a fixed-point iteration on the loss $\mathcal{L}^s_{\mathrm{AMDP}}(u)$. As shown in \cite{domingoenrich2025adjoint,havens2025adjoint}, the only fixed-point is the optimal control $u^*$ with path measure $\Pi^{u^*}$. Since $\Pi^{u^*}=\Pi^*$ it particularly holds $\Pi^{u^*}_1=\Pi^*_1$. After a change of variables from $X_1$ to $a$, we have that $\pi^u(a|s) = \pi^*(a|s)$ as desired.

\subsection{Trust-region updates}
It is well known that policy updates can be unstable in practice, particularly in on-policy RL where the target distribution shifts rapidly \cite{schulman2015trust,schulman2017proximal}. Inspired by prior works on trust region optimization \cite{schulman2015trust,otto2021differentiable}, we propose the following constrained optimization scheme:
\begin{equation}
\label{eq: constrained ram loss}
    \min_u \  \mathcal{L}^s_{\mathrm{AMDP}}(u) \quad \mathrm{s.t.}\ \underbrace{\int_0^1\E_{X_1\sim\Pi^{\bar u}_1, X_\tau\sim\Pi^0_{\tau|1}}\left[ \frac{1}{2}\|u(X_\tau,s,\tau)-u_{\mathrm{old}}(X_\tau,s,\tau)\|^2\right]\dd \tau}_{\mathcal{L}^s_{\mathrm{TR}}(u)} \leq \varepsilon,
\end{equation}
where $\varepsilon\geq 0$ denotes the trust-region bound and $u_{\mathrm{old}}$ is the control function from a previous iteration.
The motivation behind \eqref{eq: constrained ram loss} is to prevent overly aggressive updates to $u$, which could lead to a collapse of the action distribution or divergence in the $Q$-function estimation. By explicitly penalizing the deviation of the new drift $u$ from the previous iterate $u_{\mathrm{old}}$, we ensure that the diffusion model remains within a local neighborhood where the current $Q$-function estimate is reliable.

We approach this constrained problem using a relaxed Lagrangian formalism, considering the loss functional:
\begin{equation}
\label{eq: Lagrangian loss}
    \mathcal{L}^s_\mathrm{LAG}(u, \lambda) = \mathcal{L}^s_{\mathrm{AMDP}}(u) + \lambda\left( \mathcal{L}^s_{\mathrm{TR}}(u) - \varepsilon \right),
\end{equation}
where $\lambda \geq 0$ is a Lagrange multiplier. We then solve the saddle point problem $\max_{\lambda \ge 0} \, \min_{u} \mathcal{L}_\mathrm{LAG}(u, \lambda)$. Since the objective $\mathcal{L}^s_{\mathrm{AMDP}}$ and the constraint $\mathcal{L}^s_{\mathrm{TR}}$ are both quadratic and thus convex in $u$, and assuming the feasible region is non-empty (which holds for any $\varepsilon \geq 0$ as $u = u_{\mathrm{old}}$ is a feasible point), strong duality holds \cite{boyd2004convex}. In practice, we update $\lambda$ via dual descent, which, therefore, converges to the unique optimal solution.

Importantly, the inclusion of the trust-region constraint does not bias the final solution of the policy improvement step as outlined below.

\begin{proposition}[Fixed-point preservation] \label{prop: fixed point preservation}
For any $\lambda \geq 0$, the optimization of $\mathcal{L}^s_\mathrm{LAG}(u, \lambda)$ preserves the unique fixed point of the unconstrained reciprocal adjoint matching loss $\mathcal{L}^s_{\mathrm{AMDP}}(u)$ in \eqref{eq: amdp loss}.
\end{proposition}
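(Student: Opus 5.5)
We interpret the fixed-point iteration with the trust region exactly as the unconstrained one. In each outer iteration the sampling control $\bar u=\mathrm{sg}(u)$ is frozen, and the previous iterate $u_{\mathrm{old}}$ is set to this same $\bar u$. We then define the operator $\Phi_\lambda(\bar u)\coloneqq\argmin_u \mathcal{L}^s_{\mathrm{LAG}}(u,\lambda)$, where both the expectation and $u_{\mathrm{old}}$ are taken at $\bar u$. The claim is that, for every $\lambda\ge0$, $\bar u=\Phi_\lambda(\bar u)$ holds if and only if $\bar u=u^*$. Here $u^*$ is the unique fixed point of $\Phi(\bar u)=\argmin_u\mathcal{L}^s_{\mathrm{AMDP}}(u)$, as established in \cite{domingoenrich2025adjoint,havens2025adjoint}.

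\textbf{Step 1: closed-form minimiser.} Both terms of $\mathcal{L}^s_{\mathrm{LAG}}$ are pointwise quadratic in $u(x,s,\tau)$ under the same sampling measure $\nu_{\bar u}(\dd \tau,\dd x_\tau,\dd x_1)=\dd\tau\,\Pi^{\bar u}_1(\dd x_1|s)\,\Pi^0_{\tau|1}(\dd x_\tau|x_1)$. Write $v(X_1,\tau)\coloneqq\sigma(\tau)\nabla_x\tfrac1\alpha Q^\pi(f_{\mathrm{erf}}(X_1),s)$ for the regression target. Minimising over all measurable $u$ then gives, at each $(x,\tau)$,
\begin{equation*}
\Phi_\lambda(\bar u)(x,s,\tau)=\frac{1}{1+\lambda}\,\E\left[v(X_1,\tau)\,\middle|\,X_\tau=x\right]+\frac{\lambda}{1+\lambda}\,\bar u(x,s,\tau).
\end{equation*}
The first term is precisely $\Phi(\bar u)$, the minimiser of $\mathcal{L}^s_{\mathrm{AMDP}}$, since a squared-error regression is solved by the conditional expectation. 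Hence
\begin{equation*}
\Phi_\lambda(\bar u)=\tfrac{1}{1+\lambda}\Phi(\bar u)+\tfrac{\lambda}{1+\lambda}\bar u.
\end{equation*}
The constant $-\lambda\varepsilon$ does not affect the argmin.

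\textbf{Step 2: fixed points coincide.} The equation $\bar u=\Phi_\lambda(\bar u)$ is equivalent to $(1+\lambda)\bar u=\Phi(\bar u)+\lambda\bar u$, that is, to $\bar u=\Phi(\bar u)$. This equivalence uses $1+\lambda>0$, which holds for every $\lambda\ge0$. The uniqueness of the fixed point of $\Phi$ then transfers directly: $u^*$ is the only fixed point of $\Phi_\lambda$. The argument holds for any fixed $\lambda$, and therefore also along the dual iterates of $\lambda$.

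\textbf{Main obstacle.} The delicate point is Step 1. The identification must hold $\nu_{\bar u}$-almost everywhere, so the argmin is defined only on the support of $\Pi^{\bar u}_\tau$. This support is all of $\R^d$ for $\tau>0$, because $\Pi^0_{\tau|1}$ is a nondegenerate Gaussian, so no ambiguity arises. Care is also needed to state explicitly that $u_{\mathrm{old}}=\bar u$ at the fixed point. If instead $u_{\mathrm{old}}$ lags by several steps, the same computation shows that any stationary point of the joint iteration with $u=u_{\mathrm{old}}=\bar u$ still satisfies $\bar u=\Phi(\bar u)$. I would remark on this case as well.
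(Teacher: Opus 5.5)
Your proposal is correct and follows the same route as the paper. The Lagrangian minimiser is the convex combination $\tfrac{1}{1+\lambda}\Phi(u_{\mathrm{old}})+\tfrac{\lambda}{1+\lambda}u_{\mathrm{old}}$, so with $u_{\mathrm{old}}=\bar u$ the damped map $u\mapsto\Phi_\lambda(u)$ has exactly the fixed points of $\Phi$. Your version is slightly more careful than the paper's in two ways: you derive the convex combination from the pointwise conditional-expectation characterisation of the minimiser, and you use only $1+\lambda>0$, whereas the paper writes $c\in(0,1)$ and so excludes the case $\lambda=0$ (i.e.\ $c=1$).
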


\pagebreak
\section{Experiments}
We evaluate \texttt{AMDP}'s performance on 63 different environments from the MuJoco playground \cite{mujoco_playground_2025}, ManiSkill \cite{gu2023maniskill2}, and the high-dimensional HumanoidBench \citep{sferrazza2024humanoidbench} benchmark suites in the on-policy setting. 
We compare \texttt{AMDP} against strong Gaussian policy baselines REPPO \cite{voelcker2026relative}, PPO \cite{schulman2017proximal}, SPO \cite{xie2025simple}, and against the diffusion policy approaches DPPO \citep{RenLAS0MBDS25}, FPO \citep{mcallister2026flow}, DIME \citep{celik2025dime}. 
Here, DPPO and FPO approximate a trust region using a PPO-like clipped objective, framing them particularly suitable for the on-policy RL setting, whereas DIME is a state-of-the-art maximum entropy RL algorithm and optimizes the reverse KL.  

Furthermore, we show \texttt{AMDP}'s flexibility by additionally analyzing its performance on DMC's \citep{tunyasuvunakool2020dm_control} seven complex dog and humanoid environments in the off-policy setting, where sample efficiency is crucial. 
Here, we compare against the strong diffusion-based baselines DIME \citep{celik2025dime}, QSM \citep{PsenkaEA024}, Diff-QL\citep{WangHZ23} and Consistency-AC \citep{ding2024consistency}.

Every experiment is repeated for ten seeds, and we report aggregated returns using the interquartile mean (IQM) with 95\% confidence intervals based on stratified bootstrapping, as recommended by \citet{agarwal2021deep}. Further experimental details can be found in Appendix~\ref{appendix: experimental setup} and per environment results can be found in Appendix~\ref{appendix: further experimental results}.

\paragraph{On-policy RL results.}
Fig. \ref{fig:main_results_mjx_dmc} visualizes the learning curves of all methods in the MuJoCo Playground DMC benchmark.
\texttt{AMDP} matches DIME's \citep{celik2025dime} performance and converges slightly faster than REPPO in terms of sample efficiency. 
Importantly, \texttt{AMDP} has a comparable wallclock speed (Table~\ref{tab:runtime-narrow}) compared to the highly efficient, Gaussian-based REPPO algorithm, indicating that diffusion policies can be efficiently trained with \texttt{AMDP}.
On the more sophisticated humanoid locomotion tasks in MuJoCo playground (Fig. \ref{fig:main_results_mjx_humanoid}), \texttt{AMDP} starts learning significantly faster than all baselines, converging to high returns, although REPPO eventually surpasses \texttt{AMDP}'s final performance slightly.    
However, following previous works in the literature \citep{Chen0Y0023,IDQL_HansenEstruch2023,Kang0DPY23,Mao0Z0Z24,FangLZWJ25,dong2026meanflowpolicyoptimization,gao2025behaviorregularized}, we can reduce stochasticity during evaluation by proposing $N=16$ samples to the Q function and choosing the one with the highest return (\texttt{AMDP BoN}). 
This stochasticity reduction significantly improves \texttt{AMDP}'s performance above all baselines.
Interestingly, DIME performs poorly, indicating the need for a trust region constrained update mechanism in complex on-policy methods.
In the tabletop robotic manipulation benchmarks from ManiSkill (Fig. \ref{fig:main_results_maniskill}) \texttt{AMDP}, DIME, and REPPO perform similarly, whereas PPO and SPO perform worse.
Contrary to this, Fig. \ref{fig:main_results_humanoid_bench} shows that in the more complex and high-dimensional HumanoidBench environments that include whole-body manipulation tasks, \texttt{AMDP} outperforms all baselines by a large margin even without best-of-N sampling, indicating that \texttt{AMDP} scales well to more complex tasks.

\begin{figure*}[t]
    \centering
    \small
    \legendMainResultsAMDPBoN

    \vspace{0.35em}

    \makebox[\textwidth][c]{%
    \begin{subfigure}{0.255\textwidth}
        \centering
        \includegraphics[width=\linewidth]{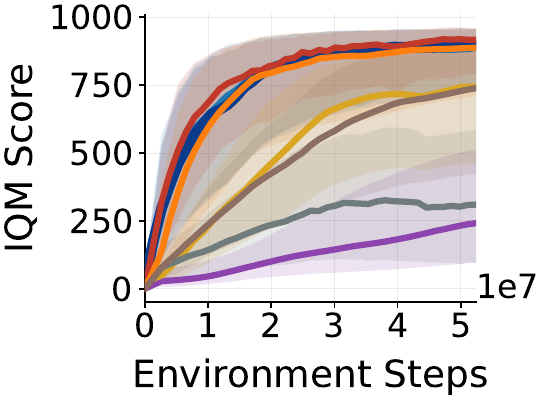}
        \caption{MJX DMC}
        \label{fig:main_results_mjx_dmc}
    \end{subfigure}
    \begin{subfigure}{0.255\textwidth}
        \centering
        \includegraphics[width=\linewidth]{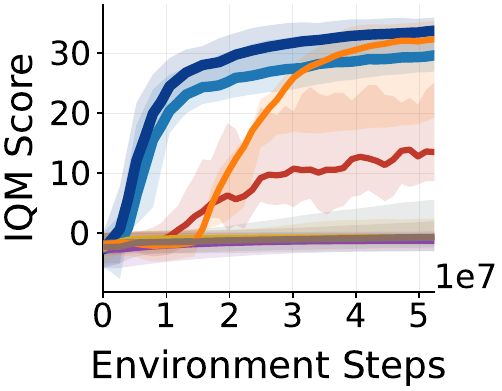}
        \caption{MJX Humanoid}
        \label{fig:main_results_mjx_humanoid}
    \end{subfigure}
    \begin{subfigure}{0.255\textwidth}
        \centering
        \includegraphics[width=\linewidth]{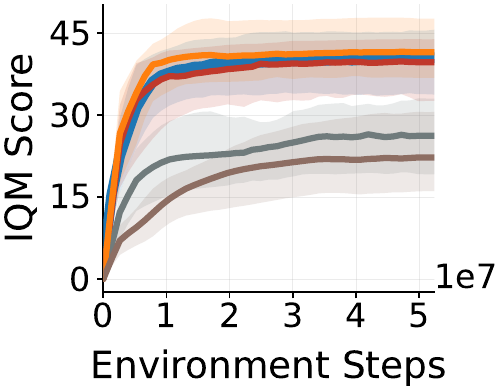}
        \caption{ManiSkill3}
        \label{fig:main_results_maniskill}
    \end{subfigure}
    \begin{subfigure}{0.255\textwidth}
        \centering
        \includegraphics[width=\linewidth]{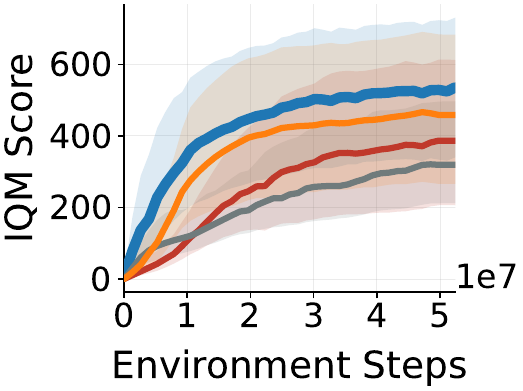}
        \caption{HumanoidBench}
        \label{fig:main_results_humanoid_bench}
    \end{subfigure}%
    }

    \caption{\textbf{On-policy performance comparison.}
    Aggregated IQM learning curves across (a) MuJoCo Playground DMC, (b) MuJoCo Playground humanoid locomotion, (c) ManiSkill3, and (d) HumanoidBench. Shaded regions show 95\% bootstrap confidence intervals.}
    \label{fig:main_results_all}
\end{figure*}

\paragraph{Off-policy RL results.} 
In addition to the on-policy benchmarks, we evaluate \texttt{AMDP} on the high-dimensional dog and humanoid tasks (Fig.~\ref{fig:off_policy_dmc_aggregated_main_results}).
\texttt{AMDP} matches DIME's performance and converges slightly faster on the dog environments (see Appendix~\ref{appendix: further experimental results}), showing that \texttt{AMDP}'s adjoint matching objective is a competitive policy optimization loss against DIME's reverse KL loss in the off-policy RL setting.

\begin{figure*}[ht]
\begin{minipage}[c]{0.56\textwidth}
    \centering
    \small
    \legendOffPolicy

    \vspace{0.5cm}
    \includegraphics[width=0.75\textwidth]{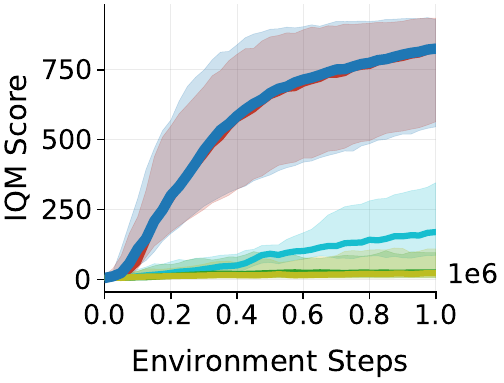}
    \caption{\textbf{Off-policy performance comparison.} Aggregated IQM learning curves on the DMC dog and humanoid tasks. Shaded regions show 95\% bootstrap confidence intervals.}
    \label{fig:off_policy_dmc_aggregated_main_results}
\end{minipage}
\hspace{8.0pt}
\begin{minipage}[c]{0.42\textwidth}
    \centering
    \small
    \setlength{\tabcolsep}{2.0pt}
    \renewcommand{\arraystretch}{1.08}

    \begin{tabular}{@{}llrrrr@{}}
        \toprule
        \textbf{Method} & \textbf{Steps}
        & \multicolumn{2}{c}{\textbf{Cartpole}}
        & \multicolumn{2}{c}{\textbf{G1}} \\
        \cmidrule(lr){3-4} \cmidrule(lr){5-6}
        & & \textbf{Env.} & \textbf{Upd.} & \textbf{Env.} & \textbf{Upd.} \\
        \midrule
        REPPO         & --  & $83$     & $948$     & $9\,605$  & $1\,014$ \\
        \texttt{AMDP} & 16  & $404$    & $994$     & $10\,031$ & $1\,113$ \\
        rev.\ KL      & 16  & $404$    & $9\,944$  & $10\,041$ & $11\,421$ \\
        \texttt{AMDP} & 128 & $2\,738$ & $991$     & $13\,097$ & $1\,117$ \\
        rev.\ KL      & 128 & $2\,751$ & $71\,659$ & $13\,129$ & $82\,101$ \\
        \bottomrule
    \end{tabular}
    \captionof{table}{
        Runtime of CartpoleBalance and G1JoystickRoughTerrain training iteration in ms.
        \emph{Env.} is the rollout; \emph{Upd.} denotes the network training time;
        16 and 128 diffusion steps are compared.
    }
    \label{tab:runtime-narrow}  
\end{minipage}
\end{figure*}

\paragraph{Algorithm ablations.}
To demonstrate the effectiveness of our method, we systematically ablate all changes on both the MuJoCo Playground DMC and Humanoid tasks.

We start by analyzing whether the specific SDE design choice (Dirac prior $\mu_0$) is responsible for \texttt{AMDP}'s performance. 
For this, we optimize the diffusion policy with DIME's reverse KL objective and similarly do not leverage a trust region constraint. 
This provides a direct comparison against DIME, which uses a different SDE structure \cite{richter2023improved}, isolating the design choice.
Fig.~\ref{fig:ablations_reverse_kl_dmc}~and~\ref{fig:ablations_reverse_kl_humanoid} show matching performance, although DIME shows reduced variance. 
Furthermore, we switch to an Ornstein-Uhlenbeck (OU) process (see Appendix~\ref{appendix: OU process} for details) with our loss (\texttt{AMDP-OU}) and see comparable performance.
This indicates that \texttt{AMDP}'s performance benefit is not inherent to the chosen SDE. In contrast, it could potentially be improved by utilizing less noise-demanding SDEs.

Next, we analyze the effect of different squashing functions on the performance in Fig.~\ref{fig:ablations_squashing_dmc}~and~\ref{fig:ablations_squashing_humanoid}.
Hyperbolic tangent squashing performs similarly on the low-dimensional control tasks, indicating good performance for any approximately bang-bang policy.
However, for action selection in the high-dimensional humanoid environments, the error function improves performance significantly, which indicates that the numerical properties (see Fig. \ref{table: error function plots} and Appendix \ref{appendix: action squashing}) benefit optimization within the action bounds.

Moreover, we analyze the need for a trust region control during the policy update in diffusion-based on-policy RL by disabling the trust region and by varying the different trust region constraints $\epsilon$ in Fig.~\ref{fig:ablations_tr_size_dmc}~and~\ref{fig:ablations_tr_size_humanoid}. 
In both cases, the performance drops significantly without the KL bound ($\epsilon=\infty$), whereas smaller bounds ($\epsilon=0.01$) converge more slowly. 

Lastly, we utilize the simulation-free nature of \texttt{AMDP}'s objective and evaluate inference diffusion step counts up to 128 in Fig.~\ref{fig:ablations_diff_steps_dmc} and in Fig.~\ref{fig:ablations_diff_steps_humanoid}.
Additionally, we sample multiple intermediate times within \eqref{eq: Lagrangian loss} (\emph{batch repetition}) for each stored denoised action, which is feasible due to the expectation under the reciprocal projection.
This creates a training signal for a broad range of time, just as the chain backpropagation of the reverse KL.
Note that this data generation does not incur further SDE simulation or evaluation of the $Q$-function, making it extremely cheap to compute. 
Using significantly fewer steps can be beneficial in terms of sample efficiency, at the cost of a small final performance degradation in the MuJoco Playground's control environments (Fig.~\ref{fig:ablations_diff_steps_dmc}), while the complex humanoid tasks require more diffusion steps and improve slightly with the added precision (Fig.~\ref{fig:ablations_diff_steps_humanoid}).

\begin{figure*}[t]
    \centering
    \small
    \legendDiffusionStepsAndReverseKL
    \vspace{0.35em}

    \makebox[\textwidth][c]{%
    \begin{subfigure}{0.245\textwidth}
        \centering
        \includegraphics[width=\linewidth]{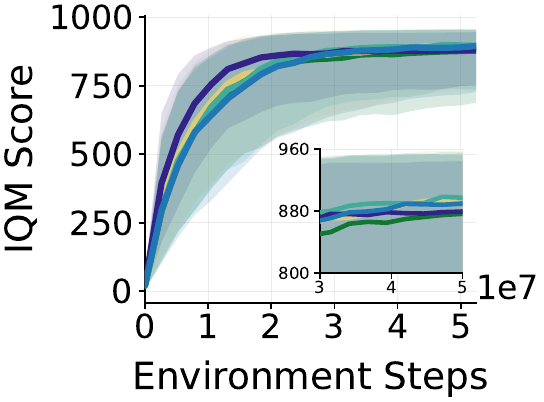}
        \caption{Diff. steps, DMC}
        \label{fig:ablations_diff_steps_dmc}
    \end{subfigure}
    \hspace{0.005\textwidth}
    \begin{subfigure}{0.245\textwidth}
        \centering
        \includegraphics[width=\linewidth]{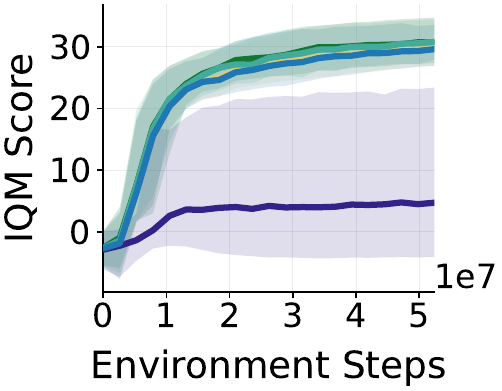}
        \caption{Diff. steps, humanoid}
        \label{fig:ablations_diff_steps_humanoid}
    \end{subfigure}
    \hspace{0.005\textwidth}
    \begin{subfigure}{0.245\textwidth}
        \centering
        \includegraphics[width=\linewidth]{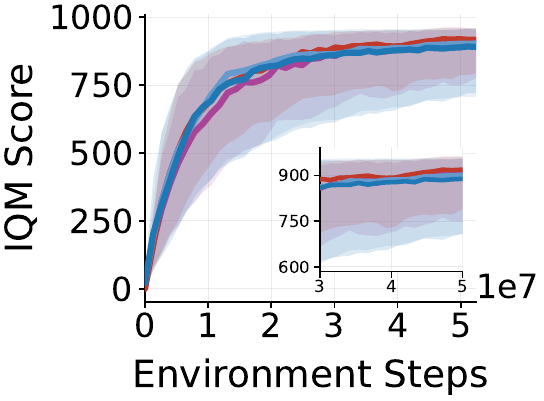}
        \caption{Objective, DMC}
        \label{fig:ablations_reverse_kl_dmc}
    \end{subfigure}
    \hspace{0.005\textwidth}
    \begin{subfigure}{0.245\textwidth}
        \centering
        \includegraphics[width=\linewidth]{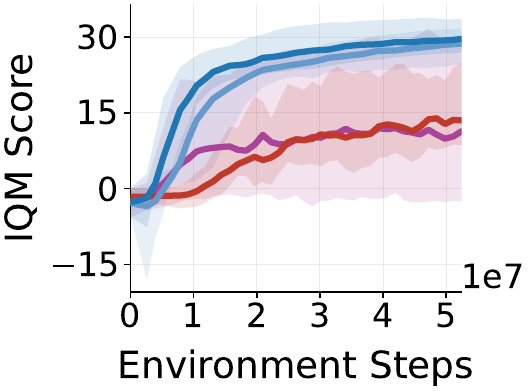}
        \caption{Objective, humanoid}
        \label{fig:ablations_reverse_kl_humanoid}
    \end{subfigure}%
    }

    \caption{\textbf{Ablations on diffusion discretization and policy improvement objective.}
    Left pair varies diffusion steps and reciprocal batch repetitions (one, unless noted); right pair compares the default adjoint-matching loss against reverse-KL optimization and DIME. Results are aggregated over MuJoCo Playground DMC and humanoid tasks.}
    \label{fig:ablations_diffusion_and_reverse_kl}
    \vspace{-0.1cm}
\end{figure*}

\begin{figure*}[t]
    \centering
    \small
    \legendTrustRegionSizeAndSquashing

    \vspace{0.35em}

    \makebox[\textwidth][c]{%
    \begin{subfigure}{0.245\textwidth}
        \centering
        \includegraphics[width=\linewidth]{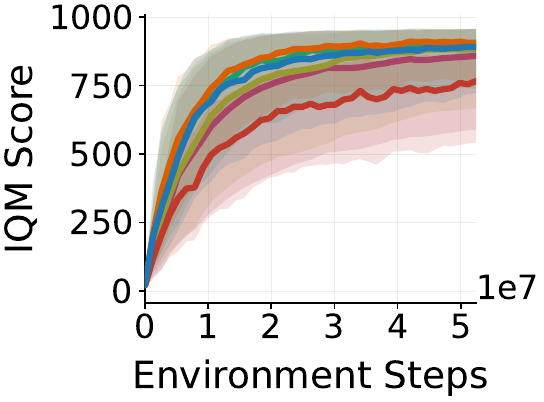}
        \caption{TR size, DMC}
        \label{fig:ablations_tr_size_dmc}
    \end{subfigure}
    \hspace{0.005\textwidth}
    \begin{subfigure}{0.245\textwidth}
        \centering
        \includegraphics[width=\linewidth]{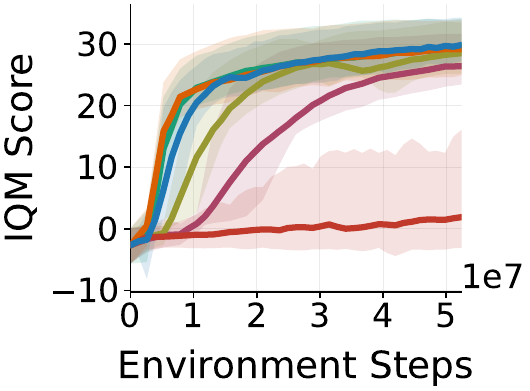}
        \caption{TR size, humanoid}
        \label{fig:ablations_tr_size_humanoid}
    \end{subfigure}
    \hspace{0.005\textwidth}
    \begin{subfigure}{0.245\textwidth}
        \centering
        \includegraphics[width=\linewidth]{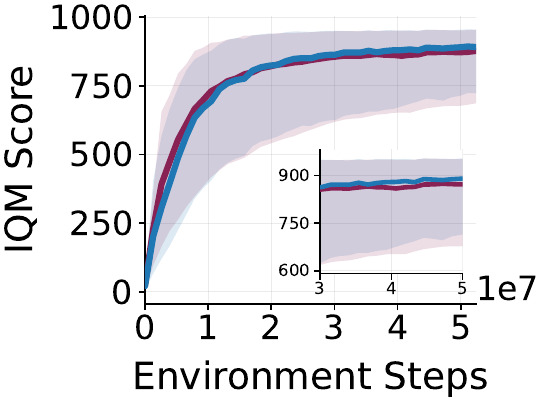}
        \caption{Squashing, DMC}
        \label{fig:ablations_squashing_dmc}
    \end{subfigure}
    \hspace{0.005\textwidth}
    \begin{subfigure}{0.245\textwidth}
        \centering
        \includegraphics[width=\linewidth]{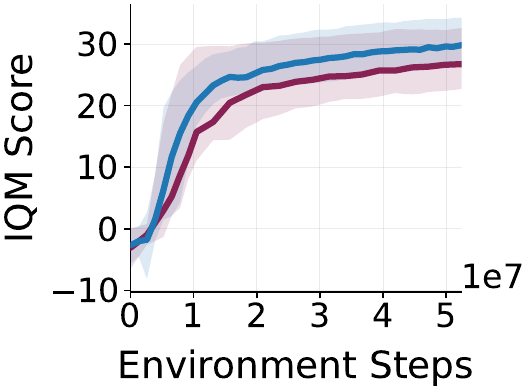}
        \caption{Squashing, humanoid}
        \label{fig:ablations_squashing_humanoid}
    \end{subfigure}%
    }

    \caption{\textbf{Ablations on trust-region size and action squashing.}
    Left pair varies the trust-region bound $\epsilon_{\mathrm{KL}}$, including the no-trust-region setting $\epsilon_{\mathrm{KL}}=\infty$.
    Right pair compares error-function squashing against tanh squashing.
    Results are aggregated over MuJoCo Playground DMC and humanoid tasks.}
    \label{fig:ablations_tr_size_and_squashing}
\end{figure*}

\textbf{Actor model scaling capability and wallclock efficiency.} \texttt{AMDP}'s important benefit is that the gradients do not need to be backpropagated through the diffusion process, allowing efficient training and scaling the training to bigger and more complex models. 
We investigate this feature by training a large diffusion policy model (106M parameters, i.e. approximately 300 times larger) as a proof of concept and show the results in Appendix~\ref{appendix: further experimental results}.
Additionally, we report the wall-clock time for the training of the different algorithms with matched critic and actor capacities for \texttt{AMDP}, for the reverse KL objective without a trust region and the highly fast and performant Gaussian baseline REPPO in Table~\ref{tab:runtime-narrow}. 
Our method shows only a small constant overhead of roughly 10\% for all updates compared to REPPO, independently of the number of diffusion steps.
This equates to about 100 milliseconds per iteration, which becomes unnoticeable in relation to the rollout time of about ten seconds for the (still very fast) MJX G1JoystickRoughTerrain environment.
In contrast, the reverse KL objective is already ten times slower with 16 diffusion steps and approximately doubles the runtime, while 128 diffusion steps incur a 72-80x slower update.

\section{Conclusion}
In this work, we introduced Adjoint Matching Diffusion Policy (\texttt{AMDP}), a novel and highly scalable framework for training diffusion-based policies in continuous reinforcement learning. By formulating maximum entropy RL as a stochastic optimal control problem, we leveraged reciprocal adjoint matching to train expressive diffusion policies in a simulation-free manner. This approach effectively circumvents the prohibitively high memory costs associated with backpropagating through the diffusion process and the theoretical shortcomings of truncated Langevin dynamics. Furthermore, we proposed error function action squashing, which elegantly simplifies the training objective and ensures numerical stability, alongside a trust-region update mechanism that guarantees stable policy iteration in online settings. Our empirical results demonstrate that \texttt{AMDP} achieves competitive performance across diverse and complex continuous control benchmarks---including locomotion and contact-rich object manipulation---while significantly reducing computational overhead. Ultimately, \texttt{AMDP} provides a theoretically grounded, computationally efficient, and robust pathway for deploying highly expressive generative policies in real-world RL applications.

\newpage
\bibliographystyle{plainnat}
\bibliography{neurips_2026}

\newpage
\appendix

\startcontents[appendix]

\printcontents[appendix]{}{1}{\section*{Appendix}}
\clearpage

\section{Technical details}
\label{appendix: technical details}

\subsection{Additional notation}
\label{appendix: additional notation}
We denote by $\mathcal{U} \subset C(\R^{d} \times \mathcal{S} \times [0,1] ; \R^{d})$ the set of admissible controls and by $\mathcal{P}$ the set of all probability measures on $C([0,1],\R^{d})$. We define the state-conditioned path space measure $\Pi(\cdot|s) \in \mathcal{P}$ as the law of an $\R^{d}$-valued stochastic process $X = (X_\tau)_{\tau\in[0,1]}$ and we denote by $\Pi_\tau(\cdot|s)$ the marginal distribution at time $\tau$.
For vectors $v_1, v_2 \in \R^{d}$, we denote by $\|v\|$ the Euclidean norm and by $v_1 \cdot v_2$ the Euclidean inner product.
For a sufficiently smooth function $f \colon \R^{d} \times [0,1] \to \R$, we denote by $\nabla_x f$ its gradient w.r.t.\@ the spatial variables $x \in \R^{d}$. 
We denote by $\mathcal{N}(\mu,\Sigma)$ a multivariate normal distribution with mean $\mu \in \R^{d}$ and covariance matrix $\Sigma \in \R^{d \times d}$. For random variables $X_1$, $X_2$, we denote by $\E[X_1]$ and $\V[X_1]$ the expectation and variance of $X_1$ and by $\E[X_1|X_2]$ the conditional expectation of $X_1$ given $X_2$.

\subsection{Technical assumptions} 
\label{appendix: technical assumptions}
 
Throughout our work, we adapt standard assumptions from stochastic optimal control to ensure all objects considered are well-defined \cite{nuesken2021solving, domingo2024stochastic, blessing2025trust}. Namely, we assume that:
\begin{enumerate}
    \item The set $\mathcal{U}$ of \textit{admissible controls} is given by
    \begin{align}
        \mathcal{U} = \{ u \in C^1(\R^{d} \times \mathcal{S} \times [0,1] ; \R^{d}) \mid \exists C > 0, \, \forall (x,s,\tau), \, \|u(x,s,\tau)\| \leq C(1+\|x\|) \}. 
    \end{align}
    \item The drift $b$ and diffusion coefficient $\sigma$ are continuous. In our primary setting, we assume a zero base drift ($b=0$) and a time-dependent, state-independent diffusion coefficient $\sigma(\tau) > 0$ for all $\tau \in [0, 1]$, which trivially satisfies the standard linear growth and ellipticity conditions.
\end{enumerate}

\subsection{Useful identities} 
\label{appendix: useful identities}

\begin{definition}[Controlled SDEs]
Let $u \in \mathcal{U}$ be a control function. Conditioned on an MDP state $s \in \mathcal{S}$, we consider the controlled and uncontrolled (reference) stochastic processes defined via the SDEs:
\begin{align}
    \dd X_\tau &= \sigma(\tau) u(X^u_\tau, s, \tau) \dd \tau + \sigma(\tau) \dd B_\tau, \quad && X_0 = 0, \\
    \dd X_\tau &=  \sigma(\tau) \dd B_\tau, \quad && X_0 = 0,
\end{align}
where $X \sim \Pi^u(\cdot|s)$ and $X \sim \Pi^0(\cdot|s)$, with $\Pi^u$ and $\Pi^0$ denoting the respective path space measures, and $B_\tau$ is a standard Brownian motion.
\end{definition}

\begin{theorem}[Girsanov's theorem for path measures] \label{theorem: girsanov}
Let $u \in \mathcal{U}$ and assume $\sigma(\tau)$ is invertible. The Radon-Nikodym derivative of the controlled measure $\Pi^u(\cdot|s)$ with respect to the reference measure $\Pi^0(\cdot|s)$, evaluated along a path $X$, is given by:
\begin{align}
\label{eq: girsanov}
    \log \frac{\dd \Pi^{u}}{\dd \Pi^0}(X^u|s) = \int_0^1 u(X_\tau,s,\tau) \cdot \dd B_\tau + \frac{1}{2}\int_0^1 \|u(X_\tau,s,\tau)\|^2 \dd \tau.
\end{align}
\end{theorem}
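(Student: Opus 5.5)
The plan is to derive the formula from the classical Girsanov theorem applied on the reference side, and then rewrite the resulting density along trajectories of the controlled process. Throughout, $\sigma(\tau)$ is a positive scalar, so $\sigma(\tau)^{-1}$ exists and is continuous on $[0,1]$.

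\textbf{Step 1: exponential martingale.} Let $X$ solve the reference SDE $\dd X_\tau=\sigma(\tau)\dd W_\tau$, $X_0=0$, under a probability $\mathbb{P}$ with Brownian motion $W$. Define
\begin{equation*}
Z_t \coloneqq \exp\Bigl(\int_0^t u(X_\tau,s,\tau)\cdot \dd W_\tau-\tfrac12\int_0^t\|u(X_\tau,s,\tau)\|^2\dd\tau\Bigr).
\end{equation*}
This is a positive local martingale. I must show it is a true martingale, i.e.\ $\E[Z_1]=1$.

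\textbf{This is the main obstacle.} Under the linear growth assumption on $\mathcal{U}$, Novikov's condition need not hold globally on $[0,1]$, since $\|u\|^2$ may grow like $\|X\|^2$. I would first try the standard Beneš-type argument.
\begin{itemize}
\item Split $[0,1]$ into finitely many short intervals $[t_{k},t_{k+1}]$.
\item On each interval, $\E\bigl[\exp\bigl(\tfrac12\int_{t_k}^{t_{k+1}}C^2(1+\|X_\tau\|)^2\dd\tau\bigr)\bigr]<\infty$ once $t_{k+1}-t_k$ is small enough. This uses that $X$ is Gaussian with bounded variance $\int_0^1\sigma^2$, together with Fernique-type exponential square integrability of $\sup_\tau\|X_\tau\|$.
\item Novikov therefore holds on each piece, and the tower property chains the pieces to give $\E[Z_1]=1$.
\end{itemize}
If this fails, the fallback is localization. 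Use the stopping times $T_n=\inf\{\tau:\|X_\tau\|\ge n\}$, apply Girsanov on $[0,T_n]$, and show $\mathbb{Q}(T_n<1)\to0$ using non-explosion of the controlled SDE, which follows from linear growth.

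\textbf{Step 2: identify the tilted law.} Set $\dd\mathbb{Q}=Z_1\dd\mathbb{P}$. By Girsanov, $\tilde B_\tau\coloneqq W_\tau-\int_0^\tau u(X_r,s,r)\dd r$ is a $\mathbb{Q}$-Brownian motion. Hence
\begin{equation*}
\dd X_\tau=\sigma(\tau)u(X_\tau,s,\tau)\dd\tau+\sigma(\tau)\dd\tilde B_\tau,\qquad X_0=0 .
\end{equation*}
Because $u\in C^1$ with linear growth, this SDE has pathwise, hence weak, uniqueness. Therefore the law of $X$ under $\mathbb{Q}$ is $\Pi^u(\cdot|s)$.

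Next, since $W_\tau=\int_0^\tau\sigma(r)^{-1}\dd X_r$ is a functional of the path, $Z_1$ is a measurable function of $X$ alone. This gives
\begin{equation*}
\log\frac{\dd\Pi^u}{\dd\Pi^0}(X|s)=\int_0^1 u(X_\tau,s,\tau)\cdot\sigma(\tau)^{-1}\dd X_\tau-\tfrac12\int_0^1\|u(X_\tau,s,\tau)\|^2\dd\tau .
\end{equation*}

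\textbf{Step 3: evaluate along controlled paths.} Take $X=X^u$ driven by $B$, so that $\sigma(\tau)^{-1}\dd X^u_\tau=u\,\dd\tau+\dd B_\tau$. Substituting into Step 2, the stochastic integral becomes $\int_0^1 u\cdot\dd B_\tau+\int_0^1\|u\|^2\dd\tau$. Combining with $-\tfrac12\int_0^1\|u\|^2\dd\tau$ yields exactly \eqref{eq: girsanov}.

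The only subtlety in this step is that the stochastic integral is defined pathwise only up to null sets. This is harmless because $\Pi^u\sim\Pi^0$ by Step 2.
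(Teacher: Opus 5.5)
Your proof is correct. It is the standard argument: an exponential martingale under $\Pi^0$, a Girsanov drift change plus uniqueness in law to identify $\Pi^u$, then rewriting $\int u\cdot\sigma^{-1}\dd X$ along controlled paths. The paper gives no argument of its own and only cites {\O}ksendal and N\"usken--Richter, whose proofs are essentially this one, except that those sources usually assume Novikov's condition outright. Your Step 1 rightly notes that Novikov can fail globally under mere linear growth, and closes that gap with the Bene\v{s}-type local Novikov chaining (or localization plus non-explosion), which is what the class $\mathcal{U}$ actually requires.
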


\begin{proof}
    See \cite{oksendal2003stochastic, nusken2021solving}.
\end{proof}

Taking the expectation of \eqref{eq: girsanov} over $\Pi^u$ directly yields the Kullback-Leibler divergence between the path measures, which forms the basis for the entropy lower bound and adjoint matching objectives:
\begin{equation}
    \KL(\Pi^u(\cdot|s) \mid \Pi^0(\cdot|s)) = \E_{X \sim \Pi^u}\left[ \int_0^1 \frac{1}{2}\|u(X_\tau, s,\tau)\|^2 \dd \tau \right].
\end{equation}

\section{Proofs}
\label{appendix: proofs}

\paragraph{Proof for Proposition~\ref{prop: entropy lower bound}}
The proof relies on the relationship between path-space measures and their terminal marginals. First, by Girsanov's theorem (see~\Cref{theorem: girsanov}), the Kullback-Leibler (KL) divergence between the path-space measure of the controlled process $\Pi^u(\cdot|s)$ and the uncontrolled reference process $\Pi^0(\cdot|s)$ over the interval $[0, 1]$ is given by the expected kinetic energy of the control drift:
\begin{equation}
    \KL(\Pi^u(\cdot|s) \mid \Pi^0(\cdot|s)) = \E_{X \sim \Pi^u}\left[ \int_0^1 \frac{1}{2}\|u(X_\tau, s,\tau)\|^2 \dd \tau \right].
\end{equation}
By the data processing inequality (or the chain rule for KL divergence) \cite{cover1999elements}, the KL divergence between the full path measures strictly upper bounds the KL divergence between their terminal marginal distributions at $\tau = 1$:
\begin{equation}
\label{eq: kl marginal bound}
    \KL(\Pi^u_1(\cdot|s) \mid \Pi^0_1(\cdot|s)) \leq \KL(\Pi^u(\cdot|s) \mid \Pi^0(\cdot|s)).
\end{equation}
Expanding the left-hand side of \eqref{eq: kl marginal bound}, we have:
\begin{align}
    \KL(\Pi^u_1(\cdot|s) \mid \Pi^0_1(\cdot|s)) &= \E_{X_1 \sim \Pi^u_1}\left[ \log \Pi^u_1(X_1|s) - \log \Pi^0_1(X_1|s) \right] \nonumber \\
    &= -\mathcal{H}(\Pi^u_1(\cdot|s)) - \E_{X_1 \sim \Pi^u_1}\left[ \log \Pi^0_1(X_1|s) \right],
\end{align}
where $\mathcal{H}(\Pi^u_1(\cdot|s))$ is the differential entropy of the terminal latent state. Substituting this expansion into \eqref{eq: kl marginal bound} and rearranging the terms yields a lower bound on the latent entropy:
\begin{equation}
\label{eq: latent entropy bound}
    \mathcal{H}(\Pi^u_1(\cdot|s)) \geq - \E_{X \sim \Pi^u}\left[ \int_0^1 \frac{1}{2}\|u(X_\tau, s,\tau)\|^2 \dd \tau + \log \Pi^0_1(X_1|s) \right].
\end{equation}
Next, we relate this latent entropy to the entropy of the executed policy. The action $a$ is obtained via the bijective squashing transformation $a = f(X_1)$. By the change of variables formula for differential entropy, the entropy of the squashed policy $\pi^u(a|s)$ is given by:
\begin{equation}
    \mathcal{H}(\pi^u(\cdot|s)) = \mathcal{H}(\Pi^u_1(\cdot|s)) + \E_{X_1 \sim \Pi^u_1}\left[ \log \big|\det J_f(X_1)\big| \right].
\end{equation}
Adding the expected log-determinant of the Jacobian to both sides of the inequality \eqref{eq: latent entropy bound} yields:
\begin{align}\label{eq: squashed entropy bound}
    \mathcal{H}(\pi^u(\cdot|s)) &\geq - \E_{X \sim \Pi^u}\left[ \int_0^1 \frac{1}{2}\|u(X_\tau, s,\tau)\|^2 \dd \tau + \log \Pi^0_1(X_1|s) \right] + \E_{X_1 \sim \Pi^u_1}\left[ \log \big|\det J_f(X_1)\big| \right] \nonumber \\
    &= - \E_{X \sim \Pi^u}\left[ \int_0^1 \frac{1}{2}\|u(X_\tau, s,\tau)\|^2 \dd \tau + \log \frac{\Pi^0_1(X_1|s)}{\big|\det J_f(X_1)\big|} \right] \coloneqq \mathcal{L}^s_{\mathrm{ENT}}(u).
\end{align}
This completes the proof.

\begin{remark}[Tightness of the bound]
The lower bound is tight (i.e., holds with strict equality) if and only if $\KL(\Pi^u_1 \mid \Pi^0_1) = \KL(\Pi^u \mid \Pi^0)$. From the chain rule of KL divergence, this equality occurs exactly when the conditional path distributions match: $\Pi^u_{|1}(X|X_1, s) = \Pi^0_{|1}(X|X_1, s)$. In this regime, the controlled process $\Pi^u$ factors into the terminal marginal and the uncontrolled bridge process, meaning the optimal control solution forms a reciprocal class with the reference process.
\end{remark}

\begin{remark}[Relation of Kinetic Energy and Entropy]\label{remark:kinetic energy and entropy}
Depending on the squashing function~$f$, a policy with higher kinetic energy can sample $X_1$ with such distribution that the term $\log \nicefrac{\Pi^0_1(X_1|s)}{|\det J_f(X_1)|}$ decreases on average and the entropy increases.
However, for the \emph{scaled error function} squashing, as defined in \eqref{eq: erf}, the kinetic energy directly upper bounds the entropy loss compared to the uniform distribution entropy at zero-control initialization (see Remark \ref{remark: erf entropy lower bound}).
\end{remark}

\paragraph{Proof of Proposition~\ref{prop: fixed point preservation}.}
Let $\Phi(u_{\mathrm{old}})$ be the unique minimizer of the unconstrained loss $\mathcal{L}^s_{\mathrm{AMDP}}(u)$ evaluated at $u_{\mathrm{old}}$. Due to the quadratic nature of the AMDP and trust-region losses, the minimizer of $\mathcal{L}^s_\mathrm{LAG}(u, \lambda)$ is a convex combination:
\begin{equation}
    u^* = \frac{1}{1+\lambda} \Phi(u_{\mathrm{old}}) + \frac{\lambda}{1+\lambda} u_{\mathrm{old}}.
\end{equation}
Setting $c = \frac{1}{1+\lambda} \in (0, 1)$, we obtain the update rule $u_{i+1} = c \Phi(u_i) + (1-c) u_i$. This corresponds to a damped fixed-point iteration. Since $\Phi$ is an operator whose unique fixed-point is $u^*_{\mathrm{AMDP}}$, and the iteration is a convex combination of the current iterate and the target mapping, it follows that $u_{i+1} = u_i$ if and only if $u_i = \Phi(u_i)$, which occurs exactly at the optimal control.

\section{Broader impact and limitations}
\label{appendix: broader_impact_limitations}

\paragraph{Broader impact.}
Our work advances the fundamental capabilities of reinforcement learning by making highly expressive diffusion policies computationally accessible and stable to train. On a positive note, these improvements can significantly benefit the development of advanced robotic systems, enabling them to solve complex, high-dimensional, and contact-rich tasks---such as those found in manufacturing, healthcare assistance, and household automation---more reliably. Furthermore, transitioning from simulation-heavy training pipelines to our simulation-free reciprocal adjoint matching reduces the energetic and computational footprint associated with training deep reinforcement learning models. 

However, as with any general-purpose advancement in autonomous control, there are potential negative societal impacts. Highly capable autonomous systems could accelerate the displacement of human labor in certain industrial or logistical sectors. Furthermore, the ability to efficiently train complex, multi-modal control policies is a dual-use technology, potentially being integrated into autonomous weapons, drones, or surveillance systems without adequate human oversight. We encourage the machine learning and robotics communities to continue developing rigorous safety, alignment, and ethical deployment frameworks to ensure these powerful control algorithms are utilized responsibly.

\paragraph{Limitations}
\label{appendix: limitations} 
A primary limitation of our current framework lies in its initialization strategy. To ensure the memoryless property required for tractable adjoint matching, \texttt{AMDP} restricts the reference process to a deterministic initial condition, $X_0=0$. In contrast, existing diffusion policies pre-trained on offline behavioral data typically employ a stochastic Gaussian prior, $\mu_0=\mathcal{N}(0,I)$ \cite{reuss2023goal, chi2025diffusion}. This architectural discrepancy complicates the integration of our method into offline-to-online RL pipelines, where one might wish to pre-train a diffusion policy on static datasets before fine-tuning it via \texttt{AMDP} in an online environment. A promising direction to bridge this gap is the adoption of memoryless noise schedules \cite{domingoenrich2025adjoint}, which would seamlessly extend our approach's compatibility to standard flow-matching or denoising diffusion models. We leave the exploration of this offline-to-online integration for future work.

\section{Related works}
\label{appendix: extended related work}

Diffusion models \cite{sohl2015deep,song2020denoising, ho2020denoising,karras2022elucidating} have been generally used in the context of Reinforcement Learning as trajectory planners \citep{JannerDTL22,AjayDGTJA23}, distributional critic models \citep{agrawalla2026floq,dong2026value}, synthetic data generation in the maximum entropy RL setting for Gaussian policies \citep{ishfaq2025langevin} or as expressive policy representations \citep{WangHZ23,Kang0DPY23,LuCEP2023,Mao0Z0Z24,FangLZWJ25,PsenkaEA024,ding2024diffusionbased,FangLZWJ25,celik2025dime,RenLAS0MBDS25,ding2026genpo,ma2025reinforcement,mcallister2026flow,lv2026flowbased,zhang2026reinflow,ZhangZG25,ma2025efficient,li2026qlearning,dong2026meanflowpolicyoptimization,qiu2026scoreflowcompletedistributionalcontrol,gong2026proximalpolicyoptimizationpath,lv2026flacmaximumentropyrl,goo2022knowboundariesnecessityexplicit,ding2024consistency,dong2025maximum,black2024training,liu2026flowgrpo,wang2025enhanceddaceralgorithmhigh,chen2025onestepflowpolicymirror,gao2025behaviorregularized,jain2025sampling,zhang2026sac,gao2026flowmatchingpolicyentropy,zhong2026reparameterizationflowpolicyoptimization}.
Here, we provide an overview of existing diffusion-based methods categorized into offline and online RL methods.

\subsection{Diffusion Policies in Online Reinforcement Learning.} 
Recently, several approaches have been proposed for training diffusion models within the online reinforcement learning paradigm. In the following, we categorize these existing methods into distinct groups based on their underlying optimization algorithms. 

\textbf{Reverse Kullback-Leibler divergence.} Minimizing the reverse KL divergence between the diffusion policy and the target distribution naturally leads to the maximum entropy (or entropy-regularized) RL formulation \cite{haarnoja2017reinforcement,haarnoja2018soft}. Early approaches optimize this objective by directly differentiating the expected state-action value, which inherently requires computationally expensive backpropagation through the entire diffusion process \cite{WangHZ23,wang2024diffusion, celik2025dime, lv2026flacmaximumentropyrl, ding2024consistency, zhang2026sac, lv2026flowbased, wang2025enhanceddaceralgorithmhigh}. Alternatively, to avoid backpropagating through time, policy gradient (REINFORCE) methods employ a different gradient estimator for the reverse KL that relies on advantage estimation rather than path-wise derivatives \cite{mcallister2026flow, RenLAS0MBDS25, black2024training, liu2025flow, gong2026proximalpolicyoptimizationpath}. Particularly in the on-policy domain \cite{zhang2026reinflow, ding2026genpo}, these gradient estimators necessitate trust regions to stabilize updates and prevent premature convergence \cite{peters2010relative, schulman2015trust, otto2021differentiable}, often relying on a  PPO-style clipped loss \cite{schulman2017proximal}. While AMDP is similarly grounded in the maximum entropy RL formulation, it substantially differs from both paradigms. Unlike the former, AMDP's policy improvement step completely bypasses backpropagation through the diffusion chain, making it highly memory-efficient and scalable. Unlike the latter, AMDP operates directly on a learned state-action value function \cite{voelcker2026relative} rather than advantage estimators, and stabilizes policy updates via a constrained trust-region optimization scheme that is rigorously tailored to the underlying adjoint matching objective.

\textbf{Importance weighting.} An alternative to avoid backpropagating through the diffusion process is the supervised matching objective that, given an action target, updates the parameters based on the error between the predicted action and the target action. 
Because ground truth target actions are unavailable in online RL, existing methods leverage previously generated action samples as targets and weight the matching loss with the state-action value \cite{ding2024diffusionbased} or the exponential of the value \cite{ma2025efficient, chen2025onestepflowpolicymirror, dong2025maximum, gao2026flowmatchingpolicyentropy}. 
Because these objectives are based on importance weights, they usually suffer from high variance for complex or high-dimensional problems \cite{snyder2008obstacles}. 
AMDP's policy improvement objective conceptually differs from these prior objectives in that we match the score of the state-action value function, circumventing an importance-weighted loss.

\textbf{$Q$-score methods.}\quad Methods utilizing the $Q$-score, $\nabla_a Q(s,a)$, typically fall into two sub-categories. The first leverages Langevin dynamics to sample from the Boltzmann distribution \cite{ki2026directsoftpolicysamplinglangevin} or augment existing objectives \cite{wang2025enhanceddaceralgorithmhigh}. Langevin dynamics was also used to generate posterior samples from the $Q$-function for a distributional RL approach for training a Gaussian policy \cite{ishfaq2025langevin}. However, as previously noted, these approaches only converge asymptotically, offering limited theoretical guarantees under practical, finite-time truncation. The second category employs  `matching'-based regression objectives to the $Q$-score, which are often motivated to sample high $Q$-function values \cite{PsenkaEA024}, but do not guarantee sampling from the correct action distribution. Moreover, \cite{jain2025sampling, akhound-sadegh2024iterated} use biased regression targets to guide generation toward high-value regions. These methods often disregard diffusion time-dependencies or suffer from biased score estimation, lacking strict guarantees of sampling from the true target action distribution. Closest to our work, \cite{li2026qlearning} also applies an Adjoint Matching objective for offline-to-online RL. However, their formulation requires costly numerical calculation of the adjoint state at each step, preventing simulation-free optimization. In contrast, AMDP leverages the reciprocal adjoint loss \cite{havens2025adjoint}, enabling a theoretically grounded, highly efficient, and completely simulation-free policy learning process.

\subsection{Diffusion Policies in Offline Reinforcement Learning. } In the Offline and Offline to Online Reinforcement Learning setting, a ground truth and reward-labeled data is generally available to pretrain the diffusion model using supervised learning techniques. 
Here, many approaches \citep{JannerDTL22,Chen0Y0023,IDQL_HansenEstruch2023,LuCEP2023,Mao0Z0Z24,DIPOYang2023,DDiffPG_Li2024,RenLAS0MBDS25,zhang2026reinflow,park2025flow,li2026reinforcement,li2026qlearning,AjayDGTJA23,goo2022knowboundariesnecessityexplicit,chen2024diffusion,zhan2026mean,chen2024score,dong2026expo,gao2025behaviorregularized,zhang2026sac} train an imitation learning policy with behavioral cloning to regularize and stabilize the offline RL optimization.
Another line of work extends the dataset with new transitions \citep{park2025flow,li2026reinforcement,zhan2026mean} or use rejection sampling \citep{Chen0Y0023,IDQL_HansenEstruch2023,Kang0DPY23,Mao0Z0Z24,park2025flow,li2026reinforcement,ma2025efficient,dong2026meanflowpolicyoptimization,goo2022knowboundariesnecessityexplicit,zhan2026mean,dong2026expo,gao2026flowmatchingpolicyentropy}.

In the offline and offline-to-online setting, where a dataset of demonstrations is available, many approaches \citep{JannerDTL22,Chen0Y0023,IDQL_HansenEstruch2023,LuCEP2023,Mao0Z0Z24,DIPOYang2023,DDiffPG_Li2024,RenLAS0MBDS25,zhang2026reinflow,park2025flow,li2026reinforcement,li2026qlearning,AjayDGTJA23,goo2022knowboundariesnecessityexplicit,chen2024diffusion,zhan2026mean,chen2024score,dong2026expo,gao2025behaviorregularized,zhang2026sac} utilize generative modeling to train an imitation learning policy with behavioral cloning.
Further methods combine the behaviour policy with a trained Gaussian \citep{ishfaq2025langevin,dong2026expo,li2026qlearning} or a one-step policy \citep{Kang0DPY23,park2025flow,li2026reinforcement,li2026qlearning,chen2024diffusion,ChenLZ24} that optimizes the state-action value function.
A subset of methods \citep{RenLAS0MBDS25,zhang2026sac} use behavior cloning for initialization and then adjust that expressive policy, while other methods \citep{zhang2026reinflow,qiu2026scoreflowcompletedistributionalcontrol} finetune pretrained models purely online, which entirely skips the offline RL phase and therefore skips the critic learning.

\textbf{Exploration Control in Diffusion-Based policy representations.}
Various methods exist for controlling the policy's stochasticity, including principled Maximum Entropy RL \citep{celik2025dime,dong2026meanflowpolicyoptimization,ma2025efficient,ChenLZ24,dong2025maximum,lv2026flacmaximumentropyrl} and Entropy Regularized RL \citep{wang2024diffusion,dong2026expo,jain2025sampling,gao2026flowmatchingpolicyentropy}. 
Other works rely additive noise \citep{wang2024diffusion,zhang2026reinflow,ma2025efficient,zhang2026sac} and other regularization \citep{ding2024diffusionbased} to incentivize exploration.
For maximizing the return, however, the stochasticity is ideally reduced which lead to a rejection and Best-of-N sampling from a batch of proposal actions only during eval \citep{Chen0Y0023,IDQL_HansenEstruch2023,Kang0DPY23,Mao0Z0Z24,FangLZWJ25,dong2026meanflowpolicyoptimization,gao2025behaviorregularized}, or even during training for exploration or exploitation \citep{ding2024diffusionbased,li2026reinforcement,ma2025efficient,goo2022knowboundariesnecessityexplicit,dong2026expo}.

\section{Further algorithmic details}

\subsection{A brief introduction to stochastic optimal control}
\label{appendix: soc intro}

In this section, we provide a brief overview of the stochastic optimal control (SOC) framework as it applies to our formulation which is based on \cite{domingo2024stochastic,berner2022optimal,nusken2021solving}. SOC considers the problem of controlling a stochastic process $(X_\tau)_{\tau \in [0, 1]}$ governed by the SDE $\dd X_\tau = \sigma(\tau)u(X_\tau, s, \tau) \dd \tau + \sigma(\tau) \dd B_\tau$. The goal is to find an admissible control $u \in \mathcal{U}$ that minimizes the expected cumulative cost:
\begin{equation}
    \mathcal{J}(u) = \E_{X \sim \Pi^u}\left[ \int_0^1 \frac{1}{2}\|u(X_\tau, s,\tau)\|^2 \dd \tau + g(X_1, s)\right].
\end{equation}
Here, the integral of $\frac{1}{2}\|u\|^2$ represents the kinetic energy or control effort required to steer the system, and $g(x, s)$ acts as a state-conditioned terminal cost evaluated at $\tau=1$.

\paragraph{Equality with reverse KL minimization.} The SOC objective can be elegantly re-framed as an approximate inference problem \cite{domingoenrich2025adjoint,domingo2024stochastic}. From Girsanov's theorem (\Cref{theorem: girsanov}), the expected control effort corresponds exactly to the Kullback-Leibler (KL) divergence between the controlled path measure $\Pi^u$ and the reference measure $\Pi^0$. Consequently, the total cost can be written as:
\begin{equation}
    \mathcal{J}(u) = \KL(\Pi^u(\cdot|s) \mid \Pi^0(\cdot|s)) + \E_{X \sim \Pi^u}\left[ g(X_1, s) \right].
\end{equation}
If we construct a target path measure $\Pi^*(\cdot|s)$ such that its Radon-Nikodym derivative with respect to the reference process satisfies $\frac{\dd \Pi^*}{\dd \Pi^0}(X|s) \propto \exp(-g(X_1, s))$, minimizing the expected cost $\mathcal{J}(u)$ becomes mathematically equivalent to minimizing the reverse KL divergence between the controlled measure and the target measure:
\begin{equation}
    \min_{u} \mathcal{J}(u) \iff \min_{u} \KL(\Pi^u(\cdot|s) \mid \Pi^*(\cdot|s)).
\end{equation}
This directly follows from the fact that
\begin{align}
    \KL(\Pi^u(\cdot|s) \mid \Pi^*(\cdot|s)) & = \E_{X\sim\Pi^u}\left[\log \frac{\dd \Pi^u(X|s)}{\dd \Pi^*(X|s)}\right] 
    \\ & = \E_{X\sim\Pi^u}\left[\log \frac{\dd \Pi^u(X|s)}{\dd \Pi^0(X|s)} + \log \frac{\dd \Pi^0(X|s)}{\dd \Pi^*(X|s)}\right],
\end{align}
when evaluating the first term on the right using Girsanov's theorem (\Cref{theorem: girsanov}) and using that $\frac{\dd \Pi^*}{\dd \Pi^0}(X|s) \propto \exp(-g(X_1, s))$ by construction.

\paragraph{Optimal change of measure and reciprocal classes.} Because the target measure $\Pi^*$ is defined by re-weighting the uncontrolled process $\Pi^0$ purely based on the terminal boundary condition at $\tau=1$ \cite{dai1991stochastic,zhang2021path}, the internal transition dynamics remain unchanged. Therefore, the optimal change of measure preserves the bridge distributions. This means the optimal path space measure $\Pi^{u^*}(\cdot|s)$ forms a \textit{reciprocal class} \cite{leonard2013survey} of the uncontrolled process $\Pi^0(\cdot|s)$. Formally, it factors exactly into the optimal terminal marginal and the uncontrolled backward transitions:
\begin{equation}
    \Pi^{u^*}(X|s) = \Pi^{u^*}_1(X_1|s) \Pi^0_{|1}(X|X_1, s).
\end{equation}
This property is foundational to our approach, as it theoretically justifies the simulation-free reciprocal adjoint matching loss \cite{havens2025adjoint} utilized in AMDP, allowing us to generate intermediate states directly from $\Pi^0_{\tau|1}$ rather than simulating the full SDE.

\subsection{Schr\"odinger bridges and their connection to stochastic optimal control}
\label{appendix: sb connection}

The Schr\"odinger Bridge (SB) problem \cite{schrodinger1931umkehrung,leonard2013survey,shi2023diffusion,liu2025adjoint} seeks to find the most probable path distribution that transports a given source distribution $\mu_0$ to a target distribution over the time interval $\tau \in [0,1]$. Formally, this is cast as a distributionally constrained optimization problem as described in \eqref{eq: sb problem} and restated here for completeness:
\begin{equation}
    \min_{u} \ \E_{X\sim\Pi^u}\left[\int_0^1 \frac{1}{2}\|u(X_\tau, s,\tau)\|^2\dd \tau\right]\quad \mathrm{s.t.} \quad X_0 \sim \mu_0, \ X_1 \sim \pi^*(\cdot|s).
\end{equation}

\paragraph{The Schr\"odinger system.} The optimal control $u^*$ that solves the SB problem is uniquely characterized by the Schr\"odinger system \cite{leonard2013survey} The optimal drift takes the form $u^*(x, s, \tau) = \sigma(\tau) \nabla_x \log \varphi_\tau(x,s)$ \cite{liu2025adjoint}, where the forward and backward SB potentials $\varphi_\tau$ and $\hat{\varphi}_\tau$ satisfy the integral equations:
\begin{equation}
    \varphi_\tau(x,s) = \int \Pi^0_{1|\tau}(y \mid x,s) \varphi_1(y,s) \dd y, \quad \hat{\varphi}_\tau(x,s) = \int \Pi^0_{\tau|0}(x \mid y,s) \hat{\varphi}_0(y,s) \dd y,
\end{equation}
which are coupled by the boundary conditions $\varphi_0(x,s)\hat{\varphi}_0(x,s) = \mu(x)$ and $\varphi_1(x,s)\hat{\varphi}_1(x,s) = \Pi^*(x|s)$. Because of these tightly coupled boundaries, solving the Schr\"odinger system directly is generally challenging.

\paragraph{SOC formulation and memorylessness.} A key theoretical observation is that the SB problem can be equivalently formulated as an unconstrained Stochastic Optimal Control (SOC) problem \cite{chen2016relation,liu2025adjoint}. Specifically, the kinetic-optimal drift $u^*$ also minimizes the SOC objective with a specifically chosen terminal cost:
\begin{equation}
    \min_{u} \E_{X \sim \Pi^u}\left[ \int_0^1 \frac{1}{2}\|u(X_\tau, s,\tau)\|^2 \dd \tau + \log \frac{\hat{\varphi}_1(X_1,s)}{\Pi^*_1(X_1|s)}\right].
\end{equation}
To decouple this objective from the intractable backward potential $\hat{\varphi}_1$, we can enforce a \textit{memoryless condition} on the reference process. The memoryless condition fundamentally assumes statistical independence between $X_0$ and $X_1$ in the reference process, i.e., $\Pi^0_{0,1}(X_0,X_1)=\Pi^0_{0}(X_0)\Pi^0_{1}(X_1)$ where $\Pi^0_{0,1}$ is the joint distribution between $X_0$ and $X_T$. In our setting, this is achieved by choosing a deterministic initial state $X_0 = 0$ (i.e., $\mu_0(x) = \delta_0(x)$). Under this condition, the initial value function bias \cite{domingoenrich2025adjoint} is eliminated and the backward potential simplifies exactly to the terminal marginal of the uncontrolled process \cite{liu2025adjoint}, i.e., $\hat{\varphi}_1(X_1,s) = \Pi^0_1(X_1,s)$.

Substituting this simplification, the terminal cost elegantly becomes $\log \frac{\Pi^0_1(X_1,s)}{\Pi^*_1(X_1|s)}$. When defining our target distribution $\Pi^*_1(X_1|s)$ based on the squashed Boltzmann distribution of the $Q$-function, the SOC objective perfectly reduces to the one employed in \eqref{eq: path kl}:
\begin{equation}
    \min_u \E_{X \sim \Pi^u}\left[ \int_0^1 \frac{1}{2}\|u(X_\tau, s,\tau)\|^2 \dd \tau - \log \Pi^*_1(X_1|s) + \log \Pi^0_1(X_1,s) \right].
\end{equation}
Thus, the memoryless reduction cleanly sidesteps the need to solve the coupled Schr\"odinger system.

\subsection{Reference process and conditional marginals.}
\label{appendix: reference process}
In our setting, the reference (uncontrolled) process is defined by setting the drift to zero, i.e., $u=0$, and the initial condition to a deterministic $X_0 = 0$. The resulting SDE is a scaled Brownian motion $\dd X_\tau = \sigma(\tau) \dd B_\tau$. Because the increments of Brownian motion are independent and Gaussian, the marginal distribution at any time $\tau \in (0, 1]$ is given by:
\begin{equation}
    \Pi^0_\tau(x) = \mathcal{N}(x \mid 0, \bar{\sigma}^2_\tau), \quad \text{where} \quad \bar{\sigma}^2_\tau = \int_0^\tau \sigma^2(t) \dd t.
\end{equation}
Specifically, the terminal marginal distribution is $\Pi^0_1(x) = \mathcal{N}(x \mid 0, \bar{\sigma}^2_1)$. To perform simulation-free training via the reciprocal adjoint matching loss, we require the conditional distribution $\Pi^0_{\tau|1}(X_\tau | X_1)$. This can be derived using the property that the joint distribution of $(X_\tau, X_1)$ is Gaussian:
\begin{equation}
    \begin{pmatrix} X_\tau \\ X_1 \end{pmatrix} \sim \mathcal{N} \left( \begin{pmatrix} 0 \\ 0 \end{pmatrix}, \begin{pmatrix} \bar{\sigma}^2_\tau & \bar{\sigma}^2_\tau \\ \bar{\sigma}^2_\tau & \bar{\sigma}^2_1 \end{pmatrix} \right),
\end{equation}
where the covariance $\mathrm{Cov}(X_\tau, X_1) = \E[X_\tau(X_\tau + \int_\tau^1 \sigma(t) \dd B_t)] = \bar{\sigma}^2_\tau$ due to the independence of increments. Applying the standard formula for Gaussian conditioning, we obtain:
\begin{equation}
\label{eq: conditional reference}
    \Pi^0_{\tau|1}(X_\tau | X_1) = \mathcal{N} \left( X_\tau \mid \frac{\bar{\sigma}^2_\tau}{\bar{\sigma}^2_1} X_1, \bar{\sigma}^2_\tau - \frac{(\bar{\sigma}^2_\tau)^2}{\bar{\sigma}^2_1} \right).
\end{equation}
This allows us to sample intermediate states $X_\tau$ directly from the terminal action $X_1$ without simulating the SDE, which is the foundation for the efficiency of the AMDP objective.

\subsection{Extension to Ornstein-Uhlenbeck processes}
\label{appendix: OU process}
\paragraph{Denoising diffusion sampler.}
Consider the controlled Ornstein-Uhlenbeck process
\begin{align}
\dd X_\tau = -\frac{1}{2}\sigma(\tau)^2X_\tau\dd t + \eta\ \sigma(\tau)u(X_\tau, s,\tau) \dd \tau + \eta\ \sigma(\tau) \dd B_\tau, \quad X_0 \sim \mathcal{N}(0,\eta\ I),
\label{eq: controlled sde dds}
\end{align}
with path space measure $\Pi^u$. The uncontrolled process with $\Pi^0$ is given by
\begin{align}
\dd X_\tau = -\frac{1}{2}\sigma(\tau)^2X_\tau\dd t + \eta\ \sigma(\tau) \dd B_\tau, \quad X_0 \sim \mathcal{N}(0,\eta\ I),
\label{eq: uncontrolled sde dds}
\end{align}
which satisfies $\Pi^0_t=\mathcal{N}(0,\eta\ I)$ for all $t\in[0,1]$ and therefore $\Pi^0_1(X_1|s) = \mathcal{N}(0,\eta\ I)$. The adjoint state satisfies
\begin{equation}
    a(X,s,\tau) = -\exp\left({\int_\tau^1-\frac{1}{2}\sigma(v)^2\dd v}\right)\nabla_{x}\left(\log \frac{\Pi^0_1(X_1|s)}{\big|\det J_f(X_1)\big|}+\tfrac{1}{\alpha}Q^{\pi}(f(X_1),s)\right)
\end{equation}
resulting in 
\begin{equation}
\label{eq: ram loss action space dds}
    \mathcal{L}^{s,f}_{\mathrm{AMDP}}(u) = \int_0^1\E\left[ \frac{1}{2}\|u(X_\tau,s,\tau)+\eta \ \sigma(\tau)a(X,s,\tau)\|^2\right]\dd \tau,
\end{equation}

\paragraph{Extension to Gaussian priors via memoryless noise schedules.}
\label{subsec: memoryless noise schedules}

The reciprocal adjoint matching loss employed in \textsc{AMDP} requires that the reference path measure $\Pi^0$ is \emph{memoryless}, i.e.~the endpoints satisfy
\begin{equation}
\Pi^0_{0,1}(X_0, X_1) = \Pi^0_0(X_0)\, \Pi^0_1(X_1).
\label{eq: memoryless property}
\end{equation}
Throughout the main paper we ensured this property by choosing a deterministic (Dirac) prior $\mu_0(x) = \delta_0(x)$, so that $X_0 = 0$ is independent of $X_1$ trivially. As pointed out in the limitations, this choice is incompatible with the standard practice of pretraining diffusion policies from a Gaussian prior $\mu_0 = \mathcal{N}(0,I)$. We now explain in detail how the framework can be extended to a Gaussian prior by adopting the \emph{memoryless noise schedule} construction of Domingo-Enrich~et~al.~\cite{domingoenrich2024adjoint}, which preserves \eqref{eq: memoryless property} while allowing $X_0 \sim \mathcal{N}(0,I)$.

\paragraph{Construction of the memoryless reference SDE.}
Consider a more general reference process of the form
\begin{align}
\dd X_\tau = -\tfrac{1}{2}\beta(\tau)\, X_\tau\, \dd \tau + \sigma(\tau)\, \dd B_\tau, \qquad X_0 \sim \mathcal{N}(0, I),
\label{eq: ou reference}
\end{align}
which is a time-inhomogeneous Ornstein--Uhlenbeck (OU) process with drift coefficient $\beta(\tau)\ge 0$ and diffusion coefficient $\sigma(\tau)>0$. The marginal of \eqref{eq: ou reference} at time $\tau$ is Gaussian, $\Pi^0_\tau = \mathcal{N}(0, \Sigma_\tau I)$ with
\begin{align}
\Sigma_\tau \;=\; e^{-B(\tau)} + \int_0^\tau e^{-(B(\tau)-B(s))}\, \sigma^2(s)\, \dd s,\qquad B(\tau) := \int_0^\tau \beta(s)\, \dd s.
\label{eq: ou marginal variance}
\end{align}
The crucial covariance between the endpoints is
\begin{align}
\operatorname{Cov}(X_0, X_1) \;=\; e^{-\tfrac{1}{2}B(1)}\, I,
\label{eq: ou covariance}
\end{align}
so the reference is memoryless if and only if $\operatorname{Cov}(X_0, X_1) = 0$, i.e.~$B(1) = \int_0^1 \beta(\tau)\, \dd\tau = \infty$. Following \cite{domingoenrich2024adjoint}, this is most conveniently obtained by choosing a schedule with an integrable singularity at $\tau = 1$, e.g.
\begin{align}
\beta(\tau) \;=\; \frac{c}{1-\tau}, \qquad c>0,
\label{eq: memoryless schedule}
\end{align}
or any schedule for which $B(1)=\infty$. Under any such \emph{memoryless noise schedule}, the joint $(X_0, X_1)$ factorises,
\begin{align}
\Pi^0_{0,1}(X_0, X_1) = \mathcal{N}(X_0|0, I)\, \mathcal{N}(X_1|0, \Sigma_1\, I),
\end{align}
even though $X_0 \sim \mathcal{N}(0,I)$ is genuinely random. Memorylessness is therefore decoupled from the choice of prior and absorbed entirely into the noise schedule.

\paragraph{Reciprocal adjoint matching with a Gaussian prior.}
Replacing the reference SDE used in the main paper by \eqref{eq: ou reference} with a memoryless schedule, the controlled process becomes
\begin{align}
\dd X_\tau = \Big[-\tfrac{1}{2}\beta(\tau)\, X_\tau + \eta\, \sigma(\tau)\, u(X_\tau, s, \tau)\Big]\, \dd \tau + \eta\, \sigma(\tau)\, \dd B_\tau, \qquad X_0 \sim \mathcal{N}(0, I),
\label{eq: controlled sde memoryless}
\end{align}
and all derivations in the main paper as well as the appendix carry over verbatim with the following two modifications.

\emph{(i) Reciprocal class and bridge sampling.} The optimal target measure still factorises as
\begin{align}
\Pi^{u^*}(X|s) = \Pi^{u^*}_1(X_1|s)\, \Pi^0_{|1}(X|X_1, s),
\end{align}
because the memoryless reference still belongs to a reciprocal class. The conditional bridge $\Pi^0_{\tau|1}$ for the OU process \eqref{eq: ou reference} is again Gaussian and can be derived in closed form by Gaussian conditioning on the joint $(X_\tau, X_1)$. Letting $C_\tau := \operatorname{Cov}(X_\tau, X_1)$, one obtains
\begin{align}
\Pi^0_{\tau|1}(X_\tau|X_1) = \mathcal{N}\!\left(X_\tau \,\Big|\, \frac{C_\tau}{\Sigma_1} X_1,\ \Sigma_\tau - \frac{C_\tau^2}{\Sigma_1}\right).
\end{align}
For schedules satisfying $B(1)=\infty$, $C_\tau$ remains finite for $\tau < 1$, so intermediate states can still be sampled simulation-free given a terminal sample $X_1$.

\emph{(ii) Adjoint state and loss.} The reciprocal adjoint state of \eqref{eq: controlled sde memoryless} satisfies the backward ODE
\begin{align}
\dd a(X, s, \tau) = \tfrac{1}{2}\beta(\tau)\, a(X, s, \tau)\, \dd \tau, \qquad a(X, s, 1) = \nabla_x \!\left( \log\!\frac{\Pi^0_1(X_1|s)}{|\det J_f(X_1)|} + \tfrac{1}{\alpha} Q^\pi(f(X_1), s)\right),
\end{align}
which integrates to $a(X, s, \tau) = e^{-\tfrac{1}{2}(B(1)-B(\tau))}\, a(X, s, 1)$. The reciprocal adjoint matching loss thus becomes
\begin{align}
\mathcal{L}^{s,f}_{\mathrm{AMDP}}(u) = \int_0^1 \mathbb{E}\!\left[\, \tfrac{1}{2}\, \big\| u(X_\tau, s, \tau) + \eta\, \sigma(\tau)\, a(X, s, \tau) \big\|^2 \,\right]\!\dd \tau,
\label{eq: ram loss memoryless}
\end{align}
which has \emph{the same form} as Eq.~\eqref{eq: ram loss action space}, with the only changes being (a) sampling $X_0 \sim \mathcal{N}(0,I)$ instead of $X_0=0$, and (b) using the OU bridge \eqref{eq: ou marginal variance} for simulation-free training. In particular, none of the proofs of policy iteration, trust-region updates, or the entropy lower bound need to be revisited, as they only rely on the memoryless property \eqref{eq: memoryless property} of the reference, which is preserved by construction.

\paragraph{Practical considerations.}
The singularity of $\beta(\tau)$ at $\tau=1$ is integrable and the bridge densities $\Pi^0_{\tau|1}$ used during training are bounded for $\tau < 1$, so training is numerically stable. In practice, one truncates the time interval to $[0, 1-\epsilon]$ for a small $\epsilon > 0$ during inference, which is the same prescription used in standard flow matching and score-based diffusion models. The benefit of this extension is that \textsc{AMDP} can now be initialised from any pretrained Gaussian-prior diffusion or flow-matching policy, fully bridging offline-to-online reinforcement learning while retaining the simulation-free training of the reciprocal adjoint matching objective.

\paragraph{Ablation Experiment}
To ablate the influence of the SDE choice (Fig.~\ref{fig:ablations_reverse_kl_dmc}~and~\ref{fig:ablations_reverse_kl_humanoid}), we implement and evaluate the OU-process for AMDP as described above but do not switch to a differently shaped noise schedule with a singularity.
In particular, we increase the diffusion strength of the same geometric noise schedule, see Appendix~\ref{appendix: implementation details}, such that the endpoint covariance of \eqref{eq: ou covariance} is approximately zero.

\subsection{Policy iteration for adjoint matching diffusion policy}
\label{appendix: policy iteration}

Recall the soft Bellman operator for a fixed policy $\pi^u$

\begin{align}\label{eq soft bellman appdx}
(\mathcal{T}^{\pi^u} Q)(s,a) \coloneqq r(s,a) + \gamma \mathbb{E}_{s' \sim p, a' \sim \pi^u} \big[Q(s',a') + \alpha  \mathcal{L}^{s'}_{\mathrm{ENT}}(u)\big]
\end{align}
and the entropy lower bound on the marginal entropy 
\begin{equation}
     \mathcal{H}(\pi^u(a|s)) \geq \mathcal{L}^s_{\mathrm{ENT}}(u) \coloneqq -\E_{X\sim \Pi^u}\left[\int_0^1 \frac{1}{2}\|u(X_\tau, s,\tau)\|^2 \dd \tau + \log \frac{\Pi^0_1(X_1|s)}{\big|\det J_f(X_1)\big|}\right].
\end{equation}

Iteratively applying the 'soft' Bellman operator converges to the true $Q$-function,
\begin{equation}
    Q^{\pi^u}(s, a) = r(s, a) + \mathbb{E}_{\pi} \left[ \sum_{t=1}^\infty \gamma^t (r(s_t, a_t) + \alpha \mathcal{L}^{s_t}_{\mathrm{ENT}}(u) \right],
\end{equation}
using the entropy lower bound under the current policy $\pi^u$.

For any $s$ and $a$ and bounded $Q$-functions $Q_1$ and $Q_2$ their absolute difference is
\begin{align}
    |(\mathcal{T}^{\pi^u} Q_1)(s,a) - (\mathcal{T}^{\pi^u} Q_2)(s,a) | = \gamma |\mathbb{E}_{s'\sim p, a'\sim\pi^u}\left[Q_1(s',a')-Q_2(s',a')\right]|, 
\end{align}
which can be easily seen by inserting the Operator in Eq. \ref{eq soft bellman appdx} into the left-hand side. 
Further, we can write using the triangle inequality
\begin{align}  \label{bound_}
    \gamma |\mathbb{E}_{s'\sim p, a'\sim\pi^u}\left[Q_1(s',a')-Q_2(s',a')\right]| \leq \gamma \mathbb{E}_{s'\sim p, a'\sim\pi^u}\left[|Q_1(s',a')-Q_2(s',a')|\right].
\end{align}
Next, we can find the supremum over the absolute difference
\begin{align}
    \sup\sup_{(s',a')} |Q_1(s',a')-Q_2(s',a')| = ||Q_1 - Q_2||_\infty,
\end{align}
which is the worst-case scenario with the highest difference between the two bounded $Q$-values.  
As this is the highest possible value, the random variables on the right-hand side of Eq. \ref{bound_} are also smaller than or equal to the $\| \cdot \|_\infty$ norm.
Using this relation, we can write 
\begin{align}
    \gamma \mathbb{E}_{s'\sim p, a'\sim\pi^u}\left[|Q_1(s',a')-Q_2(s',a')|\right] \leq \gamma ||Q_1-Q_2||_\infty, 
\end{align}
where the expectation reduces to the constant itself, since $||Q_1-Q_2||_\infty$ does not depend on $s',a'$.
Hence, the $||\cdot ||_\infty$ norm on the Bellman operator difference is bounded by the infinity norm of the Q values as
\begin{align}
    ||(\mathcal{T}^{\pi^u} Q_1)(s,a) - (\mathcal{T}^{\pi^u} Q_2)(s,a) ||_\infty \leq \gamma ||Q_1-Q_2||_\infty,
\end{align}
which converges to a unique fixed point despite using the entropy lower bound $\mathcal{L}^s_{\mathrm{ENT}}(u)$ as introduced in Section \ref{sec policy iteration}. 
This proof for soft policy updates can also be found, for example, in \cite{haarnoja2018soft, lv2026flacmaximumentropyrl}.

\subsection{Temperature auto-tuning}
\label{appendix: temp autotuning}

The temperature parameter $\alpha$ in the target distribution \eqref{eq: energy policy} is critical as it controls the exploration-exploitation trade-off during training. Utilizing a fixed temperature can be problematic because the scale of $Q$-function values often changes significantly as the policy improves. To address this, we follow the approach of prior work \citep{haarnoja2019softactorcriticalgorithmsapplications} by formulating a constrained optimization problem to automatically determine $\alpha$ based on a desired target entropy $\bar{\mathcal{H}}$. Since the marginal entropy is intractable for diffusion policies, we utilize the entropy lower bound $\mathcal{L}^s_{\mathrm{ENT}}(u)$ defined in \eqref{eq: ent lb} as a drop-in surrogate,
\begin{equation}
    \max_{u} \mathbb{E}_{\pi^u} \left[\sum_{t=1}^\infty \gamma^t (r(s_t, a_t) \right] \quad \text{s.t.} \quad \mathcal{L}^{s_t}_{\mathrm{ENT}}(u) \geq \bar{\mathcal{H}} \quad \forall t.
\end{equation}
This leads to the following dual objective for learning the temperature $\alpha$:
\begin{equation}
    J(\alpha) = \mathbb{E}_{s} \left[ \alpha \left( \mathcal{L}^s_{\mathrm{ENT}}(u) - \bar{\mathcal{H}} \right) \right].
\end{equation}
In practice, we treat $\alpha$ as a learnable Lagrangian multiplier and optimize it via gradient descent. By minimizing $J(\alpha)$, the temperature is automatically increased if the entropy surrogate falls below the target $\bar{\mathcal{H}}$ and decreased otherwise, ensuring stable exploration throughout the learning process.

\subsection{Further details on action squashing}
\label{appendix: action squashing}

Squashing the actions into the valid action bounds is commonly done using the bijective elementwise $\tanh$ transformation in the literature \cite{haarnoja2018soft}. 
As a result, the transformation yields a modified density function $\pi^*(a|s)$ that can be described in terms of the base distribution $ \tilde \pi^*(X_1|s)$ using the change of variables formula, which is already detailed in Section \ref{AM for maxent RL} as 
\begin{align}\label{eq CoV}
    \tilde \pi^*(X_1|s) = \pi^*(a|s) \big|\det J_f(X_1)\big|, \quad \text{where} \quad J_f(X_1) = \frac{\dd f(X_1)}{\dd X_1}.
\end{align}
Here, the determinant of the Jacobian $\det J_f(X_1)$ describes the scaling of changes in the variable given by the action transforming function $a=f(X_1)$, with $X_1 \in\mathbb{R}^d$. 
Intuitively, when the slope of $f$ is low, then a broad range of inputs is squashed into a narrow range of outputs such that the squashed space density $\pi^*(a|s)$ must be higher.

Since the target distribution is only defined explicitly in the squashed space, the shape of the squashing function determines how this distribution will be \emph{unsquashed} into $\tilde \pi^*(X_1|s)$.
A typical squashing function is monotone and has vanishing gradient for inputs towards $\pm \infty$, but the difference between, e.g., $\tanh$ and $\mathrm{erf}$ lies in how the Jacobian determinant decays.
The relevant trade-off for our use-case is between stretching the extremities of the action space strongly or stretching ``medium'' actions more.
The usual $\tanh$ transform decays the Jacobian rather slowly, thus causing less extreme stretching of the limits, but already distorts for fairly moderate actions.
As a result, the $\tanh$-unsquashed target density has comparatively broad support with heavier tails.
While the difference in function and derivative shape, shown in Fig. \ref{table: error function plots}, may seem arbitrary or subjective, we highlight three reasons to favor error function squashing, two of which also apply to non-expressive Gaussian policies:
\begin{enumerate}
    \item The error function perfectly squashes a zero-mean Gaussian with $\sigma^2=\tfrac{1}{2}$ into the uniform distribution over $[-1, 1]$, as shown in Fig.~\ref{table: error function plots}. This proves beneficial for initialization and maximum-entropy considerations.
    \item Under mild assumptions on $\pi^*(a|s)$ (see Prop.~\ref{prop:score_and_tail_bounds}), the unsquashed $\tilde \pi^*$ has sub-gaussian tails compared to the heavier sub-exponential tails of $\tanh$-squashing.
    \item Rescaling inputs to emulate 1. both simplifies our entropy lower-bound \eqref{eq: squashed entropy bound} into Eq.~\eqref{eq: erf entropy lower bound} and ensures a monotonic coupling of kinetic energy and marginal entropy (see Remark \ref{remark:kinetic energy and entropy}).
\end{enumerate}

\input{tables/appendix_action_squashing_analysis}

\begin{remark}[Squashing of Gaussian is Uniform]
The determinant of the error function Jacobian is inverse to the density of the Gaussian $\tilde{p}(X) = \mathcal{N}(X \mid 0, \tfrac{1}{2})$ up to the constant factor $\tfrac{1}{2}$. Thus the density $p(a)$ of $a = \mathrm{erf}(X), X \sim \tilde{p}$ is uniform $p(a) = \mathcal{U}(-1, 1)$, as 
\begin{equation}
    p(a) = \tilde{p}(X) \cdot | \det J_{\mathrm{erf}}(X)|^{-1} = \frac {1}{\sqrt {2\pi \sigma^{2}}} \exp\left(-\frac {x^2}{2\sigma^2} \right) \cdot \left| \frac{2}{\sqrt{\pi}}\exp(-x^2) \right|^{-1} = \frac{1}{2}.
\end{equation}
\end{remark}

Given that the squashed target density is defined by the Q-function, that is modeled by a neural network, with bounded rewards, we can infer the following about the unsquashed distribution.

\begin{proposition}[Score and Tail Behavior of Unsquashed Policies]\label{prop:score_and_tail_bounds}
Assume the target squashed density $\pi^*(a|s)$ is continuously differentiable on $(-1, 1)$ with a bounded derivative $|\nabla_a \pi^*(a|s)| \le L$ (Lipschitz continuity), and is bounded away from zero near the action bounds, meaning $\pi^*(a|s) \ge c > 0$ as $|a| \to 1$. 
Then, the unsquashed density score function, $\nabla_{X_1} \log \tilde{\pi}^*(X_1|s)$, is asymptotically entirely dominated by the squashing function's Jacobian, yielding:
\begin{enumerate}
    \item \emph{A linear score} for error function squashing ($f = \mathrm{erf}$): \\
    $\lim_{X_1 \to \pm \infty} \nabla_{X_1} \log \tilde{\pi}^*(X_1|s) = -2 X_1$.
    \item \emph{A constant score} for hyperbolic tangent squashing ($f = \tanh$): \\
    $\lim_{X_1 \to \pm \infty} \nabla_{X_1} \log \tilde{\pi}^*(X_1|s) = \mp 2$.
\end{enumerate}
Consequently, $\mathrm{erf}$-squashing yields sub-Gaussian tails, whereas $\tanh$-squashing yields strictly sub-exponential tails.
\end{proposition}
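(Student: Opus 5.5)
The proof works one coordinate at a time. Both squashings act elementwise, so $\det J_f(X_1)=\prod_i f'(X_{1,i})$. The unsquashed score therefore splits into two terms,
\begin{equation*}
\nabla_{X_1}\log\tilde\pi^*(X_1|s) \;=\; J_f(X_1)^{\top}\,\frac{\nabla_a\pi^*(f(X_1)|s)}{\pi^*(f(X_1)|s)} \;+\; \Bigl(\frac{f''(X_{1,i})}{f'(X_{1,i})}\Bigr)_{i=1}^d .
\end{equation*}
This follows from the change of variables formula \eqref{eq CoV} with $\pi^*(a|s)$ evaluated at $a=f(X_1)$. The strategy is to show that the first term (the ``target'' term) vanishes as a coordinate $X_{1,i}\to\pm\infty$, while the second term (the ``Jacobian'' term) supplies the stated limits.

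\textbf{Step 1: the target term vanishes.} As $|X_{1,i}|\to\infty$ we have $|f(X_{1,i})|\to 1$. The two hypotheses then give $|\nabla_a\pi^*|\le L$ and $\pi^*\ge c$, so the logarithmic derivative $\nabla_a\log\pi^*$ is bounded by $L/c$ near the boundary. Because $J_f$ is diagonal, the $i$-th component of the target term is $f'(X_{1,i})\,\partial_{a_i}\log\pi^*$. Here $f'\to 0$: for erf it behaves like $\tfrac{2}{\sqrt\pi}e^{-x^2}$, and for tanh like $\operatorname{sech}^2 x\sim 4e^{-2|x|}$. A bounded factor times a vanishing factor tends to zero. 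The other coordinates should be held fixed, or else the boundedness assumption should be read uniformly, so the cleanest statement is the coordinatewise limit.

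\textbf{Step 2: compute the Jacobian term explicitly.}
\begin{itemize}
\item For erf, $f'(x)=\tfrac{2}{\sqrt\pi}e^{-x^2}$, so $f''/f'=-2x$ exactly. This gives the linear score $-2X_1$. If the scaling $k$ from \eqref{eq: erf} is used, the score becomes $-2k^2X_1$, which reduces to $-2X_1$ for $k=1$ as in the statement.
\item For tanh, $f'(x)=\operatorname{sech}^2x$, so $f''/f'=-2\tanh x$. This tends to $\mp2$ as $x\to\pm\infty$.
\end{itemize}
Adding Step 1 gives both limits.

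\textbf{Step 3: tail consequences.} Integrate the score along a ray. Fix $\epsilon>0$ and take $R$ large enough that the score lies within $\epsilon$ of its limit for $|x|>R$. Then $\log\tilde\pi^*(x)-\log\tilde\pi^*(R)$ is sandwiched between the integrals of the limiting score plus or minus $\epsilon$.
\begin{itemize}
\item For erf the score is $-2x+o(1)$, so the density is bounded by $C e^{-x^2+\epsilon|x|}$. This is sub-Gaussian.
\item For tanh the score is $\mp2+o(1)$, so the density is bounded above and below by $C e^{-(2\mp\epsilon)|x|}$. This is an exponential tail, sub-exponential but not sub-Gaussian, and the lower bound is what justifies ``strictly''.
\end{itemize}

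The main obstacle is Step 1 in the multivariate case. The hypotheses are stated loosely: $\pi^*\ge c$ only ``as $|a|\to1$'', and the gradient bound is a bound on the gradient of the density rather than of the log-density. I would first make explicit a uniform lower bound on $\pi^*$ in a neighbourhood of the relevant boundary face. With that in place, $\|\nabla_a\log\pi^*\|\le L/c$ holds there, and the coordinatewise argument goes through with all other coordinates fixed.
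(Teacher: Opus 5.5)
Your proposal is correct and follows the same route as the paper. You split the change-of-variables score into the target term $f'(X_1)\,\nabla_a\pi^*/\pi^*$, which is bounded by $(L/c)f'(X_1)\to 0$, and the Jacobian term $f''/f'$, which equals $-2X_1$ for $\mathrm{erf}$ and $-2\tanh(X_1)\to\mp 2$ for $\tanh$; your coordinatewise treatment of the multivariate case and your integration of the score along a ray make rigorous what the paper asserts in one line about the densities decaying like $\exp(-X_1^2)$ and $\exp(-2|X_1|)$.
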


\begin{proof}
Applying the change of variables formula to the score of $\tilde{\pi}$ for $f \in \{\mathrm{erf}, \tanh\}$, where $| \det J_f(X_1)| = f'(X_1)$, and the log-derivative trick yields
\begin{align}\label{eq: score_expansion}
    \nabla_{X_1} \log \tilde{\pi}^*(X_1|s) 
    &= \nabla_{X_1} \log \pi^*(f(X_1)|s) + \nabla_{X_1} \log f'(X_1) \nonumber \\
    &= \frac{\nabla_a \pi^*(a|s)}{\pi^*(a|s)} \cdot f'(X_1) + \frac{f''(X_1)}{f'(X_1)}.
\end{align}
Under our assumptions, the term $\frac{\nabla_a \pi^*(a|s)}{\pi^*(a|s)}$ is strictly bounded in magnitude by $L/c$.
Because both squashing functions have vanishing gradients at the extremities, $\lim_{X_1 \to \pm \infty} f'(X_1) = 0$, the first term of Eq. \ref{eq: score_expansion} vanishes in the limit:
\begin{equation}
    \lim_{X_1 \to \pm \infty} \left| \frac{\nabla_a \pi^*(a|s)}{\pi^*(a|s)} \cdot f'(X_1) \right| \le \lim_{X_1 \to \pm \infty} \frac{L}{c} f'(X_1) = 0.
\end{equation}
The asymptotic behavior of the score is thus completely determined by the second term, $\frac{f''(X_1)}{f'(X_1)}$, which is the score of the Jacobian determinant. 

For error function squashing, $f(X_1) = \mathrm{erf}(X_1)$, the derivative is $f'(X_1) = \frac{2}{\sqrt{\pi}}\exp(-X_1^2)$ and thus
\begin{equation}
    \frac{f''(X_1)}{f'(X_1)} = \frac{\dd}{\dd X_1} \left( \log \frac{2}{\sqrt{\pi}} - X_1^2 \right) = -2X_1.
\end{equation}
Since the score is asymptotically linear, the density decays as $\exp(-X_1^2)$, proving sub-Gaussian tails.

For hyperbolic tangent squashing, $f(X_1) = \tanh(X_1)$, the derivative is $f'(X_1) = 1 - \tanh^2(X_1)$ and thus
\begin{equation}
    \frac{f''(X_1)}{f'(X_1)} = \frac{-2\tanh(X_1)(1-\tanh^2(X_1))}{1-\tanh^2(X_1)} = -2\tanh(X_1).
\end{equation}
Taking the limit as $X_1 \to \pm \infty$, the score approaches $\mp 2$ as $\tanh(X_1) \to \pm 1$.
An asymptotically constant score implies the density decays as $\exp(-2|X_1|)$, proving strictly sub-exponential tails.
\end{proof}

\begin{remark}[Linear restoring force]
Note that the usage of the error function for squashing creates a linear spring-like restoring force $-2X_1$ (see Proposition \ref{prop:score_and_tail_bounds}) for the score function in the squashed space.
This improves robustness to outlier critic gradients and is also relevant for Gaussian policy MaxEnt-RL algorithms, such as SAC \citep{haarnoja2018soft}, which also follow the score implicitly in their objective through backpropagation through the change-of-variables and since $\log \pi^*(a | s) = Q^\pi(s, a)/\alpha$.
\end{remark}

As also detailed in Section \ref{AM for maxent RL}, the error function provides an opportunity to choose the scaling factor $k$ (see \eqref{eq: erf}) such that the modified adjoint matching loss (Eq. \eqref{eq: ram loss action space}) can be simplified to (Eq. \eqref{eq: amdp loss}), by canceling the gradient of the log ratio $\log \tfrac{\Pi^0_1(X_1|s)}{\big|\det J_f(X_1)\big|}$. 
Here, we show this specific design choice for the one-dimensional case for simplicity, where the element-wise extension is straightforward.

To this end, recall that for a constant scaling factor $k$ the sclaed error function was defined as 
\begin{align}
    \text{erf}(kx) = \frac{2}{\sqrt{\pi}}\int_0^{kx} \exp{\left(-v^2\right)}dv,
\end{align}
which gives us the Jacobian 
\begin{align}
    J_{\mathrm{erf}}(x) =  \frac{\dd ~\mathrm{erf} (kx)}{\dd x}\cdot k = \frac{2k}{\sqrt{\pi}}\exp{\left(-k^2x^2\right)}. 
\end{align}
The scaling parameter $k$ provides a degree of freedom that allows us to shape the Jacobian such that the gradient of the log ratio in Eq. \eqref{eq: ram loss action space} cancels. 
More concretely, the specific choice $k=\frac{1}{\sqrt{2 \bar{\sigma}_1^2 }}$ with $ \bar{\sigma}_1^2 = \int_0^1\sigma^2(\tau)\dd \tau $ (see Appendix \ref{appendix: reference process}) yields
\begin{align}
     J_{\mathrm{erf}}(X_1) = \frac{2}{\sqrt{2\pi \bar{\sigma}_1^2 }} \exp{\left(-\frac{X_1^2}{2 \bar{\sigma}_1^2 }\right)}, 
\end{align}
which also equals the absolute determinant of the Jacobian in the one-dimensional case.
We can now write the log ratio in Eq. \eqref{eq: ram loss action space}  as 
\begin{align}
    \log \frac{\Pi^0_1(X_1|s)}{\big|\det J_f(X_1)\big|} =&-\frac{1}{2}\log\left(2\pi \bar{\sigma}_1^2 \right)  - \frac{1}{2 \bar{\sigma}_1^2 } X_1^2 - \log 2 +\frac{1}{2}\log\left(2\pi \bar{\sigma}_1^2 \right) +\frac{x^2_1}{2 \bar{\sigma}_1^2 } \\
    =& -\log2, \label{eq:erf constant log ratio}
\end{align}
which is a constant such that its gradient is zero.

\begin{remark}[Error function squashing simplifies Entropy Lower Bound]\label{remark: erf entropy lower bound}
Applying Eq. \eqref{eq:erf constant log ratio} to the marginal, squashed entropy lower bound \eqref{eq: squashed entropy bound} reveals 
\begin{equation}
\label{eq: erf entropy lower bound}
    \mathcal{H}(\pi^u(\cdot|s)) \ge - \E_{X \sim \Pi^u}\left[ \int_0^1 \frac{1}{2}\|u(X_\tau, s,\tau)\|^2 \dd \tau \right] + \log 2 = \mathcal{L}^s_{\mathrm{ENT}}(u),
\end{equation}
thus directly connecting the kinetic energy of the diffusion to the entropy loss compared against the uniform marginal, $\mathcal{H}(\mathcal{U}(-1, 1)) = \log 2$, that is achieved for zero control at initialization.
\end{remark}

\subsection{Implementation details}
\label{appendix: implementation details}
\textbf{Loss scaling}\quad
We utilize a geometric noise schedule which covers a broad range of noise levels, i.e.\ $\sigma(\tau)$, which directly affects the predicted neural network output scale, as the Adjoint Matching targets scale proportionally with $\sigma(\tau)$.
Thus we scale the neural network output by $\sigma(\tau)$ during inference and loss computation and scale the MSE-loss with $\frac{1}{\sigma(\tau)}$ to weight the different time steps, which does not affect the optimal solution \citep{havens2025adjoint}.
An observant reader might notice the $\tfrac{1}{\alpha}$ in the Adjoint Matching targets and worry about numerical and optimization instability at lower temperatures when the critic itself is smooth.
Our empirical experience shows that scaling the loss with $\alpha$, a direct loss scaling of all times and samples that clearly does not affect the optimal solution, slows down training far too much as $\alpha$ decays.

\pagebreak
\subsection{Detailed algorithmic description}\label{appdx detailed algorithmic description}
We provide a full algorithm description for the on-policy case, that is heavily inspired by REPPO \citep{voelcker2026relative} below. Note that our implementation is is not fully optimized but  rather simple. As an example, we always sample new actions for the next state to determine the TD-$\lambda$ returns, although it would be possible to reuse the next transitions action given the on-policy setting and only sample fresh actions for the truncation case.

\begin{algorithm}[H] 
    \caption{On-policy \texttt{AMDP}: Adjoint Matching Diffusion Policy}
    \label{alg: onpolicy amdp full}
    \SetAlgoLined
    \DontPrintSemicolon
    \SetKwInOut{Input}{Input}
    \SetKwInOut{Output}{Output}

    \Input{Initialized parameters for policy $\theta$ such that $u_\theta \approx 0$, critic $\phi$, temperature $\alpha$ and trust region Lagrangian $\lambda$}

    \For{each iteration}{
        \tcp{Collect Parallel Policy Rollouts into Buffer $\mathcal{B}$}
        $\mathcal{B} = \emptyset$\;
        \For{step $k=1$ \KwTo $K$} {
            \tcp{Sample action for each environment $i$}
            Solve \eqref{eq: controlled sde} for $X_{1,t}$ using Euler-Maruyama in state $s_t^i$, integrate $\|u(X_\tau, s_t^i)\|^2$ along the path and determine the inner term $h_i$ of \eqref{eq: ent lb}\;
            Apply Error function squashing $a_t^i \leftarrow \mathrm{erf}(X_1^i)$\;
            Collect transition $(s_t,a_t,r_t,s_{t+1})_i$ in each environment $i$\;
            \tcp{Compute values for TD-$\lambda$}
            Solve \eqref{eq: controlled sde} for $X_1'$ in state $s_{t+1}^i$, apply squashing for ${a_t^i}'$, and evaluate next state value $Q_{t+1} = Q_{\phi}(s_{t+1}^i, {a_t^i}')$\;
            Add to buffer $\mathcal{B} \leftarrow \mathcal{B} \cup \{(s_t,a_t,r_t,s_{t+1}, X_{1,t}, Q_{t+1}, h)_i \}_i$\;
        }
        Compute TD-$\lambda$ returns $V_t$ using $r_t, Q_{t+1}$\;
        \For{each critic epoch} {
            \For{each minibatch in $\mathcal{B}$} {
            Optimize critic $Q_\phi(s_t,a_t)$ with TD-$\lambda$ returns $V_t$ and entropy lower bound $h_i$ using HL-Gauss \citep{farebrother2024stop} loss and auxiliary loss like REPPO \citep{voelcker2026relative}.
            }
        }
        Compute $Q$-scores $\nabla_{x}Q_\phi(f_{\mathrm{erf}}(X_{1,t}),s_t)$ once and add to buffer $\mathcal{B}$. \;
        \For{each actor epoch} {
            \For{each minibatch in $\mathcal{B}$} {
            Note that $X_{1,t}$ from Buffer are distributed as $\Pi^{\bar u}_1(X_1 | s_t)$ \;
            Sample random time $\tau \sim \mathcal{U}[0,1]$, determine noise level $\sigma(\tau)$ and sample $X_{\tau,t}\sim\Pi^0_{\tau|1}$\;
            Compute targets $\sigma(\tau) \nabla_{x} \tfrac{1}{\alpha} Q_\phi(f_{\mathrm{erf}}(X_{1,t}),s_t)$ using current temperature $\alpha$ and precomputed $Q$-score\;
            Evaluate old model control $u_{\mathrm{old}}(X_{\tau,t}, s_t, \tau)$\;
            Optimize actor $u_\theta(X_\tau, s, \tau)$ and $\lambda$ with \eqref{eq: Lagrangian loss} \;
            Auto-tune temperature $\alpha$ (see Appendix~\ref{appendix: temp autotuning}).\;
            }
        }
       Set $u_{old} \leftarrow u_\theta$. \;
    }
\end{algorithm}

\section{Experimental setup}
\label{appendix: experimental setup}

Table~\ref{tab:algo_params} lists the default hyperparameters for all used algorithms that form the basis for the on-policy experiments.
Following the reference REPPO codebase\footnote{\url{https://github.com/cvoelcker/reppo}}, we make the following changes per benchmark: The discount factor is adjusted to $\gamma=0.97$ for the Mujoco Playgrounds Humanoid tasks and for ManiSkill a heuristic, based on the varying episode length $L$, $\gamma = 1 - \frac{10}{L}$ is used.
The distributional critic value bin range are also adjusted to the (discounted) return range to prevent saturation.

Due to the CPU-based simulation of HumanoidBench, we use only 128 parallel environments with 128 environment steps in each iteration to achieve a buffer size of 16384.
Thus we switch to 8 minibatches and 8 epochs with a batch size of 2048. 

\begin{table*}[!htbp]
    \centering
    \caption{Hyperparameters for all algorithms.}
    \label{tab:algo_params}
    \small
    \setlength{\tabcolsep}{1pt}
    \renewcommand{\arraystretch}{1.10}
    \begin{tabularx}{\textwidth}{@{}%
        >{\raggedright\arraybackslash}p{0.16\textwidth}
        YYYYYYY@{}}
        \toprule
        \textbf{Parameter}
        & \makecell{\textbf{AMDP}\\\textbf{(Ours)}}
        & \textbf{REPPO}
        & \textbf{DIME}
        & \textbf{SPO}
        & \textbf{PPO}
        & \textbf{FPO}
        & \textbf{DPPO} \\
        \midrule

        \multicolumn{8}{@{}l}{\textbf{\textit{Actor Network}}} \\
        Hidden dim.
        & 512 & 512 & 512 & 256 & 256 & 32 & 256 \\
        Hidden layers
        & 3 & 3 & 3 & 3 & 3 & 5 & 3 \\
        Activation
        & GeLU & GeLU & GeLU & ELU & ELU & SiLU & Mish \\
        Flow/diff. steps
        & 16 & N/A & 16 & N/A & N/A & 10 & 8 \\

        \midrule
        \multicolumn{8}{@{}l}{\textbf{\textit{Critic Network}}} \\
        Critic type
        & \makecell{Dist.\\(HL-Gauss)}
        & \makecell{Dist.\\(HL-Gauss)}
        & N/A
        & MSE
        & MSE
        & MSE
        & MSE \\
        Hidden dim.
        & 512 & 512 & 512 & 256 & 256 & 256 & 256 \\
        Hidden layers
        & 3 & 3 & 3 & 3 & 3 & 5 & 3 \\
        Ensemble/bins
        & 151 bins & 151 bins & 100 & 1 & 1 & 1 & 1 \\
        Activation
        & GeLU & GeLU & GeLU & ELU & ELU & SiLU & Mish \\

        \midrule
        \multicolumn{8}{@{}l}{\textbf{\textit{Optimization}}} \\
        Optimizer
        & Adam & Adam & Adam & Adam & Adam & Adam & Adam \\
        Actor LR
        & $3{\times}10^{-4}$
        & $3{\times}10^{-4}$
        & $3{\times}10^{-4}$
        & $1{\times}10^{-3}$
        & $1{\times}10^{-3}$
        & $3{\times}10^{-4}$
        & $3{\times}10^{-4}$ \\
        Critic LR
        & $3{\times}10^{-4}$
        & $3{\times}10^{-4}$
        & $3{\times}10^{-4}$
        & $1{\times}10^{-3}$
        & $1{\times}10^{-3}$
        & $3{\times}10^{-4}$
        & $3{\times}10^{-4}$ \\
        Max grad. norm
        & 0.5 & 0.5 & N/A & 1.0 & 1.0 & 1.0 & 1.0 \\
        Constraint
        & \makecell{$\epsilon=0.1$\\KL}
        & \makecell{$\epsilon=0.1$\\KL}
        & N/A
        & \makecell{$\epsilon=0.2$\\Penalty}
        & \makecell{$\epsilon=0.2$\\Clip}
        & \makecell{0.2\\Clip}
        & \makecell{0.2\\Clip} \\

        \midrule
        \multicolumn{8}{@{}l}{\textbf{\textit{Additional / Off-Policy Settings}}} \\
        Update-to-data ratio
        & $\frac{1}{32}$ & $\frac{1}{32}$ & $\frac{1}{32}$ & N/A & N/A & N/A & N/A \\
        Discount
        & 0.99 & 0.99 & 0.99 & N/A & N/A & N/A & N/A \\
        Batch size
        & 1024 & 1024 & 256 & 1024 & N/A & N/A & N/A \\
        Buffer size
        & 131072 & 131072 & 131072 & N/A & N/A & N/A & N/A \\
        Target entropy
        & $-1.0\dim(\mathcal{A})$
        & $-0.5\dim(\mathcal{A})$
        & $-4\dim(\mathcal{A})$
        & N/A & N/A & N/A & N/A \\
        \bottomrule
    \end{tabularx}
\end{table*}

\begin{table*}[!htbp]
    \centering
    \caption{Hyperparameters of \texttt{AMDP} and diffusion-based baselines for the DMC Dog and Humanoid benchmark suites in the off-policy setting.}
    \label{tab:off_policy_diffusion_params}
    \small
    \setlength{\tabcolsep}{2.5pt}
    \renewcommand{\arraystretch}{1.12}
    \begin{tabularx}{\textwidth}{@{}%
        >{\raggedright\arraybackslash}p{0.20\textwidth}
        YYYYY@{}}
        \toprule
        \textbf{Parameter}
        & \makecell[l]{\textbf{AMDP}\\\textbf{(Ours)}}
        & \textbf{DIME}
        & \textbf{QSM}
        & \textbf{Diff-QL}
        & \makecell[l]{\textbf{Consistency}\\\textbf{-AC}} \\
        \midrule

        \multicolumn{6}{@{}l}{\textbf{\textit{Actor Network}}} \\
        Hidden dim.
        & 512 & 256 & 256 & 256 & 256 \\
        Hidden layers
        & 3 & 3 & 3 & 4 & 4 \\ 
        Flow/diff. steps
        & 32 & 16 & 15 & 5 & N/A \\ 
        Prior distr.
        & Dirac & $\mathcal{N}(0,2.5)$ & $\mathcal{N}(0,1)$ & N/A & N/A \\

        \midrule
        \multicolumn{6}{@{}l}{\textbf{\textit{Critic Network}}} \\
        Hidden dim.
        & 2048 & 2048 & 2048 & 256 & 256 \\ 
        Hidden layers
        & 2 & 2 & 2 & 2 & 3 \\
        Bins/quantiles
        & 100 & 100 & N/A & N/A & N/A \\

        \midrule
        \multicolumn{6}{@{}l}{\textbf{\textit{Optimization}}} \\
        Optimizer
        & Adam & Adam & Adam & Adam & Adam \\ 
        Actor LR
        & $3{\times}10^{-4}$
        & $3{\times}10^{-4}$
        & $3{\times}10^{-4}$
        & $1{\times}10^{-5}$
        & $1{\times}10^{-5}$ \\
        Critic LR
        & $3{\times}10^{-4}$
        & $3{\times}10^{-4}$
        & $3{\times}10^{-4}$
        & $3{\times}10^{-4}$
        & $3{\times}10^{-4}$ \\
        Temperature LR
        & $1{\times}10^{-3}$
        & $1{\times}10^{-3}$
        & N/A & N/A & N/A \\

        \midrule
        \multicolumn{6}{@{}l}{\textbf{\textit{Additional Settings}}} \\
        Update-to-data ratio
        & 2 & 2 & 1 & 1 & 1 \\ 
        Discount
        & 0.99 & 0.99 & 0.99 & 0.99 & 0.99 \\ 
        Batch size
        & 256 & 256 & 256 & 256 & 256 \\ 
        Buffer size
        & $10^6$ & $10^6$ & $10^6$ & $10^5$ & $10^5$ \\ 
        Target entropy
        & $-\dim(\mathcal{A})$
        & $-4\dim(\mathcal{A})$
        & N/A & N/A & N/A \\
        Exploration steps
        & 5 & 5000 & $10^4$ & $10^4$ & $10^4$ \\ 
        Score-Q align. factor
        & N/A & N/A & 50 & N/A & N/A \\ 
        \bottomrule
    \end{tabularx}
\end{table*}

\FloatBarrier

\subsection{Benchmarks}

\paragraph{DeepMind Control Suite.}
DeepMind Control Suite (DMC)~\citep{tunyasuvunakool2020dmcontrol} is a collection of continuous-control environment built on the MuJoCo physics engine. It provides standardized tasks covering locomotion, balance, and motor with continuous state and action spaces. Representative DMC domains are show in Figure \ref{fig:dmc_benchmark}.

For our off-policy evaluation, we use seven high-dimensional locomotion tasks: four tasks using the dog embodiment and three tasks using the humanoid embodiment. These experiments use the original CPU-based \textit{dm\_control} implementation and are distinct from our on-policy MuJoCo Playground experiments, which use GPU-accelerated MJX reimplementations of DMC tasks. The complete task set is summarized in Table~\ref{tab:off_policy_dmc_tasks}.

\begin{figure}[!htbp]
    \centering
    \includegraphics[width=0.7\linewidth]{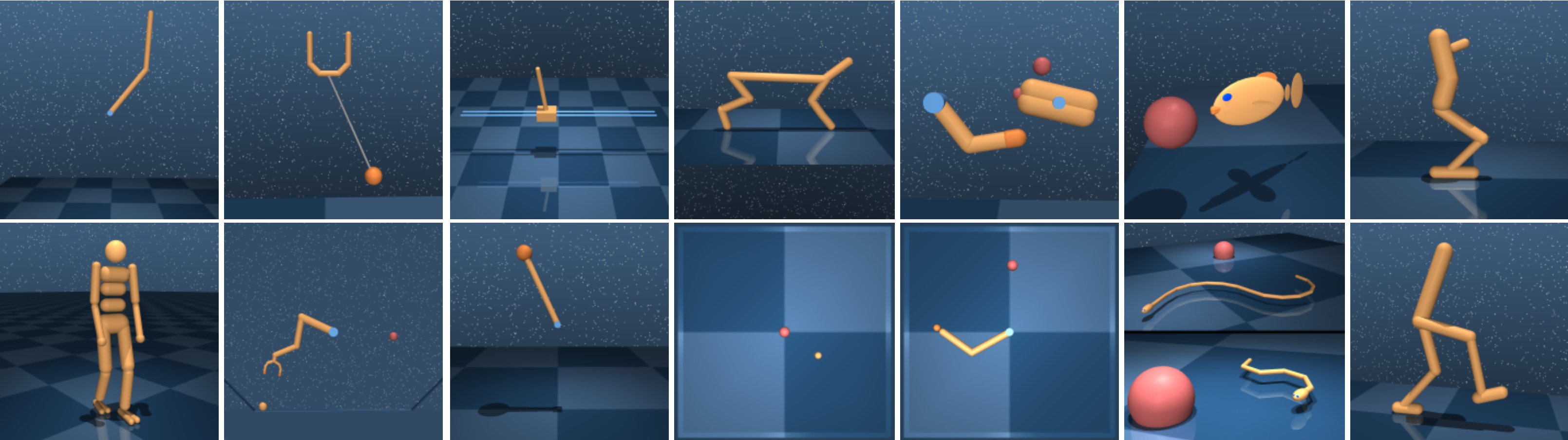}
    \caption{\textbf{DeepMind Control Suite benchmark domains.}
    The suite covers a diverse set of continuous-control problems, randing from classic control tasks such as Acrobot, Cart-pole, and Pendulum to locomotion and manipulation-style domains such as Cheetah, Walker, Humanoid, Manipulator, and Fish.}
    \label{fig:dmc_benchmark}
\end{figure}

\begin{table*}[!htbp]
    \centering
    \caption{DM Control evaluation tasks used in our off-policy experiments.
    Observation dimensions correspond to flattened state observations.}
    \label{tab:off_policy_dmc_tasks}
    \small
    \setlength{\tabcolsep}{3pt}
    \renewcommand{\arraystretch}{1.05}
    \begin{tabularx}{\textwidth}{
        @{}p{0.13\textwidth}p{0.24\textwidth}cc
        >{\raggedright\arraybackslash}X@{}
    }
        \toprule
        \textbf{Category} & \textbf{Task} & \textbf{Obs.} & \textbf{Act.}
        & \textbf{Description} \\
        \midrule

        \multirow{4}{*}{\texttt{Dog}}
        & \texttt{dog-stand} & 223 & 38
        & Maintain an upright standing posture. \\

        & \texttt{dog-walk} & 223 & 38
        & Move forward at the walking target speed. \\

        & \texttt{dog-trot} & 223 & 38
        & Move forward at the trotting target speed. \\

        & \texttt{dog-run} & 223 & 38
        & Move forward at the running target speed. \\
        \midrule

        \multirow{3}{*}{\texttt{Humanoid}}
        & \texttt{humanoid-stand} & 67 & 21
        & Maintain an upright posture while minimizing movement. \\

        & \texttt{humanoid-walk} & 67 & 21
        & Walk at the target horizontal speed. \\

        & \texttt{humanoid-run} & 67 & 21
        & Run at the target horizontal speed. \\

        \bottomrule
    \end{tabularx}
\end{table*}

\paragraph{MuJoCo Playground: DMC and Humanoid.}
We evaluate on a set of tasks from MuJoCo Playground, a GPU-accelerated robot-learning benchmark suite built on  MuJoCo XLA (MJX). Our evaluation includes 20 environments from the MuJoCo Playground reimplementation of the DeepMind Control (DMC) Suite, spanning classic control, planar locomotion, target reaching, and simple manipulation-style control tasks. We sometimes abbreviate \emph{MuJoCo Playground} as \emph{MJX}, to distinguish \emph{MJX DMC} from the off-policy (CPU-based) \emph{MuJoCo DMC} tasks.

We additionally evaluate four joystick-based humanoid locomotion tasks using the Unitree G1 and Booster T1 embodiments on flat and rough terrain, which we denote as \emph{MuJoCo Playground (or MJX) Humanoid}.

\begin{figure}[!htbp]
    \centering
    \includegraphics[width=0.7\linewidth]{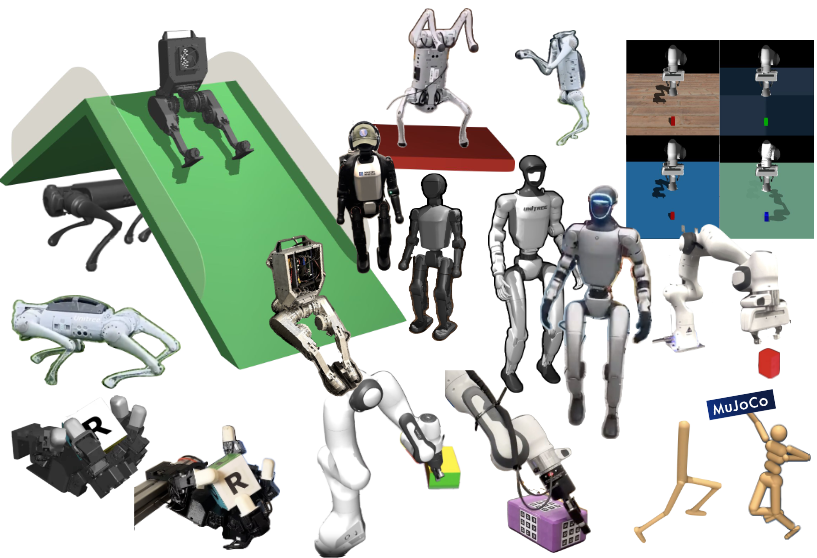}
    \caption{\textbf{Representative MuJoCo Playground environments.}
    The benchmark provides diverse continuous-control tasks across classic control, locomotion, and manipulation-style settings.}
    \label{fig:mujoco_playground}
\end{figure}

A summary of all used MuJoCo Playgrond tasks can be found in Table~\ref{tab:mujoco_playground_tasks}, each representing a distinct skill.

\input{tables/appendix_mujoco_playground}

\FloatBarrier

\paragraph{Maniskill3.}
\label{subsec:maniskill3}

ManiSkill3~\citep{tao2025maniskill} is a large-scale GPU-parallelized simulation benchmark designed for scalable training of embodied agents. 
It offers diverse object-centric contact-rich manipulation tasks with diverse object geometries such as grasping, assembling, and tool use, with support for both imitation and reinforcement learning.

\begin{figure}[!htbp]
    \centering
    \includegraphics[width=0.9\linewidth]{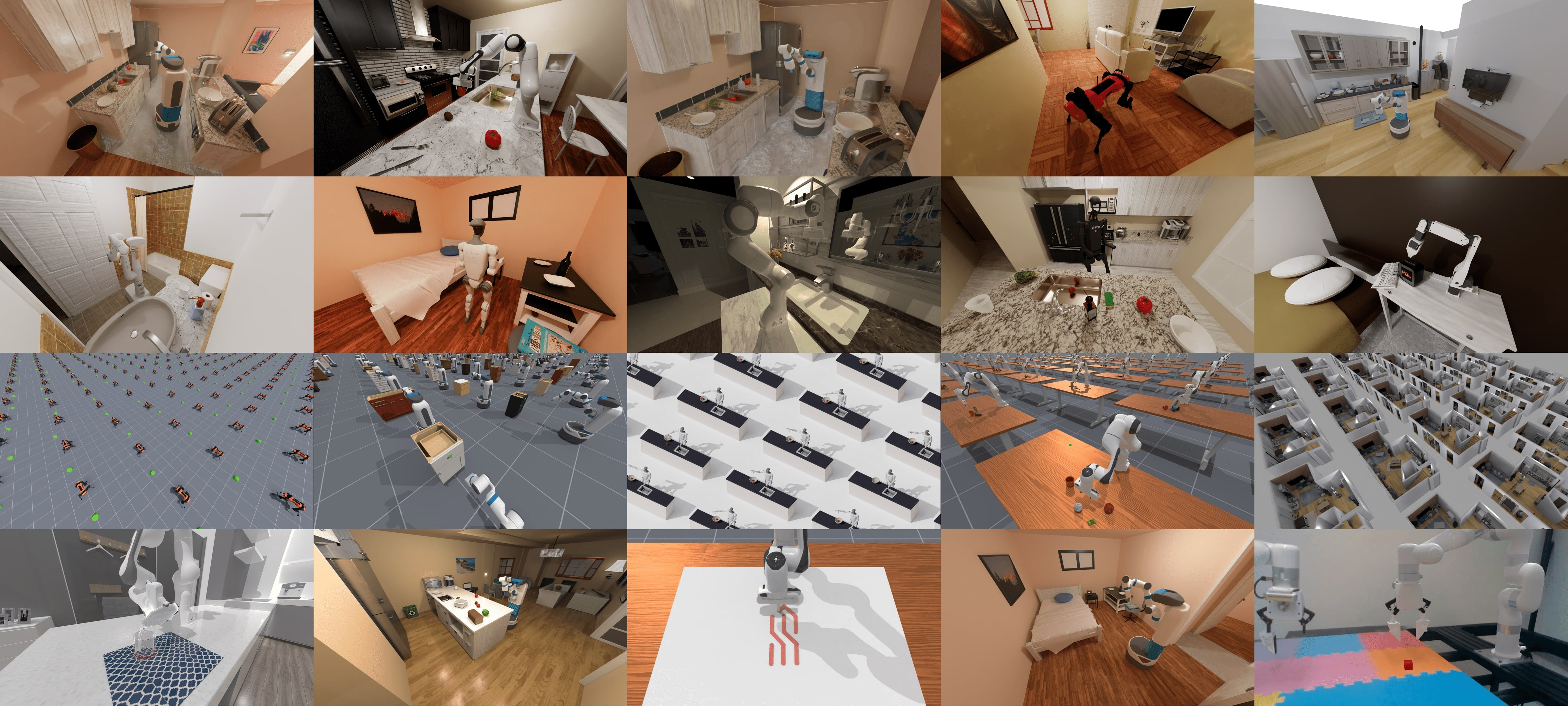}
    \caption{\textbf{Overview of ManiSkill3 Simulation Environments.} Example object-centric manipulation tasks illustrating the diversity of interactions supported by ManiSkill3.}
    \label{fig:maniskill_benchmark}
\end{figure}

A summary of all tasks included in our reported Maniskill3 benchmark can be found in Table~\ref{tab:maniskill3_tasks}, each representing a distinct skill. 

\input{tables/appendix_maniskill}

\newpage

\paragraph{HumanoidBench.}
HumanoidBench~\citep{sferrazza2024humanoidbench} evaluates whole-body humanoid control across locomotion,
agility, and manipulation tasks. Compared with standard locomotion benchmarks,
these tasks require the policy to combine balance, long-horizon coordination, and
task-directed behavior. We use this benchmark to test whether the expressivity of
diffusion policies improves performance in complex high-dimensional control.

A summary of all 29 used HumanoidBench tasks can be found in Table~\ref{tab:humanoidbench_tasks}, each representing a distinct skill. 

\begin{figure}[!htbp]
    \centering
    \includegraphics[width=0.6\linewidth]{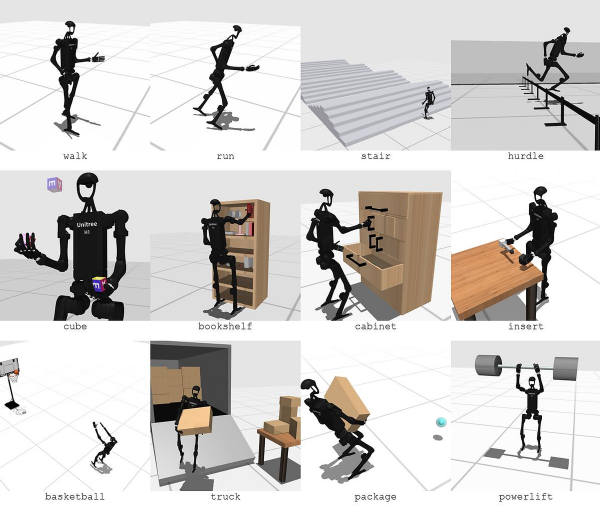}
    \caption{\textbf{Overview of HumanoidBench Simulation Environments.} The selected tasks cover locomotion, static manipulation, and dynamic manipulation categories.}
    \label{fig:humanoid_benchmark}
\end{figure}

\input{tables/appendix_humanoidbench}

\clearpage
\subsection{Performance Benchmarking Details}
\label{appendix: performance benchmarking}

We select five tasks from the fast GPU- and JAX-based MuJoCo simulation with varying action dimensionality and simulation complexity, remove all evaluation related computation from our JAX \cite{bradbury2021jax} implementation and measure the time per iteration in the on-policy setting.
We measure the combined time for inference diffusion sampling and environment stepping, to accurately portray the speed under JIT-optimization, and the combined time for all network update related computations, i.e., target value computation, critic updates and actor updates, per iteration.
We accumulate these times for chunks of 20 iterations, skipping the first chunk for warmup and considering the next 20 chunks.
The measured time per chunk is then divided by 20 to obtain the time spent per iteration.
This yields 80 measurements across the four seeds, except for the slow \emph{reverse KL 128 steps} configuration due to timeouts, representing 1600 iterations.

All benchmarks were performed on one NVIDIA A100 SXM4 40GB GPU and 12 AMD Zen 2 CPU cores, using the same type of server compute nodes, equipped with dual AMD EPYC Rome 7402 CPUs and four NVIDIA A100 GPUs in total.
All runs were performed with exclusive allocation of the node and four seeds running in parallel.

\subsection{VRAM Requirements}
In Appendix \ref{appendix: larger models}, larger actor models are evaluated and we demonstrate the (low) VRAM requirements of our method, even for larger actor network architectures, below.
Utilizing \texttt{XLA\_PYTHON\_CLIENT\_PREALLOCATE=false} and \texttt{nvidia-smi} for VRAM measurement, we see 2722 MiB, 2788 MiB and 4710 MiB respectively for our standard actor of roughly 290k parameters, 4-block 256 wide ($\approx$ 2.7M) and 10-block 1024 wide ($\approx$106M) residual actors (see Appendix \ref{appendix: larger models}). 
Furthermore, we scale to a 1B-parameter actor (24-block 2048 wide) and see 24.1GiB without any reduced-precision or offloading.
Note that this closely tracks the storage needed for the old and current policy parameters, the Adam optimizer state, and the activations in single precision, and remains constant as the number of diffusion steps increases.

\begin{table*}[t]
    \centering
    \caption{Runtime performance comparison across environments and configurations. All values are reported in milliseconds as mean $\pm$ one standard deviation over four runs. See Appendix~\ref{appendix: performance benchmarking} for the experiment setup.}
    \label{tab:runtime-results}
    \small
    \setlength{\tabcolsep}{2.5pt}
    \renewcommand{\arraystretch}{1.10}
    \begin{tabularx}{\textwidth}{@{}%
        >{\raggedright\arraybackslash}p{0.16\textwidth}
        c
        YYYYY@{}}
        \toprule
        \textbf{Method}
        & \textbf{Steps}
        & \makecell{\textbf{Cartpole}\\\textbf{Balance}}
        & \makecell{\textbf{Hopper}\\\textbf{Hop}}
        & \makecell{\textbf{Walker}\\\textbf{Run}}
        & \makecell{\textbf{G1Joystick}\\\textbf{Flat}}
        & \makecell{\textbf{T1Joystick}\\\textbf{Rough}} \\
        \midrule

        \multicolumn{7}{@{}l}{\textbf{\textit{Environment Rollout Time}}} \\
        REPPO        & --  & $84 \pm 0.3$     & $1309 \pm 14$  & $1657 \pm 10$  & $9605 \pm 21$   & $4107 \pm 37$ \\
        \texttt{AMDP}& 16  & $404 \pm 0.4$    & $1694 \pm 7.4$ & $2013 \pm 4$   & $10031 \pm 28$  & $4525 \pm 41$ \\
        rev.\ KL     & 16  & $404 \pm 0.7$    & $1671 \pm 18$  & $2027 \pm 17$  & $10041 \pm 31$  & $4545 \pm 19$ \\
        \texttt{AMDP}& 128 & $2738 \pm 5$     & $4375 \pm 13$  & $4743 \pm 22$  & $13097 \pm 35$  & $7501 \pm 43$ \\
        rev.\ KL     & 128 & $2751 \pm 8$     & $4418 \pm 29$  & $4734 \pm 34$  & $13129 \pm 42$  & $7505 \pm 51$ \\
        \midrule

        \multicolumn{7}{@{}l}{\textbf{\textit{Network Update Time}}} \\
        REPPO        & --  & $948 \pm 1.5$    & $943 \pm 1.6$  & $958 \pm 1.7$  & $1014 \pm 1$    & $999 \pm 3$ \\
        \texttt{AMDP}& 16  & $994 \pm 7.7$    & $1030 \pm 11$  & $1032 \pm 8$   & $1113 \pm 14$   & $1123 \pm 11$ \\
        \texttt{AMDP}& 128 & $991 \pm 7.5$    & $1023 \pm 11$  & $1035 \pm 8$   & $1117 \pm 13$   & $1122 \pm 10$ \\
        rev.\ KL     & 16  & $9944 \pm 11$    & $10443 \pm 17$ & $10520 \pm 19$ & $11421 \pm 29$  & $11258 \pm 17$ \\
        rev.\ KL     & 128 & $71669 \pm 134$  & $76239 \pm 176$& $76725 \pm 197$& $82101 \pm 182$ & $80750 \pm 173$ \\
        \bottomrule
    \end{tabularx}
\end{table*}

\subsection{Compute resources}
\label{appendix: compute resources}
Experiments where performed on a range of hardware, from local workstations with consumer NVIDIA GPUs (RTX 3060, RTX 2080 Ti, RTX 3080, RTX 3090, RTX 5090) to compute clusters, equipped with NVIDIA A100 40GB. 
Due to the computational efficiency of our method and the GPU-acceleration of the majority of on-policy environments, each experiment requires little resources and does not require HPC resources.
A single MuJoCo Playground environment takes roughly 20 minutes, a MuJoCo Humanoid or Maniskill environment 1-3 hours and only off-policy and HumanoidBench evaluations require multiple hours of training.

\newpage

\section{Further experimental results}
\label{appendix: further experimental results}
This appendix provides additional experimental results.
Appendix \ref{appendix: larger models} investigates scaling the diffusion actor architecture. Moreover, we provide full training curves of all 63 On-Policy and 7 Off-Policy environments in Appendix~\ref{appendix: individual mjx},\ref{appendix: individual maniskill},\ref{appendix: individual humanoidbench} and \ref{appendix: individual offpolicy}.

\subsection{Large Actor Model Scaling Proof-of-Concept}
\label{appendix: larger models}
To demonstrate the efficiency of our method with large networks, we scale the diffusion control network while maintaining the critic model size for fair comparison, although we believe that our methods' cheaply realizable expressivity can create meaningful performance improvements when adjusting the critic's expressiveness \cite{nauman2024bigger, lee2025hyperspherical, palenicek2026xqc}.
In particular, we experiment with a residual network architecture inspired by DiT~\citep{Peebles2022DiT}, using a residual architecture with Gated GeLU~\citep{shazeer2020glu} blocks with three layers of 8/3 expansion factor, and Adaptive LayerNorm (AdaLN)~\citep{perez2017filmvisualreasoninggeneral,Peebles2022DiT} for conditioning using context embeddings that combine observation and embedded time fourier features.
The context is created with a two-layer GeLU MLP (using the same hidden dimension as the residual stream) which is then projected with another linear layer for each AdaLN normalization at the start of the residual block on the stream (like in Pre-LN) before the normalized and shifted hidden features are passed to the GeGLU block.

In Fig~\ref{fig: ablation action model and scale} we compare our default 290k parameter model with a 2.7M parameter model using 4 blocks and 256 hidden dimension on both MuJoCo Playground suites. On the Humanoid suite, we additionally evaluation an extremely large 106M-parameter actor,  using ten blocks and a hidden dimension of 1024. 

As expected, just scaling the actor while keeping the critic fixed does not yield large downstream performance improvements but the ability to train these model sizes, keeping all other hyperparameters identical, at all shows a qualitative difference of our scalable algorithm.

\begin{figure*}[!htbp]
    \centering
    \small
    \legendModelArchitectures

    \vspace{0.35em}

    \begin{subfigure}{0.47\textwidth}
        \centering
        \includegraphics[width=\linewidth]{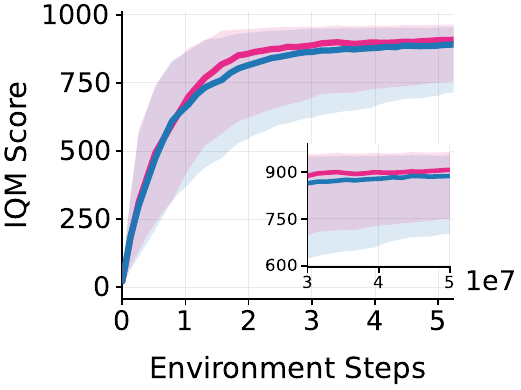}
        \caption{MJX DMC}
        \label{fig:ablations_model_scaling_dmc}
    \end{subfigure}
    \hfill
    \begin{subfigure}{0.47\textwidth}
        \centering
        \includegraphics[width=\linewidth]{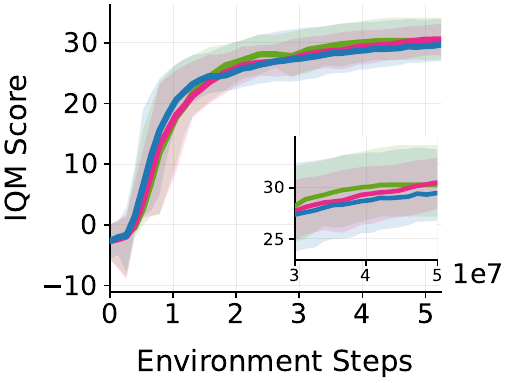}
        \caption{MJX Humanoid}
        \label{fig:ablations_model_scaling_humanoid}
    \end{subfigure}

    \caption{\textbf{Ablation results on scaling the actor model.}
    Aggregated results for the MuJoCo Playground DMC and Humanoid suites.}
    \label{fig: ablation action model and scale}
\end{figure*}

\clearpage

\subsection{MuJoCo Playground Per-Task Results}
\label{appendix: individual mjx}

The per-task DMC results in Fig.~\ref{fig:main_results_mjx_dmc_per_task} show that AMDP is competitive across most environments. Several tasks, including Cartpole, Reacher, and Walker variants, approach saturated perfromance across different methods. Differences are more pronounced on the harder locomotion tasks, such as AcrobotSwingup, FishSwim and CartpoleSwingupSparse, where AMDP performs strongly, whereas DIME and REPPO perform better on AcrobotSwingupSparse and the Hopper tasks. AMDP-BoN evaluation improves the performance in comparison to AMDP on several tasks while having little effect where returns are already saturated.

\begin{figure*}[!htbp]
    \centering
    \small
    \legendMainResultsAMDPBoN

    \vspace{0.35em}

    \includegraphics[width=0.24\textwidth]{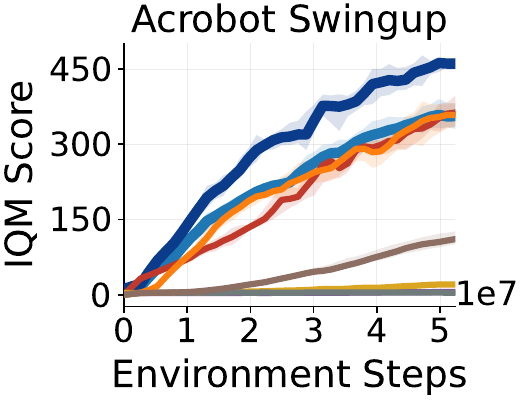}
    \includegraphics[width=0.24\textwidth]{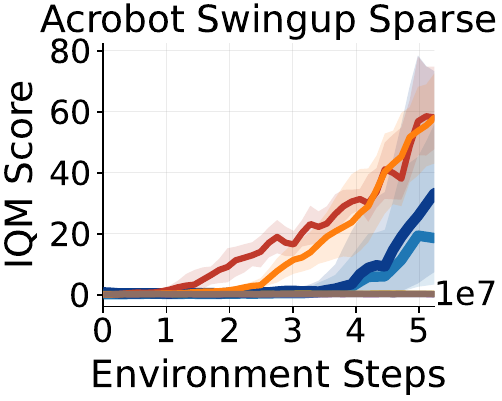}
    \includegraphics[width=0.24\textwidth]{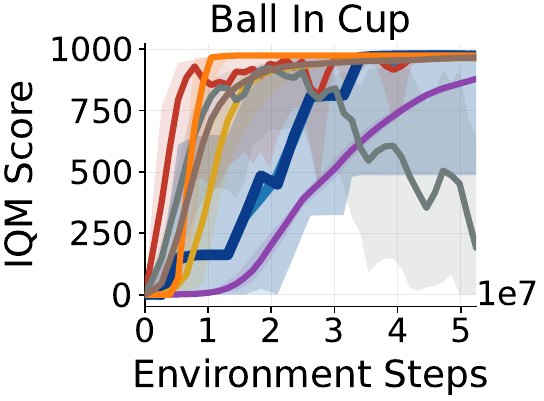}
    \includegraphics[width=0.24\textwidth]{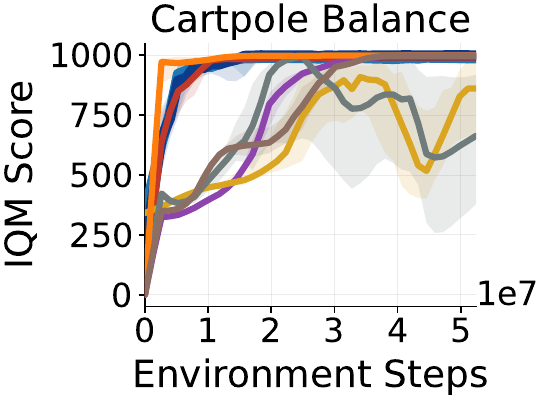}

    \vspace{0.1em}

    \includegraphics[width=0.24\textwidth]{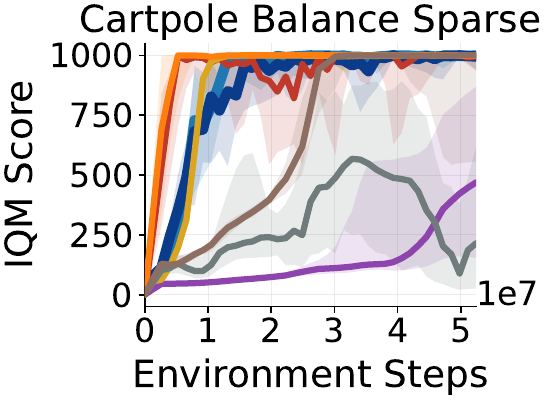}
    \includegraphics[width=0.24\textwidth]{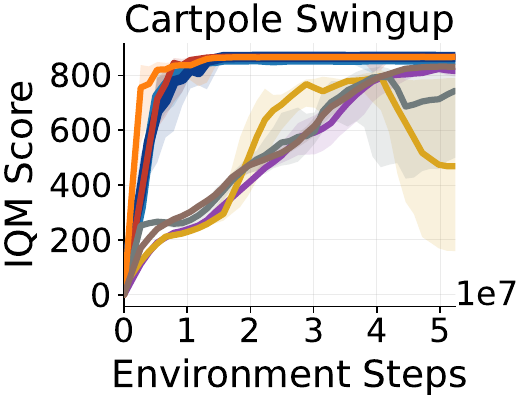}
    \includegraphics[width=0.24\textwidth]{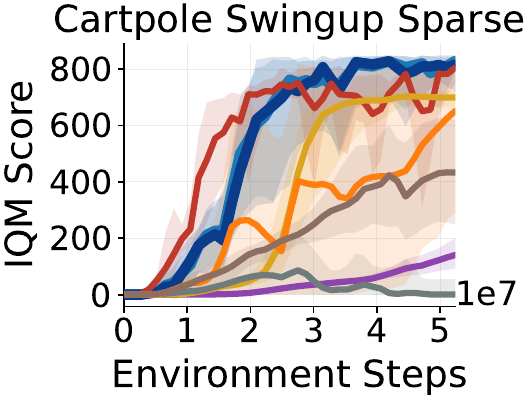}
    \includegraphics[width=0.24\textwidth]{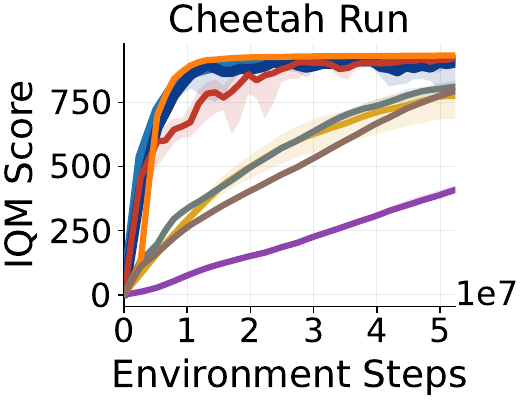}

    \vspace{0.1em}

    \includegraphics[width=0.24\textwidth]{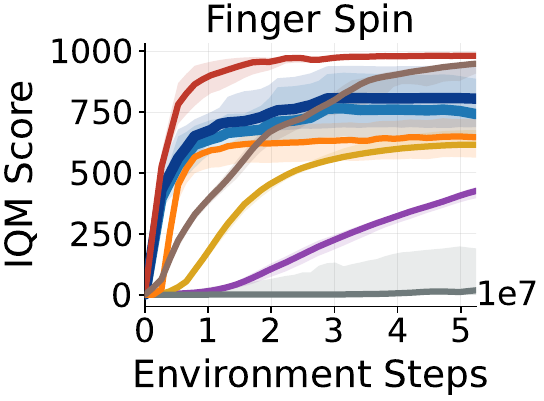}
    \includegraphics[width=0.24\textwidth]{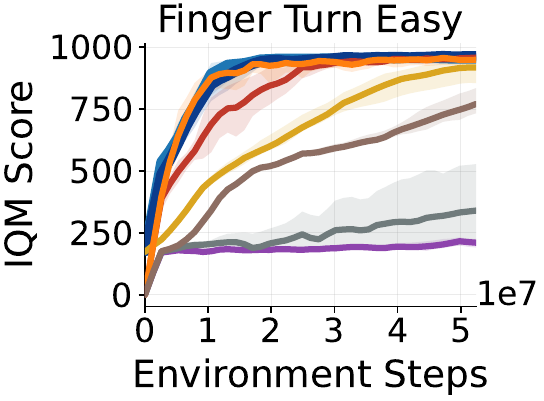}
    \includegraphics[width=0.24\textwidth]{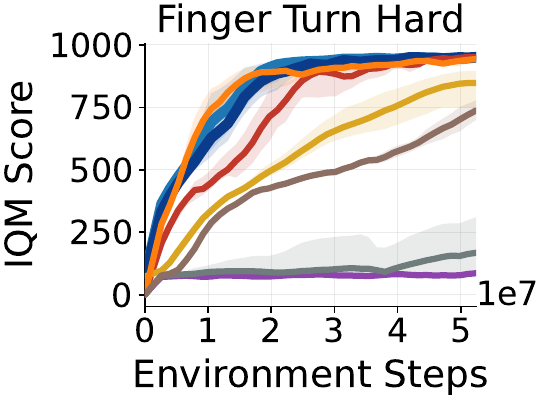}
    \includegraphics[width=0.24\textwidth]{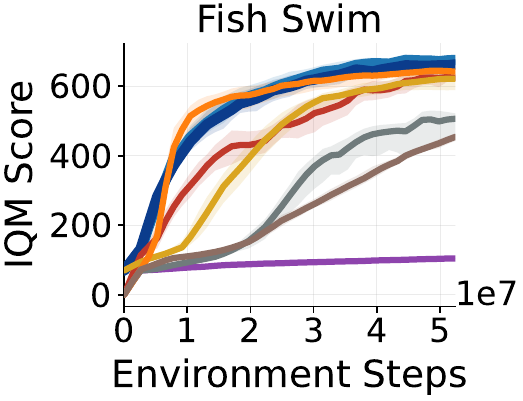}

    \vspace{0.1em}

    \includegraphics[width=0.24\textwidth]{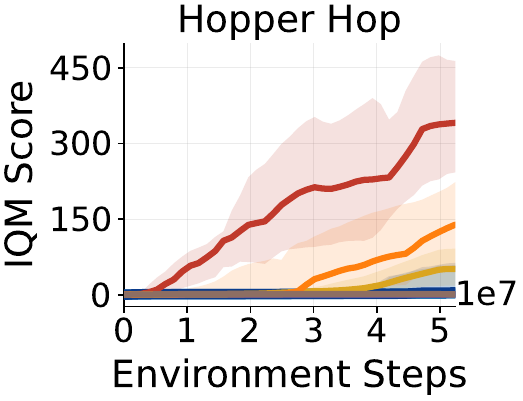}
    \includegraphics[width=0.24\textwidth]{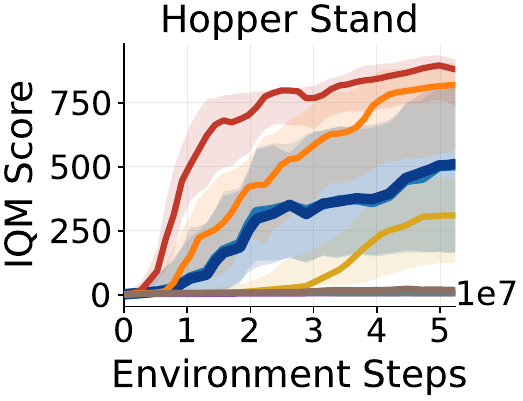}
    \includegraphics[width=0.24\textwidth]{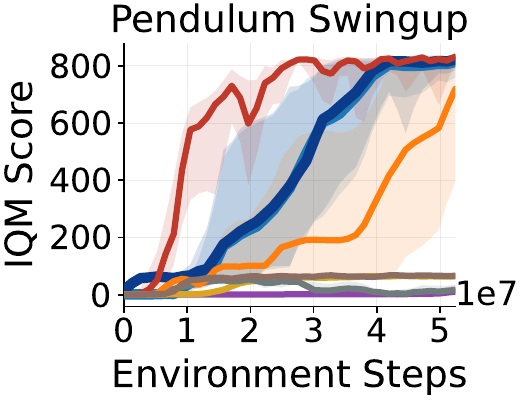}
    \includegraphics[width=0.24\textwidth]{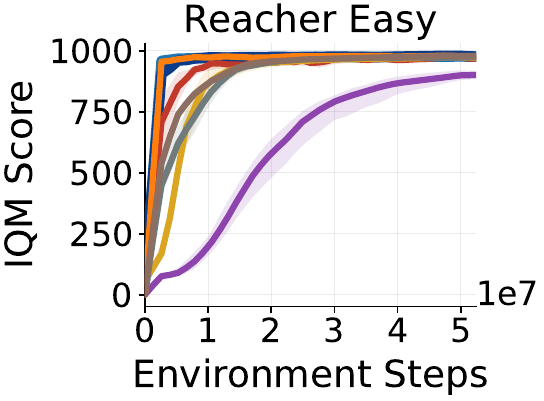}

    \vspace{0.1em}

    \includegraphics[width=0.24\textwidth]{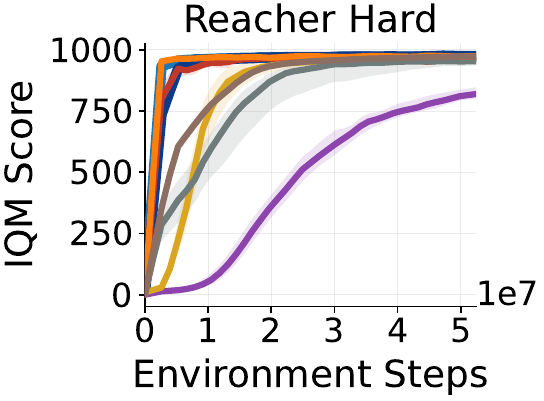}
    \includegraphics[width=0.24\textwidth]{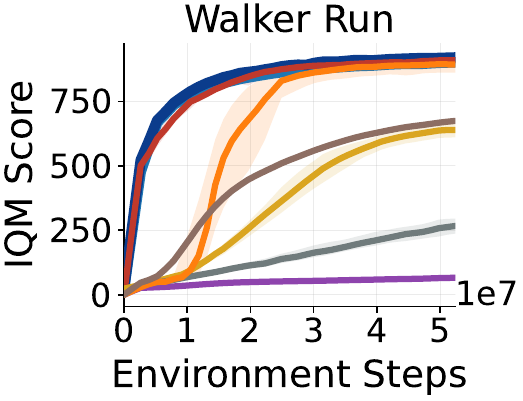}
    \includegraphics[width=0.24\textwidth]{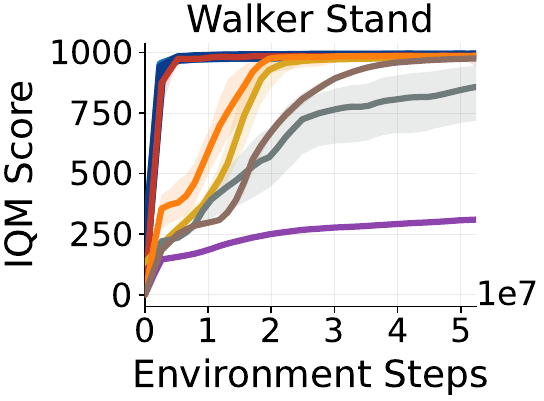}
    \includegraphics[width=0.24\textwidth]{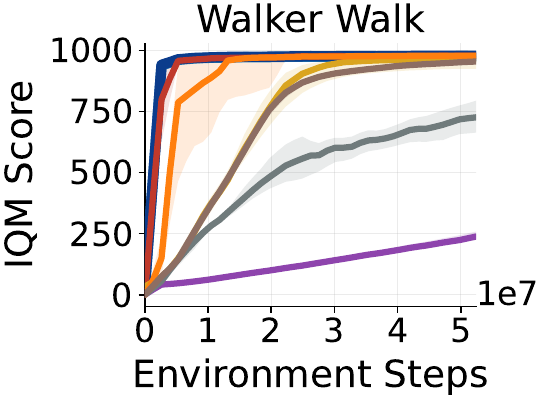}

    \caption{\textbf{MuJoCo Playground DMC per-task results.}
    Single-task IQM learning curves for the main comparison. Shaded regions show 95\% bootstrap confidence intervals.}
    \label{fig:main_results_mjx_dmc_per_task}
\end{figure*}

The Humanoid results in Fig.~\ref{fig:main_results_humanoid_bench_per_task} exhibit a consistent sample-efficiency advantage of AMDP across all tasks in comparison to all other algorithms, where AMDP-BoN evalution consistently improves its final performance. 

\begin{figure*}[!htbp]
    \centering
    \small
    \legendMainResultsAMDPBoN

    \vspace{0.35em}

    \includegraphics[width=0.24\textwidth]{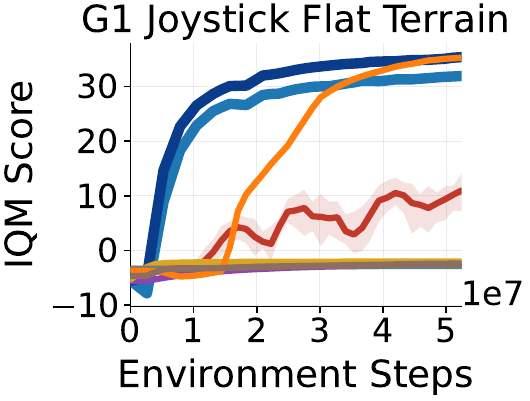}
    \includegraphics[width=0.24\textwidth]{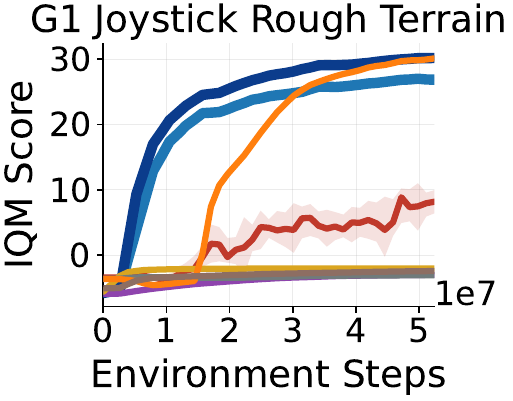}
    \includegraphics[width=0.24\textwidth]{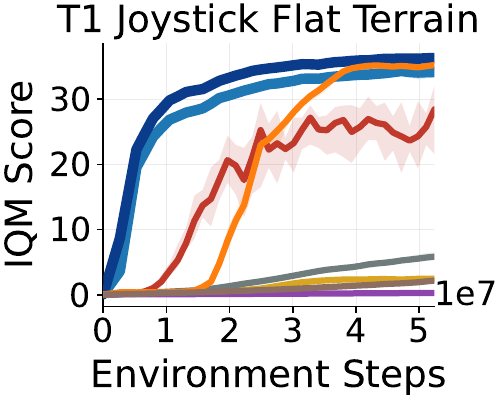}
    \includegraphics[width=0.24\textwidth]{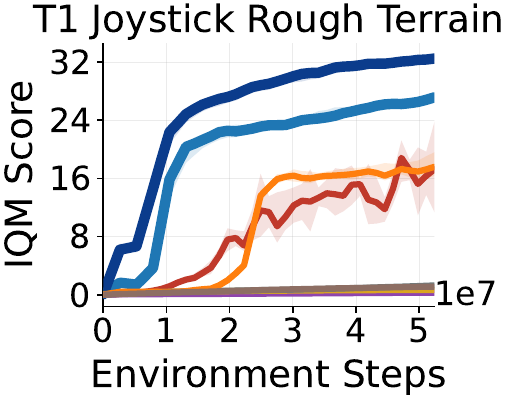}
    \caption{\textbf{MuJoCo Playground Humanoid per-task results.}
   Single-task IQM learning curves for the main comparison. Shaded regions show 95\% bootstrap confidence intervals.}
    \label{fig:main_results_mjx_humanoid_per_task}
\end{figure*}

\FloatBarrier

\subsection{ManiSkill3 Per-Task Results}
\label{appendix: individual maniskill}

The per-task results in Fig.~\ref{fig:main_results_maniskill3_per_task} support similar aggregate performance of AMDP, DIME, and REPPO. All three methods exhibit better sample-efficiency in comparison to all other methods on most of the reported tasks, including \texttt{LiftPegUpright}, \texttt{PegInsertionSide}, \texttt{PlaceSphere}, \texttt{PokeCube}, \texttt{RollBall}, and \texttt{StackCube}.

\begin{figure*}[!htbp]
    \centering
    \small
    \legendMainResultsAMDP

    \vspace{0.35em} 

    \includegraphics[width=0.24\textwidth]{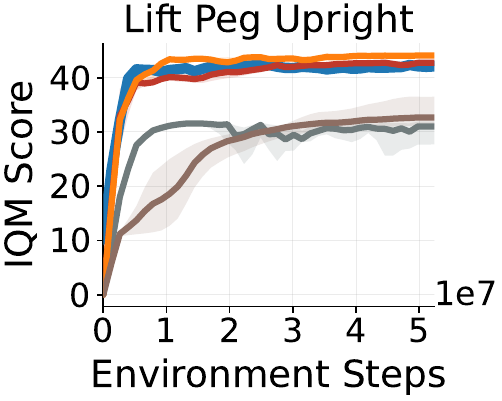}%
    \includegraphics[width=0.24\textwidth]{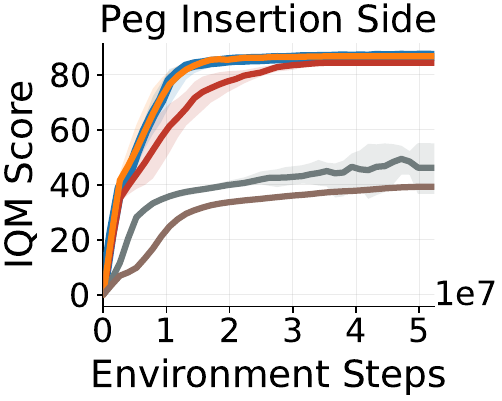}%
    \includegraphics[width=0.24\textwidth]{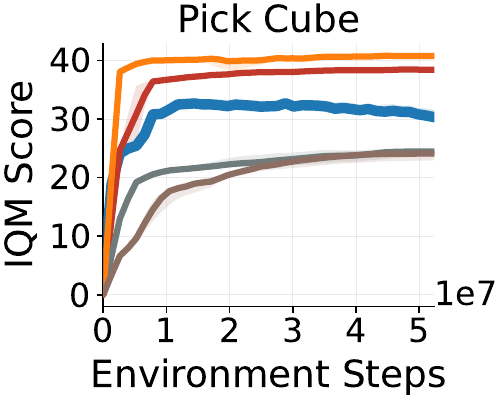}%
    \includegraphics[width=0.24\textwidth]{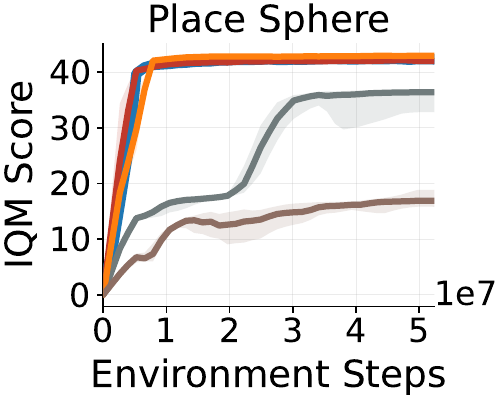}%

    \vspace{0.1em}
    
    \includegraphics[width=0.24\textwidth]{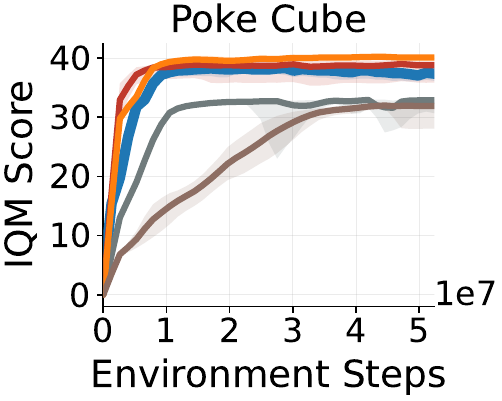}%
    \includegraphics[width=0.24\textwidth]{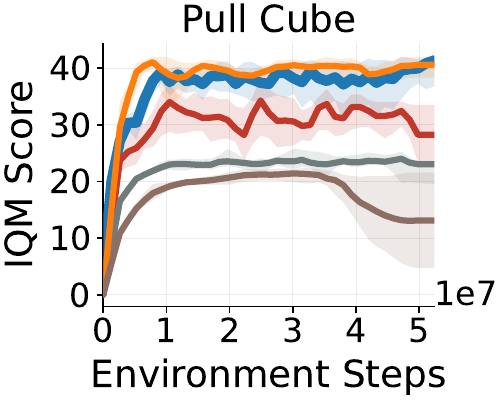}%
    \includegraphics[width=0.24\textwidth]{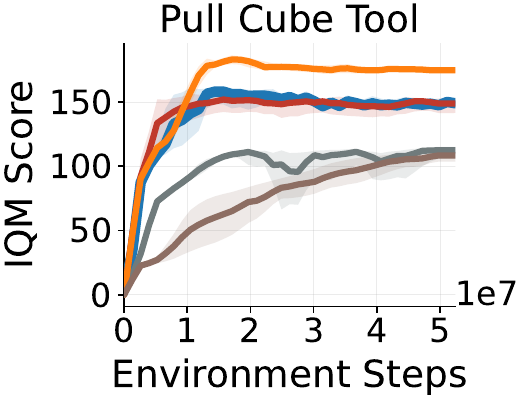}%
    \includegraphics[width=0.24\textwidth]{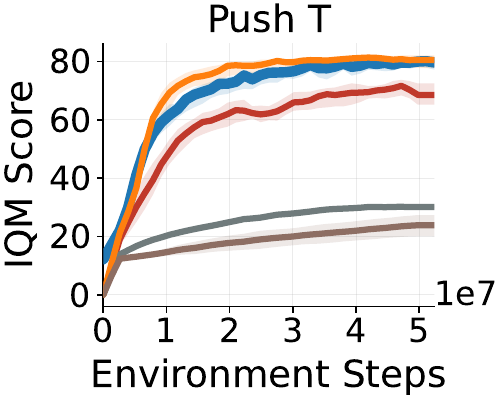}%

    \vspace{0.1em}

    \makebox[\textwidth][c]{%
        \makebox[0.96\textwidth][l]{%
            \includegraphics[width=0.24\textwidth]{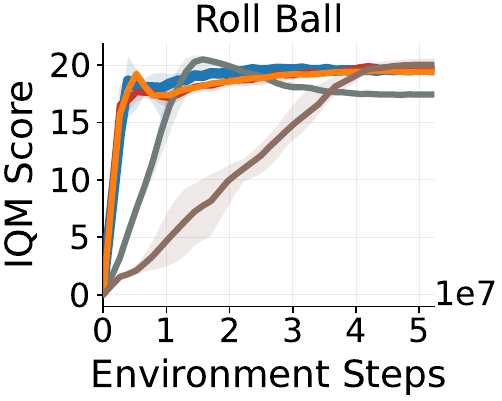}%
            \includegraphics[width=0.24\textwidth]{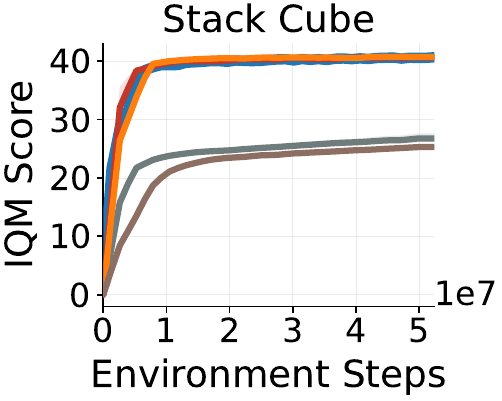}%
        }%
    }

    \caption{\textbf{ManiSkill3 per-task results.}
    Single-task IQM learning curves for the main comparison. Shaded regions show 95\% bootstrap confidence intervals.}
    \label{fig:main_results_maniskill3_per_task}
\end{figure*}

\clearpage

\subsection{HumanoidBench Per-Task Results}
\label{appendix: individual humanoidbench}

Figure~\ref{fig:main_results_humanoid_bench_per_task} show that AMDP clearly outperforms all baselines on several challenging tasks, including balance-simple, balance-hard, cabinet, cube, push, stair and window. These tasks combine high-dimensional control space of the humanoid embodiment with demanding balance, long-horizon reasoning, and in many cases, contact-rich manipulation. These results suggest that AMDP's expressive policy representation is particularly beneficial for complex whole-body control and contact-rich manipulation. 

\begin{figure*}[!htbp]
    \centering
    \small
    \legendMainResultsAMDP

    \vspace{0.35em}

    \includegraphics[width=0.235\textwidth]{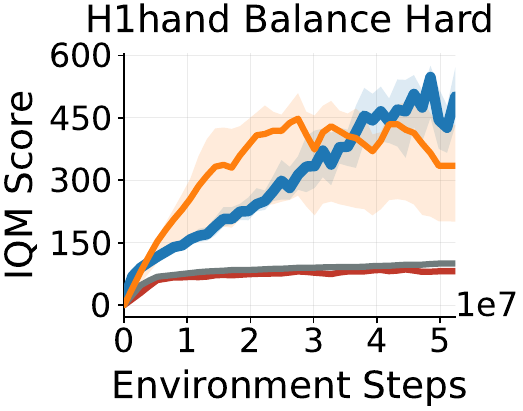}%
    \includegraphics[width=0.24\textwidth]{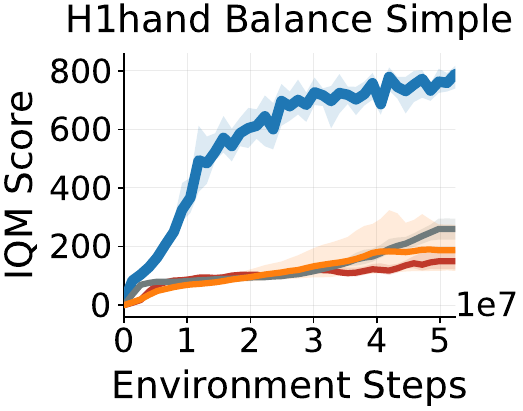}%
    \includegraphics[width=0.24\textwidth]{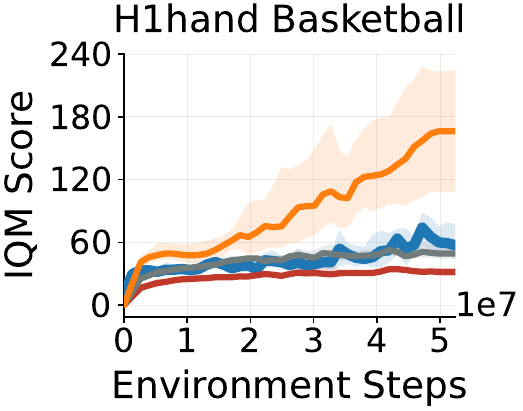}%
    \includegraphics[width=0.24\textwidth]{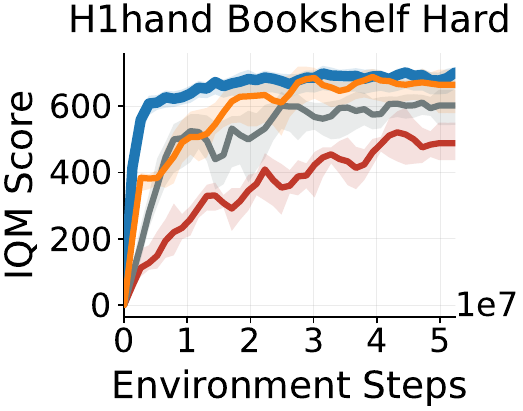}%

    \vspace{0.1em}

    \includegraphics[width=0.24\textwidth]{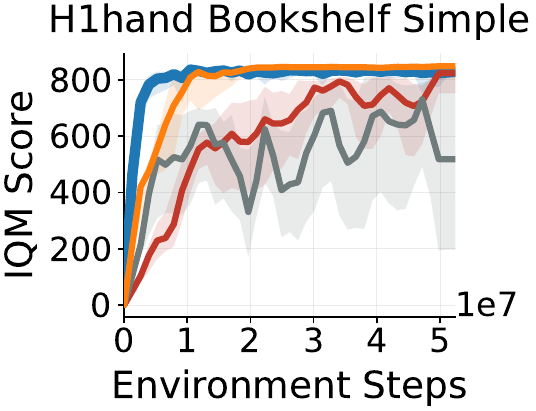}%
    \includegraphics[width=0.24\textwidth]{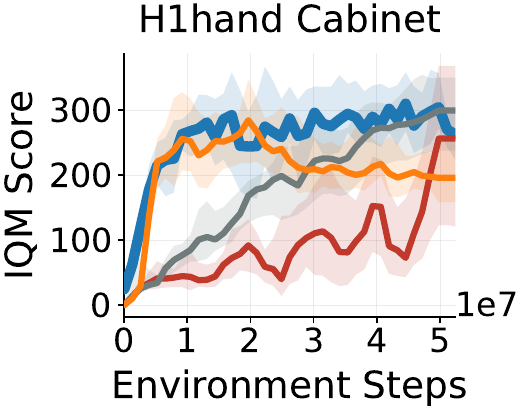}%
    \includegraphics[width=0.24\textwidth]{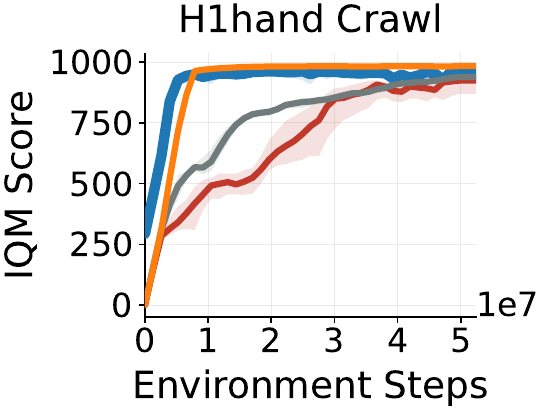}%
    \includegraphics[width=0.24\textwidth]{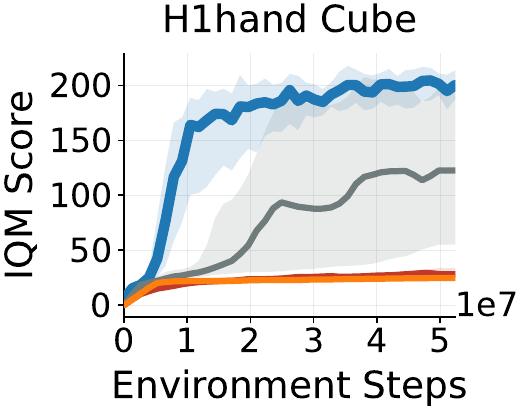}%

    \vspace{0.1em}

    \includegraphics[width=0.24\textwidth]{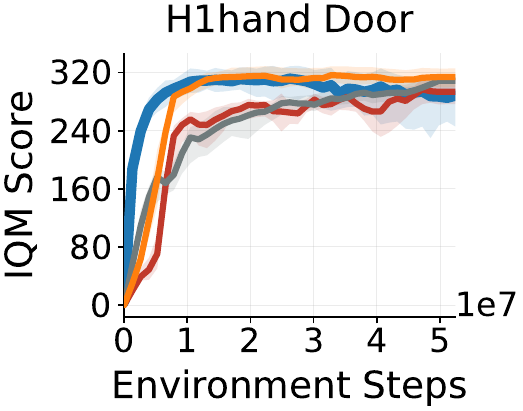}%
    \includegraphics[width=0.24\textwidth]{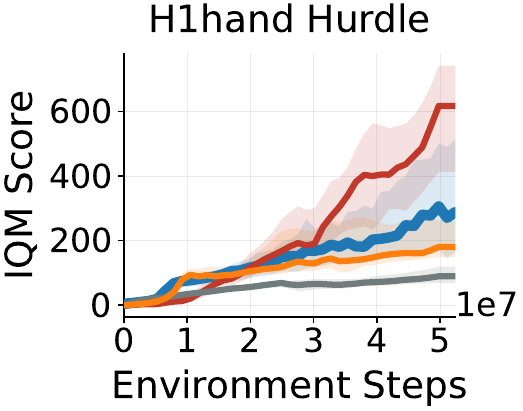}%
    \includegraphics[width=0.24\textwidth]{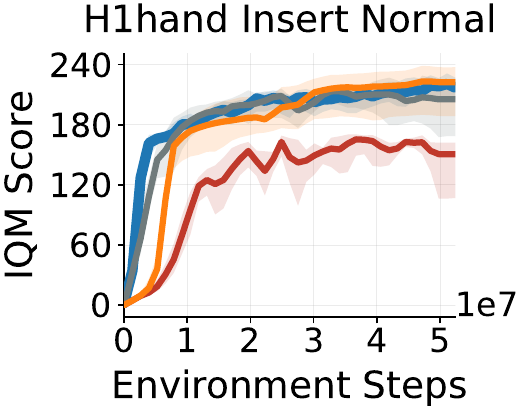}%
    \includegraphics[width=0.24\textwidth]{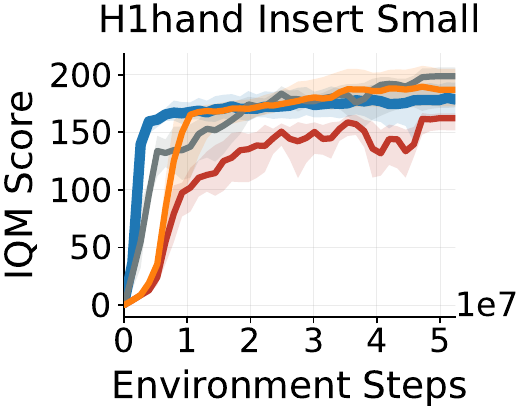}%

    \vspace{0.1em}

    \includegraphics[width=0.24\textwidth]{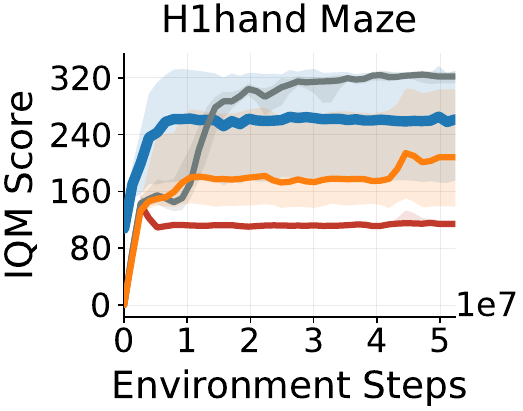}%
    \includegraphics[width=0.24\textwidth]{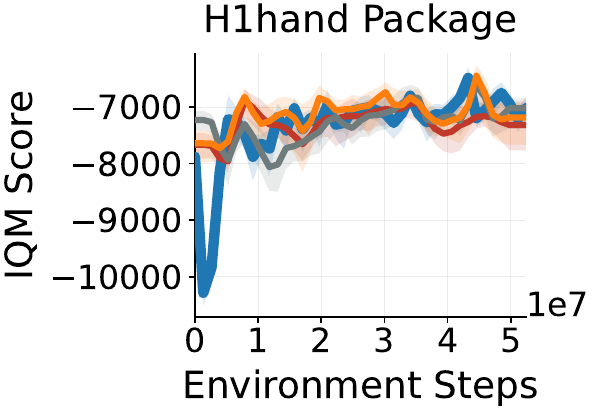}%
    \includegraphics[width=0.24\textwidth]{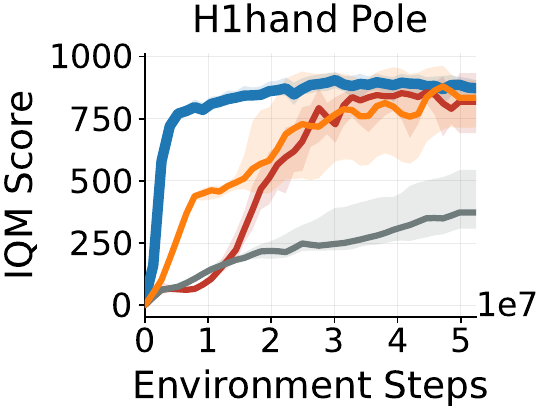}%
    \includegraphics[width=0.24\textwidth]{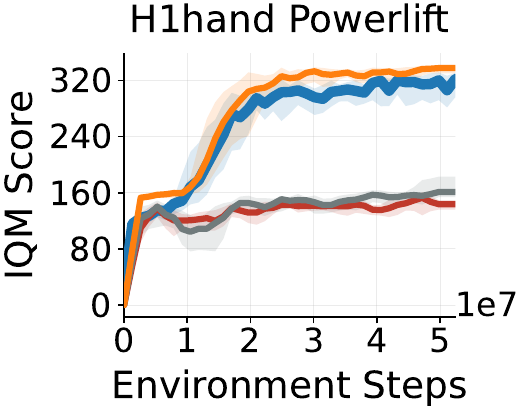}%

    \vspace{0.1em}

    \includegraphics[width=0.24\textwidth]{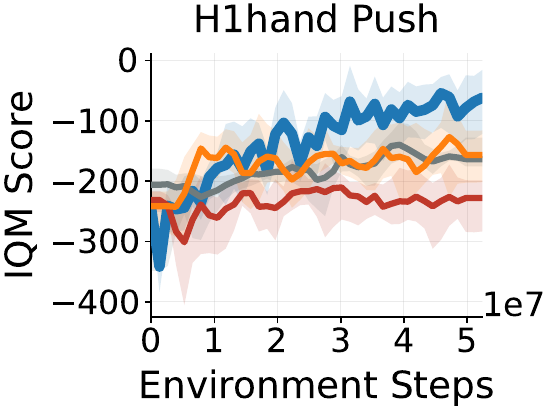}%
    \includegraphics[width=0.24\textwidth]{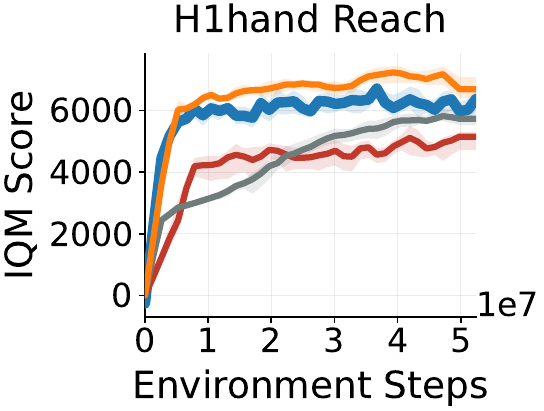}%
    \includegraphics[width=0.24\textwidth]{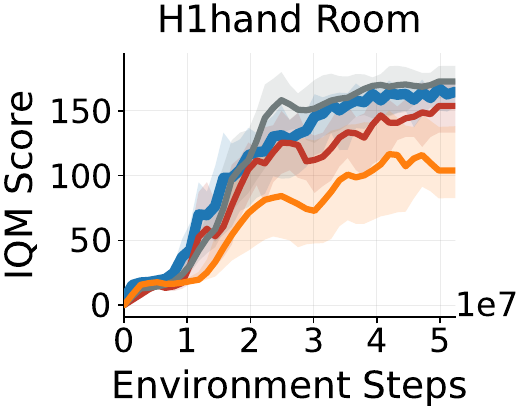}%
    \includegraphics[width=0.24\textwidth]{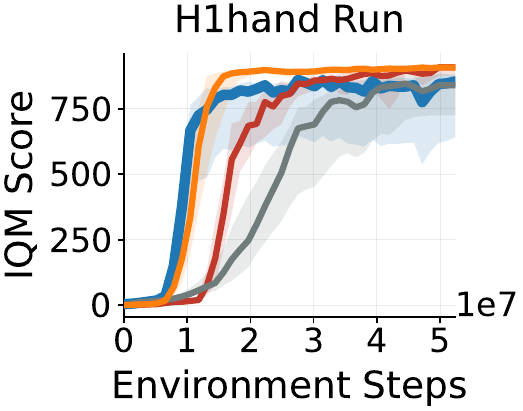}%

    \vspace{0.1em}

    \includegraphics[width=0.24\textwidth]{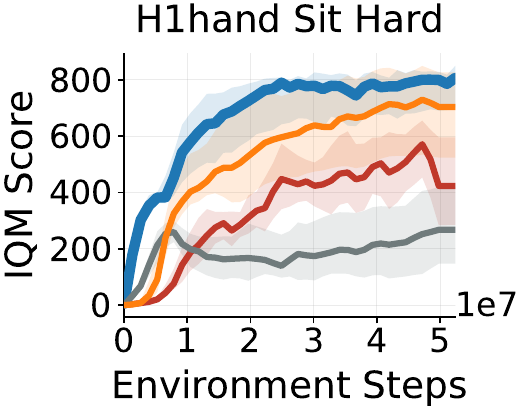}%
    \includegraphics[width=0.24\textwidth]{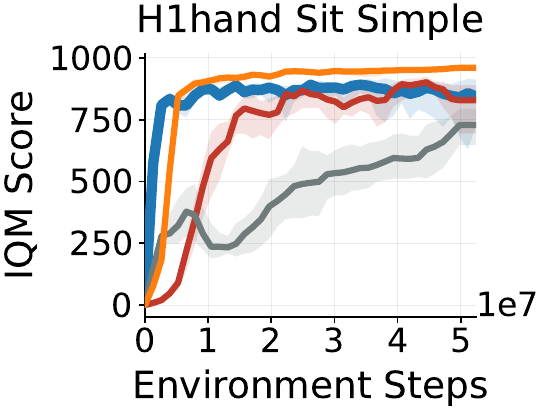}%
    \includegraphics[width=0.24\textwidth]{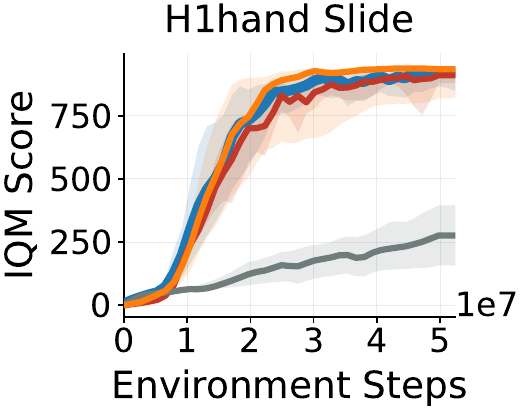}%
    \includegraphics[width=0.24\textwidth]{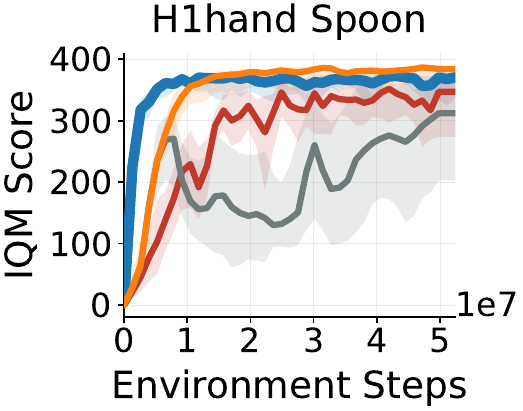}%

    \vspace{0.1em}

    \includegraphics[width=0.24\textwidth]{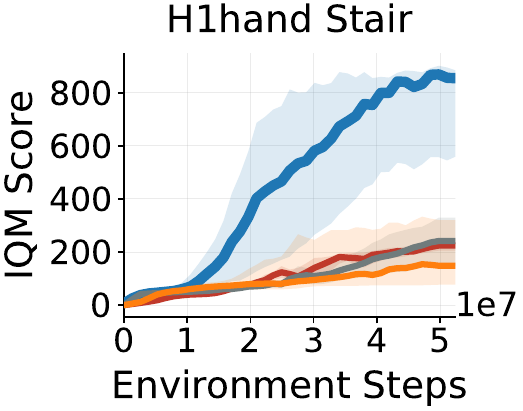}%
    \includegraphics[width=0.24\textwidth]{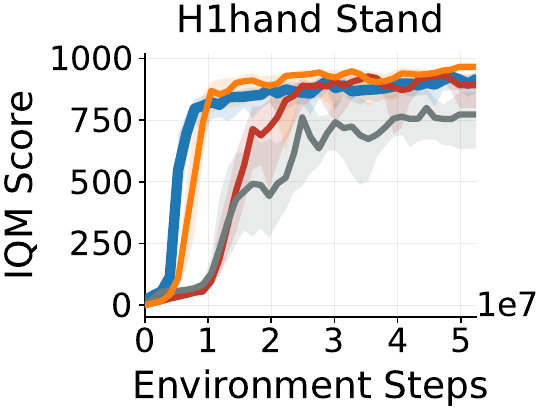}%
    \includegraphics[width=0.24\textwidth]{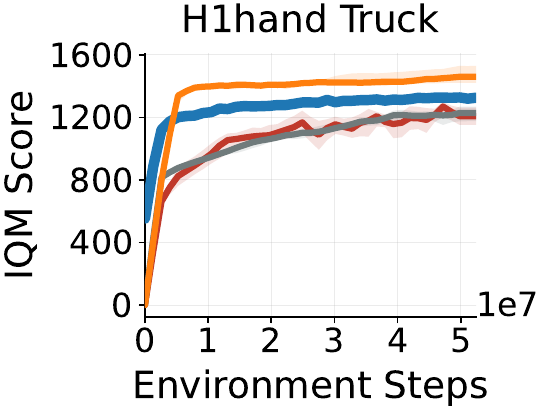}%
    \includegraphics[width=0.24\textwidth]{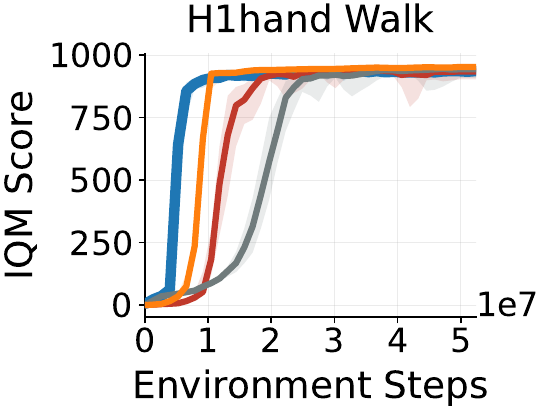}%

    \vspace{0.1em}

    \makebox[\textwidth][c]{%
        \makebox[0.96\textwidth][l]{%
            \includegraphics[width=0.24\textwidth]{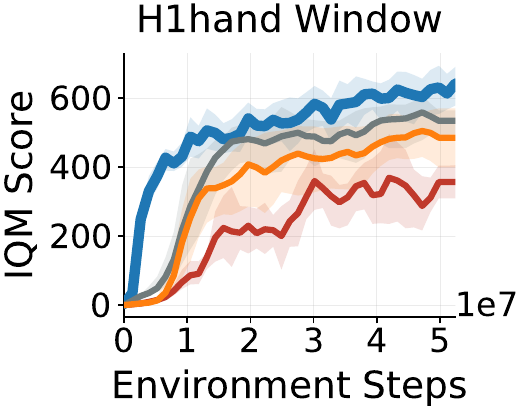}%
        }%
    }

    \caption{\textbf{HumanoidBench per-task results.}
    Single-task IQM learning curves for the main comparison. Shaded regions show 95\% bootstrap confidence intervals.}
    \label{fig:main_results_humanoid_bench_per_task}
\end{figure*}

\clearpage

\subsection{Off-Policy DMC Per-Task Results}
\label{appendix: individual offpolicy}

The individual curves in Fig.~\ref{fig:off_policy_per_task_results} indicate that AMDP and DIME are the strongest methods across all off-policy methods, where AMDP consistently matching DIME's performance across the dog and humanoid tasks. These results demonstrate that AMDP's adjoint-matching objective is competitive with DIME's reverse-KL objective in the off-policy setting.

\begin{figure*}[!htbp]
    \centering
    \small
    \legendOffPolicy

    \vspace{0.3cm}

    \includegraphics[width=0.24\textwidth]{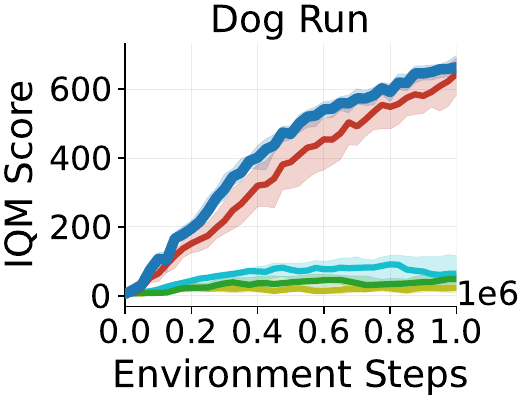}
    \includegraphics[width=0.24\textwidth]{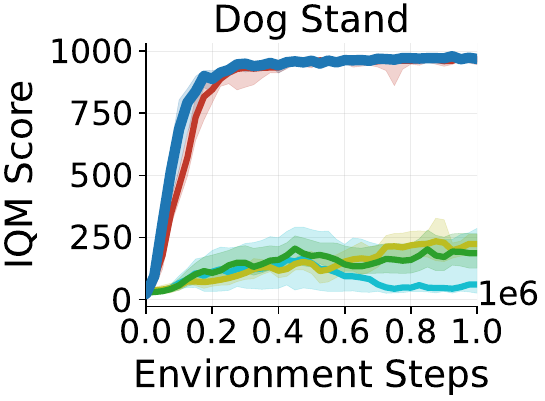}
    \includegraphics[width=0.24\textwidth]{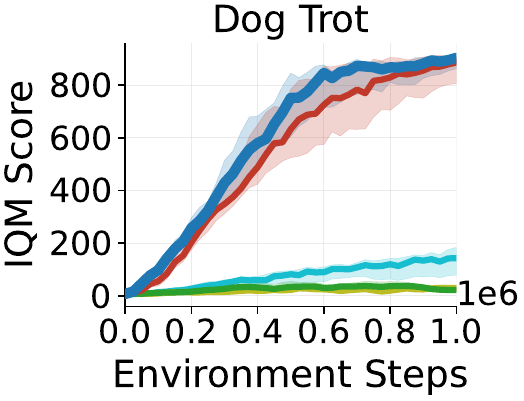}
    \includegraphics[width=0.24\textwidth]{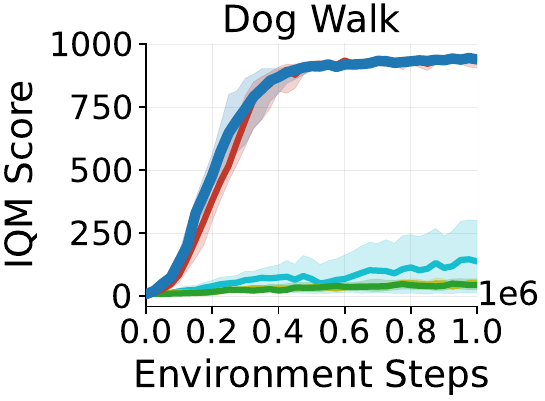}

    \vspace{0.3cm}

    \makebox[\textwidth][c]{%
        \makebox[0.98\textwidth][l]{%
            \includegraphics[width=0.24\textwidth]{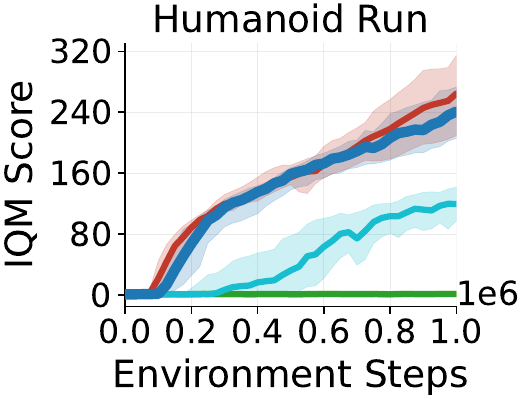}
            \includegraphics[width=0.24\textwidth]{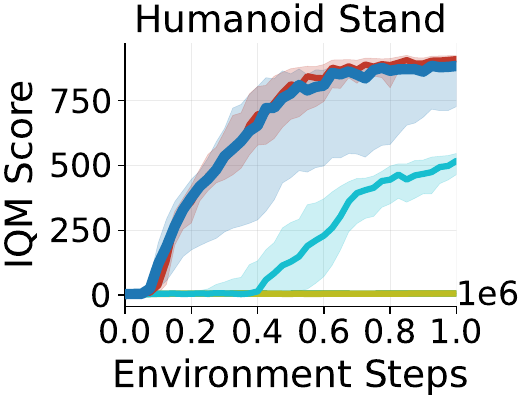}
            \includegraphics[width=0.24\textwidth]{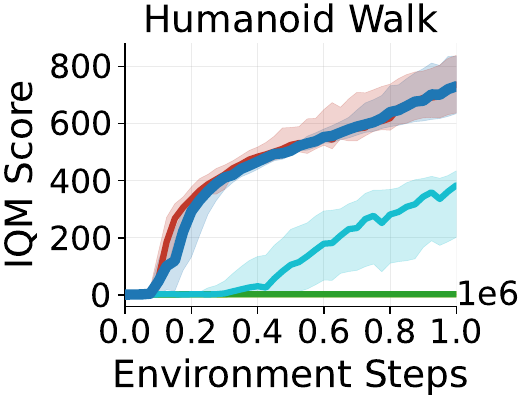}
        }%
    }

    \caption{\textbf{Off-policy per-task performance comparison.} Single-task IQM learning curves for the off-policy comparison. Shaded regions indicate uncertainty estimates across seeds.}
    \label{fig:off_policy_per_task_results}
\end{figure*}

\newpage
\clearpage
\afterpage{\clearpage}

\end{document}

%% file: figures/latex_plot_colors.tex
\definecolor{colAMDP}{HTML}{1F77B4}
\definecolor{colAMDPBoN}{HTML}{0B3C8C}
\definecolor{colREPPO}{HTML}{FF7F0E}
\definecolor{colDIME}{HTML}{C0392B}
\definecolor{colSPO}{HTML}{8D6E63}
\definecolor{colPPO}{HTML}{707B7C}
\definecolor{colDPPO}{HTML}{8E44AD}
\definecolor{colFPO}{HTML}{DAA520}

\definecolor{colQSM}{HTML}{17BECF}
\definecolor{colDiffQL}{HTML}{2CA02C}
\definecolor{colConsistencyAC}{HTML}{BCBD22}

\colorlet{colOffAMDP}{colAMDP}
\colorlet{colOffDIME}{colDIME}
\colorlet{colOffQSM}{colQSM}
\colorlet{colOffDiffQL}{colDiffQL}
\colorlet{colOffConsistencyAC}{colConsistencyAC}

\definecolor{colAbl01}{HTML}{332288}
\definecolor{colAbl02}{HTML}{88CCEE}
\definecolor{colAbl03}{HTML}{44AA99}
\definecolor{colAbl04}{HTML}{117733}
\definecolor{colAbl05}{HTML}{DDCC77}
\definecolor{colAbl06}{HTML}{CC6677}
\definecolor{colAbl07}{HTML}{AA4499}
\definecolor{colAbl08}{HTML}{882255}
\definecolor{colAbl09}{HTML}{661100}
\definecolor{colAbl10}{HTML}{6699CC}
\definecolor{colAbl11}{HTML}{AA4466}
\definecolor{colAbl12}{HTML}{999933}
\definecolor{colAbl13}{HTML}{1B9E77}
\definecolor{colAbl14}{HTML}{D95F02}
\definecolor{colAbl15}{HTML}{7570B3}
\definecolor{colAbl16}{HTML}{E7298A}
\definecolor{colAbl17}{HTML}{66A61E}
\definecolor{colAbl18}{HTML}{E6AB02}
\definecolor{colAbl19}{HTML}{A6761D}
\definecolor{colAbl20}{HTML}{666666}

\colorlet{colAblBlue}{colAbl01}
\colorlet{colAblOrange}{colAbl02}
\colorlet{colAblGreen}{colAbl03}
\colorlet{colAblRed}{colAbl04}
\colorlet{colAblPurple}{colAbl05}
\colorlet{colAblBrown}{colAbl06}
\colorlet{colAblPink}{colAbl07}
\colorlet{colAblGray}{colAbl08}
\colorlet{colAblOlive}{colAbl09}
\colorlet{colAblCyan}{colAbl10}

\colorlet{amdpblue}{colAMDP}
\colorlet{pisblue}{colAMDP}
\colorlet{amdpbonblue}{colAMDPBoN}
\colorlet{reppoorange}{colREPPO}
\colorlet{dimered}{colDIME}
\colorlet{spobrown}{colSPO}
\colorlet{ppogray}{colPPO}
\colorlet{dppopurple}{colDPPO}
\colorlet{fpogold}{colFPO}

\colorlet{offamdp}{colAMDP}
\colorlet{offdime}{colDIME}
\colorlet{offqsm}{colQSM}
\colorlet{offdiffql}{colDiffQL}
\colorlet{offcac}{colConsistencyAC}

\colorlet{steelblue31119180}{colAblBlue}
\colorlet{darkorange25512714}{colAblOrange}
\colorlet{forestgreen4416044}{colAblGreen}
\colorlet{crimson2143940}{colAblRed}
\colorlet{mediumpurple148103189}{colAblPurple}
\colorlet{saddlebrown1408675}{colAblBrown}
\colorlet{deeppink227119194}{colAblPink}
\colorlet{gray127127127}{colAblGray}
\colorlet{olive18818934}{colAblOlive}
\colorlet{darkcyan231901207}{colAblCyan}

\colorlet{steelblue48105152}{colAMDP}
\colorlet{firebrick1925743}{colDIME}
\colorlet{goldenrod}{colFPO}
\colorlet{gray14111099}{colSPO}
\colorlet{gray112123124}{colPPO}
\colorlet{darkorchid14268173}{colDPPO}

\colorlet{colPIS}{colAMDP}
\colorlet{colReppo}{colREPPO}
\colorlet{colDime}{colDIME}

\providecommand{\legendline}{}
\renewcommand{\legendline}[2]{%
  \tikz[baseline=-0.6ex]\draw[draw=#1, line width=2.2pt] (0,0) -- (0.75cm,0);%
  \hspace{0.3em}#2%
}

%% file: figures/latex_plot_legends.tex
\newcommand{\legendMainResultsAMDPBoN}{%
  \begin{tabular}{llll}
  \legendline{colAMDP}{AMDP (Ours)} &
  \legendline{colAMDPBoN}{AMDP BoN (Ours)} &
  \legendline{colDIME}{DIME} &
  \legendline{colDPPO}{DPPO} \\
  \legendline{colFPO}{FPO} &
  \legendline{colPPO}{PPO} &
  \legendline{colREPPO}{REPPO} &
  \legendline{colSPO}{SPO}
  \end{tabular}%
}

\newcommand{\legendMainResultsAMDP}{%
  \begin{tabular}{llll}
  \legendline{colAMDP}{AMDP (Ours)} &
  \legendline{colDIME}{DIME} &
  \legendline{colDPPO}{DPPO} &
  \legendline{colFPO}{FPO} \\
  \legendline{colPPO}{PPO} &
  \legendline{colREPPO}{REPPO} &
  \legendline{colSPO}{SPO}
  \end{tabular}%
}

\newcommand{\legendOffPolicy}{%
  \begin{tabular}{lll}
  \legendline{offamdp}{AMDP (Ours)} &
  \legendline{offdime}{DIME} &
  \legendline{offcac}{Consistency-AC} \\
  \legendline{offqsm}{QSM} &
  \legendline{offdiffql}{Diff-QL} &
  \end{tabular}%
}

\newcommand{\legendDiffusionStepsAndReverseKL}{%
  \begin{tabular}{llll}
  \legendline{colAMDP}{\texttt{AMDP} (default)} &
  \legendline{colAbl01}{4 steps} &
  \legendline{colAbl03}{32 steps} &
  \legendline{colAbl04}{128 steps} \\
  \legendline{colAbl05}{16 steps, 20 batch reps} &
  \legendline{colAbl10}{\texttt{AMDP-OU}} &
  \legendline{colAbl07}{Reverse-KL loss} &
  \legendline{colDIME}{DIME}
  \end{tabular}%
}

\newcommand{\legendTrustRegionSizeAndSquashing}{%
  \begin{tabular}{llll}
  \legendline{colAMDP}{AMDP (default)} &
  \legendline{colAbl11}{$\epsilon_{\mathrm{KL}} = 0.01$} &
  \legendline{colAbl12}{$\epsilon_{\mathrm{KL}} = 0.03$} &
  \legendline{colAbl13}{$\epsilon_{\mathrm{KL}} = 0.3$} \\ 
  \legendline{colAbl14}{$\epsilon_{\mathrm{KL}} = 1.0$} &
  \legendline{colDIME}{$\epsilon_{\mathrm{KL}} = \infty$} &
  \legendline{colAbl08}{tanh squash}
  {}
  \end{tabular}%
}

\newcommand{\legendModelArchitectures}{%
  \begin{tabular}{lll}
  \legendline{colAMDP}{default: 290k} &
  \legendline{colAbl16}{residual 2.7M} &
  \legendline{colAbl17}{residual 106M}
  \end{tabular}%
}

%% file: tables/appendix_mujoco_playground.tex
\begin{table*}[!htbp]
    \centering
    \caption{MuJoCo Playground evaluation tasks used in our experiments.
    Policy and critic observation dimensions are reported separately.
    DM Control tasks use identical observations for both networks, while
    humanoid tasks provide privileged observations only to the critic.}
    \label{tab:mujoco_playground_tasks}
    \small
    \setlength{\tabcolsep}{3pt}
    \renewcommand{\arraystretch}{1.05}
    \begin{tabularx}{\textwidth}{
        @{}p{0.12\textwidth}
        p{0.29\textwidth}
        cc
        c
        >{\raggedright\arraybackslash}X@{}
    }
        \toprule
        \textbf{Category}
        & \textbf{Task}
        & \textbf{Policy Obs.}
        & \textbf{Critic Obs.}
        & \textbf{Act.}
        & \textbf{Description} \\
        \midrule

        \multirow{20}{*}{\texttt{DM Control}}
        & \texttt{AcrobotSwingup} & 6 & 6 & 1
        & Swing up and balance the pole. \\
        & \texttt{AcrobotSwingupSparse} & 6 & 6 & 1
        & Sparse reward variant of \texttt{AcrobotSwingup}. \\

        & \texttt{BallInCup} & 8 & 8 & 2
        & Put the ball in the cup. \\

        & \texttt{CartpoleBalance} & 5 & 5 & 1
        & Balance the cartpole. \\
        & \texttt{CartpoleBalanceSparse} & 5 & 5 & 1
        & Sparse reward variant of \texttt{CartpoleBalance}. \\
        & \texttt{CartpoleSwingup} & 5 & 5 & 1
        & Swing up the cartpole. \\
        & \texttt{CartpoleSwingupSparse} & 5 & 5 & 1
        & Sparse reward variant of \texttt{CartpoleSwingup}. \\

        & \texttt{CheetahRun} & 17 & 17 & 6
        & Train a planar cheetah to run. \\

        & \texttt{FingerSpin} & 9 & 9 & 2
        & Spin the body. \\
        & \texttt{FingerTurnEasy} & 12 & 12 & 2
        & Turn the body to a target angle. \\
        & \texttt{FingerTurnHard} & 12 & 12 & 2
        & Harder target-angle variant of \texttt{FingerTurnEasy}. \\

        & \texttt{FishSwim} & 24 & 24 & 5
        & Swim with smooth reward. \\

        & \texttt{HopperStand} & 15 & 15 & 4
        & Balance an upright hopper. \\
        & \texttt{HopperHop} & 15 & 15 & 4
        & Hop forward. \\

        & \texttt{PendulumSwingup} & 3 & 3 & 1
        & Swing up and balance the pendulum. \\

        & \texttt{ReacherEasy} & 6 & 6 & 2
        & Reach a randomized target with relaxed tolerance. \\
        & \texttt{ReacherHard} & 6 & 6 & 2
        & Reach a randomized target with stricter tolerance. \\

        & \texttt{WalkerStand} & 24 & 24 & 6
        & Stand upright. \\
        & \texttt{WalkerWalk} & 24 & 24 & 6
        & Walk at a target horizontal velocity. \\
        & \texttt{WalkerRun} & 24 & 24 & 6
        & Run at a target horizontal velocity. \\
        \midrule

        \multirow{4}{*}{\texttt{Humanoid}}
        & \texttt{G1JoystickFlatTerrain} & 103 & 216 & 29
        & Track joystick commands on flat terrain. \\
        & \texttt{G1JoystickRoughTerrain} & 103 & 216 & 29
        & Track joystick commands on rough terrain. \\
        & \texttt{T1JoystickFlatTerrain} & 85 & 180 & 23
        & Track joystick commands on flat terrain. \\
        & \texttt{T1JoystickRoughTerrain} & 85 & 180 & 23
        & Track joystick commands on rough terrain. \\
        \bottomrule
    \end{tabularx}
\end{table*}

%% file: tables/appendix_maniskill.tex
\begin{table}[!htbp]
    \centering
    \caption{ManiSkill3 evaluation tasks used in our experiments.}
    \label{tab:maniskill3_tasks}
    \small
    \setlength{\tabcolsep}{4pt}
    \renewcommand{\arraystretch}{1.05}
    \begin{tabularx}{\linewidth}{@{}%
        >{\raggedright\arraybackslash}p{0.24\linewidth}
        >{\raggedright\arraybackslash}p{0.28\linewidth}
        >{\raggedright\arraybackslash}X@{}}
        \toprule
        \textbf{Category} & \textbf{Task} & \textbf{Description} \\
        \midrule

        \multirow{10}{*}{\texttt{Table-top 2F}}
        & \texttt{PegInsertionSide-v1}
        & Pick up a peg and insert it into a box with a hole. \\

        & \texttt{LiftPegUpright-v1}
        & Move a peg lying on the table to an upright position. \\

        & \texttt{PickCube-v1}
        & Grasp a cube and move it to a target goal position. \\

        & \texttt{PlaceSphere-v1}
        & Place a sphere into a shallow bin. \\

        & \texttt{PokeCube-v1}
        & Poke a cube with a peg and push it to the target. \\

        & \texttt{PullCube-v1}
        & Pull a cube onto a target region. \\

        & \texttt{PullCubeTool-v1}
        & Use an L-shaped tool to pull an out-of-reach cube. \\

        & \texttt{PushT-v1}
        & Push a T-shaped block into the target region. \\

        & \texttt{RollBall-v1}
        & Push and roll a ball to a goal region. \\

        & \texttt{StackCube-v1}
        & Stack one cube on top of another and release it. \\

        \bottomrule
    \end{tabularx}
\end{table}

%% file: tables/appendix_humanoidbench.tex
\begin{table*}[!htbp]
    \centering
    \caption{HumanoidBench evaluation environments used in our experiments.}
    \label{tab:humanoidbench_tasks}
    \small
    \setlength{\tabcolsep}{3pt}
    \renewcommand{\arraystretch}{1.03}
    \begin{tabularx}{\textwidth}{@{}%
        >{\raggedright\arraybackslash}p{0.19\textwidth}
        >{\raggedright\arraybackslash}p{0.31\textwidth}
        >{\raggedright\arraybackslash}X@{}}
        \toprule
        \textbf{Category} & \textbf{Environment} & \textbf{Description} \\
        \midrule

        \multirow{14}{*}{Locomotion}
        & \texttt{h1hand-stand-v0}
        & Maintain a standing pose. \\
        & \texttt{h1hand-walk-v0}
        & Keep forward velocity close to the walking target without falling. \\
        & \texttt{h1hand-run-v0}
        & Run forward at the target speed. \\
        & \texttt{h1hand-reach-v0}
        & Reach a randomly initialized 3D point with the left hand. \\
        & \texttt{h1hand-maze-v0}
        & Reach the goal position in a maze by taking multiple turns at intersections. \\
        & \texttt{h1hand-hurdle-v0}
        & Move forward while overcoming hurdles. \\
        & \texttt{h1hand-crawl-v0}
        & Move forward while passing through a tunnel. \\
        & \texttt{h1hand-sit\_simple-v0}
        & Sit onto a chair situated closely behind the robot. \\
        & \texttt{h1hand-sit\_hard-v0}
        & Sit onto a chair with randomized robot and chair configurations. \\
        & \texttt{h1hand-balance\_simple-v0}
        & Balance on an unstable board with a fixed spherical pivot. \\
        & \texttt{h1hand-balance\_hard-v0}
        & Balance on an unstable board with a moving spherical pivot. \\
        & \texttt{h1hand-stair-v0}
        & Traverse a repeated sequence of upward and downward stairs. \\
        & \texttt{h1hand-slide-v0}
        & Walk over a repeated sequence of upward and downward slides. \\
        & \texttt{h1hand-pole-v0}
        & Travel forward through dense, thin poles without colliding with them. \\

        \midrule

        \multirow{15}{=}{Whole-body manipulation}
        & \texttt{h1hand-cube-v0}
        & Manipulate two in-hand cubes to match a randomly initialized target orientation. \\
        & \texttt{h1hand-bookshelf\_simple-v0}
        & Relocate objects across shelves in a fixed order. \\
        & \texttt{h1hand-bookshelf\_hard-v0}
        & Relocate objects across shelves with randomized object orders and destinations. \\
        & \texttt{h1hand-window-v0}
        & Use a window wiping tool while keeping its tip parallel to the window. \\
        & \texttt{h1hand-spoon-v0}
        & Use a spoon to follow a circular pattern inside a pot. \\
        & \texttt{h1hand-door-v0}
        & Pull a door open using the doorknob and traverse through the doorway. \\
        & \texttt{h1hand-push-v0}
        & Move a box to a randomly initialized 3D point on a table. \\
        & \texttt{h1hand-basketball-v0}
        & Catch a ball coming from a random direction and throw it into the basket. \\
        & \texttt{h1hand-truck-v0}
        & Unload packages from a truck by moving them onto a platform. \\
        & \texttt{h1hand-package-v0}
        & Move a box to a randomly initialized target position. \\
        & \texttt{h1hand-cabinet-v0}
        & Open cabinet doors and manipulate objects inside the cabinet. \\
        & \texttt{h1hand-room-v0}
        & Organize scattered objects by reducing the variance of their planar locations. \\
        & \texttt{h1hand-insert\_normal-v0}
        & Insert the ends of a rectangular block into two small pegs. \\
        & \texttt{h1hand-insert\_small-v0}
        & Insert the ends of a smaller rectangular block into two small pegs. \\
        & \texttt{h1hand-powerlift-v0}
        & Lift a barbell of a designated mass. \\

        \bottomrule
    \end{tabularx}
\end{table*}